\def\eqref#1{equation~\ref{#1}}
\def\1{\bm{1}}
\DeclareMathAlphabet{\mathsfit}{\encodingdefault}{\sfdefault}{m}{sl}
\SetMathAlphabet{\mathsfit}{bold}{\encodingdefault}{\sfdefault}{bx}{n}
\theoremstyle{plain}
\newtheorem{theorem}{Theorem}[section]
\newtheorem{proposition}[theorem]{Proposition}
\newtheorem{lemma}[theorem]{Lemma}
\newtheorem{corollary}[theorem]{Corollary}
\theoremstyle{definition}
\newtheorem{definition}[theorem]{Definition}
\newtheorem{assumption}[theorem]{Assumption}
\theoremstyle{remark}
\newtheorem{remark}[theorem]{Remark}
\renewcommand{\cite}[1]{\citep{#1}}
\renewcommand*{\eqref}[1]{(\ref{#1})}
\newtheorem{theorem}{Theorem}[section]
\newtheorem{lemma}[theorem]{Lemma}
\newtheorem{assumption}[theorem]{Assumption}
\newtheorem{remark}[theorem]{Remark}
\newcommand{\NAM}{\textsc{UC-CURL}}
\newcommand{\psrl}{\textsc{PS-CURL}}
\title{Concave Utility Reinforcement Learning with Zero-Constraint Violations}
\author{\name Mridul Agarwal \email agarw180@purdue.edu \\
      \addr Purdue University
      \aAND
      \name Qinbo Bai \email bai113@purdue.edu \\
      \addr Purdue University
      \aAND
      \name Vaneet Aggarwal \email vaneet@purdue.edu\\
      \addr Purdue University}
\begin{document}

\maketitle

\begin{abstract}
We consider the problem of tabular infinite horizon concave utility reinforcement learning (CURL) with convex constraints.
For this, we propose a model-based learning algorithm that also achieves zero constraint violations. Assuming that the concave objective and the convex constraints have a solution interior to the set of feasible occupation measures, we solve a tighter optimization problem to ensure that the constraints are never violated despite the imprecise model knowledge and model stochasticity. We use Bellman error-based analysis for tabular infinite-horizon setups which allows analyzing stochastic policies. 
Combining the Bellman error-based analysis and tighter optimization equation, for $T$ interactions with the environment, we obtain a high-probability regret guarantee for objective which grows as $\Tilde{O}(1/\sqrt{T})$, excluding other factors. The proposed method can be applied for optimistic algorithms to obtain high-probability regret bounds and also be used for posterior sampling algorithms to obtain a loose Bayesian regret bounds but with significant improvement in computational complexity.
\end{abstract}

\section{Introduction}
In many applications where a learning agent uses reinforcement learning to find optimal policies, the agent optimizes a concave function of the expected rewards or the agent must satisfy certain constraints while maximizing an objective \citep{altman1991constrained,roijers2013survey}. For example, in network scheduling, a controller can maximize fairness of the users using a concave function of the average reward of each of the users \citep{chen2021bringing}. Consider a scheduler which allocates a resource to  users. Each user obtains some reward based on their current state. The goal of the scheduler is to maximize fairness among the users. However, there are certain preferred users for which some service level agreements (SLA) must be made. For this setup, the scheduler aims to find a policy which maximizes the fairness while ensuring the SLA constraints of the preferred users are met. Note that, here, the objective is a non-linear concave utility in the presence of constraints on service level agreement. Setups with constraints also exist in autonomous vehicles where the goal is to reach the destination quickly while ensuring the safety of the surroundings \citep{le2019batch,tessler2018reward}. 
Further, an agent may aim to efficiently explore the environment by maximizing the entropy, which is a concave function of the distribution induced over the state and action space \cite{hazan2019provably}.

Owing to the variety of the use cases, recently, there has been significant effort to make RL algorithms for setups with constraints, or concave utilties, or both. For episodic setup, works range from model based algorithms \citep{brantley2020constrained,yu2021morl} to primal-dual based model-free algorithms \citep{ding2021provably}. Recently, there has been a thrust towards developing algorithms which can also achieve zero-constraint violations in the learning phase as well \cite{wei2021provably,liu2021learning,bai2022achieving}. However, for the episodic setup, the majority of the current works consider the weaker regret definition specified by \citet{efroni2020exploration} and only achieve zero expected constraint violations. Further, these algorithms require the knowledge of a safe policy following which the agent does not violate constraints, or the knowledge of the Slater's gap $\delta$ which determines how far a safe policy is from the constraint boundary.



The definition which considers the average over time makes sense for an infinite horizon setup as the long-term average is naturally defined \citep{puterman2014markov}. For a tabular infinite-horizon setup, \citet{singh2020learning} proposed an optimistic epoch-based algorithm. Much recently, \citet{chen2022learning} proposed an Optimistic Online Mirror Descent based algorithm.
In this work, we consider the problem of maximizing concave utility of the expected rewards while also ensuring that a set of convex constraints of the expected rewards are also satisfied. Moreover, we aim to develop algorithms that can also ensure that the constraints are not violated during the training phase as well. We work with tabular MDP with infinite horizon. For such setup, our algorithm updates policies as it learns the system model. Further, our approach also bounds the accumulated observed constraint violations as compared to the expected constraint violations.

For infinite horizon setups for non-constrained setup, the regret analysis has been widely studied \cite{fruit2018efficient,jaksch2010near}. However, we note that the dealing with constraints and  non-linear setup requires additional attention because of the stochastic policies. Further, unlike episodic setup, the distribution at the epoch is not constant and hence the policy switching cost has to be accounted explicitly. Prior works in infinite horizon also faced this issue and provide some tools to overcome this limitation. \citet{singh2020learning} builds confidence intervals for transition probability for every next state given the current state-action pair and obtains a regret bound of $O(T^{2/3})$. \citet{chen2022learning} obtains a regret bound of $O(T_M\sqrt{T})$ with $O(T_M^2S^3A)$ constraint violations for ergodic MDPs with $T_M$ mixing time following an analysis which works with confidence intervals on both transition probability vectors and value functions. 

To overcome the limitations mentioned in previous analysis and to obtain a tighter result, we propose an optimism based \NAM\ algorithm which proceeds in epochs $e$. At each epoch, we solve for an policy which considers constraints tighter by $\epsilon_e$ than the true bounds for the optimistic MDP in the confidence intervals for the transition probabilities. Further, as the knowledge of the model improves with increased interactions with the environment, we reduce this tightness. This $\epsilon_e$-sequence is critical to our algorithm as, if the sequence decays too fast, the constraints violations cannot be bounded by zero. If this sequence decays too slow, the objective regret may not decay fast enough. Further, using the $\epsilon_e$-sequence, we do not require the knowledge of the total time $T$ for which the algorithm runs.

We bound our regret by bounding the gap between the optimal policy in the feasible region and the optimal policy for the optimization problem with $\epsilon_e$ tight constraints. We bound this gap with a multiplicative factor of $O(1/\delta)$, where $\delta$ is Slater's parameter. Based on our analysis using the Slater's parameter $\delta$, we consider a case where a lower bound $T_l$ on the time horizon $T$ is known. This knowledge of $T_l$ allows us to relax our assumption on $\delta$. 

Further, for the regret analysis of the proposed \NAM\ algorithm, we use Bellman error for infinite horizon setup to bound the difference between the performance of optimistic policy on the optimistic MDP and the true MDP. Compared to analysis of \citet{jaksch2010near}, this allows us to work with stochastic policies. We bound our regret as $\Tilde{O}(\frac{1}{\delta}LdT_MS\sqrt{A/T} + CT_MS^2A/T(1-\rho))$ and constraint violations as $0$, where $S$ and $A$ are the number of states and actions respectively, $L$ is the Lipschitz constant of the objective and constraint functions, $d$ is the number of costs the agent is trying to optimize, and $T_M$ is the mixing time of the MDP. The Bellman error based analysis along with Slater's slackness assumption also allows to develop posterior sampling based methods for constrained RL (see Appendix \ref{sec:psrl}) by showing feasibility of the optimization problem for the sampled MDPs.

To summarize our contributions, we improve prior results on infinite horizon concave utility reinforcement learning setup on multiple fronts. First, we consider convex function for objectives and constraints. Second, even with a non-linear function setup, we reduce the regret order to $O(T_MS\sqrt{A/T})$ and bound the constraint violations with $0$. Third, our algorithm does not require the knowledge of the time horizon $T$, safe policy, or Slater's gap $\delta$. Finally, we provide analysis for posterior sampling algorithm which improves both empirical performance and computational complexity.

\section{Related Works}


\noindent\textbf{Constrained RL:} \citet{altman1999constrained} builds the formulation for constrained MDPs to study constrained reinforcement learning and provides algorithms for obtaining policies with known transition models. 
\citet{zheng2020constrained} considered an episodic CMDP ({Constrained Markov Decision Processes}) and use an optimism based algorithm to bound the constraint violation as $\Tilde{O}(1/T^{0.25})$ with high probability. \citet{kalagarla2020sample} also considered the episodic setup to obtain PAC-style bound for an optimism based algorithm. \citet{ding2021provably} considered the setup of $H$-episode length episodic CMDPs with $d$-dimensional linear function approximation to bound the constraint violations as $\Tilde{O}(d\sqrt{H^5/T})$ by mixing the optimal policy with an exploration policy. \citet{efroni2020exploration} proposes a linear-programming and primal-dual policy optimization algorithm to bound the regret as $O(S\sqrt{H^3/T})$. \citet{wei2021provably, liu2021learning} considered the problem of ensuring zero constraint violations using a model-free algorithm for tabular MDPs with linear rewards and constraints. However, for infinite horizon setups, the analysis from finite horizon algorithms does not directly hold. This is because finite horizon setups can update the policy after every episode. But this policy switch modifies the induced Markov chains which takes time to converge to a stationary distribution. 
 
\citet{xu2020primal} considered an infinite horizon discounted setup with constraints and obtain global convergence using policy gradient algorithms. \citet{bai2022achieving} proposed a conservative
stochastic model-free primal-dual algorithm for infinite horizon discounted setup. \citet{ding2020natural,bai2023achieving} also considered an infinite horizon discounted setup with parametrization. They used a natural policy gradient to update the primal variable and sub-gradient descent to update the dual variable. In addition to the above results on discounted MDPs, the long-term rewards have also been considered.  \citet{singh2020learning} considered the setup of infinite-horizon ergodic CMDPs with long-term average constraints with an optimism based algorithm. \citet{gattami2021reinforcement} analyzed the asymptotic performance for Lagrangian based algorithms for infinite-horizon long-term average constraints, however they only show convergence guarantees without explicit convergence rates. \citet{chen2022learning} provided an optimistic online mirror descent algorithm for ergodic MDPs which obtain a regret bound of $O(T_MS\sqrt{SAT})$, and \citet{wei2022provably} provided a model free SARSA algorithm which obtains a regret bound of $O(\sqrt{SA}T^{5/6})$ for  constrained MDPs. \citet{agarwal2022regret} proposed a posterior sampling based algorithm for infinite horizon setup with a regret of $\Tilde{O}(T_MS\sqrt{AT})$ and constraint violation of $\Tilde{O}(T_MS\sqrt{AT})$. 


\begin{table*}[thbp]
    \centering
    \begin{tabular}{|c|c|c|c|c|}
    \hline
         Algorithm(s) & Setup & Regret & Constraint Violation & Non-Linear\\
    \hline
        \textsc{con}RL \cite{brantley2020constrained} & FH & $\Tilde{O}(LH^{5/2}S\sqrt{A/K})$& $O(H^{5/2}S\sqrt{A/K})$ & {Yes}\\
    \hline
        MOMA \cite{yu2021morl} & FH & $\Tilde{O}(LH^{3/2}\sqrt{SA/K})$& $\Tilde{O}(H^{3/2}\sqrt{SA/K})$ & {Yes}\\
    \hline    
        TripleQ \cite{wei2021provably} & FH & $\Tilde{O}(\frac{1}{\delta}H^4\sqrt{SA}K^{-1/5})$ & $0$ & {No}\\
    \hline
        OptPess-LP \cite{liu2021learning}&FH &$\Tilde{O}(\frac{H^3}{\delta}\sqrt{S^3A/K})$ & $0$ & No\\
    \hline
        OptPess-Primal Dual \cite{liu2021learning}&FH &$\Tilde{O}(\frac{H^3}{\delta}\sqrt{S^3A/K})$ & $\Tilde{O}(H^4S^2A/\delta)$ & No\\        
    \hline        
        UCRL-CMDP \cite{singh2020learning} & IH & $\Tilde{O}(\sqrt{SA}T^{-1/3})$  & $\Tilde{O}(\sqrt{SA}/T^{1/3})$& {No}\\
    \hline
        Chen et al. \cite{chen2022learning}&IH &$\Tilde{O}(\frac{1}{\delta}T_MS\sqrt{SA/T})$ & $\Tilde{O}(\frac{1}{\delta^2}T_M^2S^3A)$& No\\
    \hline
        Wei et al. \cite{wei2022provably}&IH &$\Tilde{O}(\frac{1}{\delta}\sqrt{SA}T^{-1/6})$ & $0$ & No\\
    \hline            
        Agarwal et al. \cite{agarwal2022regret}&IH &$\Tilde{O}(T_MS\sqrt{A/T})$  & $\Tilde{O}(T_MS\sqrt{A/T})$  & No\\
    \hline            
        \NAM\ (This work) & IH & $\Tilde{O}(\frac{1}{\delta}LT_MS\sqrt{A/T})$ & $0$ & \textbf{Yes}\\
    \hline            
    \end{tabular}
    \caption{Overview of work for constrained reinforcement learning setups. For finite horizon (FH) setups,  $H$ is the episode length and $K$ is the number of episodes for which the algorithm runs. For infinite horizon (IH) setups, $T_M$ denotes the mixing time of the MDP, and $T$ is the time for which algorithm runs.  $L$ is the Lipschitz constant. { We note that all the IH setups assume ergodic MDPs, where the FH setups do not require the ergodic assumption as the system resets to the final state after every episode.}}
    \label{tab:algo_comparisons}
\end{table*}

\noindent\textbf{Concave Utility RL:} Another major research area related to this work is concave utility RL \citep{hazan2019provably}. Along this direction, \citet{cheung2019regret} considered a concave function of expected per-step vector reward and developed an algorithm using Frank-Wolfe gradient of the concave function for tabular infinite horizon MDPs. \citet{agarwal2019reinforcement,agarwal2022multi} also considered the same setup using a posterior sampling based algorithm. Recently, \citet{brantley2020constrained} combined concave utility reinforcement learning and constrained reinforcement learning for an episodic setup. \citet{yu2021morl} also considered the case of episodic setup with concave utility RL. However, both \citep{brantley2020constrained} and \citep{yu2021morl} consider the weaker regret definition by \citet{efroni2020exploration}, and \citet{cheung2019regret,yu2021morl} do not target the convergence of the policy. Further, these works do not target zero-constraint violations. Recently, policy gradient based algorithms have also been studied for discounted infinite horizon setup \cite{bai2022joint}. 

Another parallel line of work in RL which deals with concave utilities is variational policy gradient \citep{zhang2021on,zhang2020variational}. However, they consider discounted MDPs whereas we consider undiscounted setup for our work.

Compared to prior works, we consider the constrained reinforcement learning with convex constraints and concave objective function. Using infinite-horizon setup, we consider the tightest possible regret definition. Further, we achieve zero constraint violations with objective regret tight in $T$ using an optimization problem with decaying tightness. A comparative survey of key prior works and our work is also presented in Table \ref{tab:algo_comparisons}.

\section{Problem Formulation}\label{sec:system_model}

We consider an ergodic tabular infinite-horizon constrained Markov Decision Process $\mathcal{M} = (\mathcal{S}, \mathcal{A}, r, f, c_1, \cdots, c_d, g, P, \phi)$. $\mathcal{S}$ is finite set of $S$ states, and $\mathcal{A}$ is a finite set of $A$ actions. $P:\mathcal{S}\times\mathcal{A}\to\Delta(\mathcal{S})$ denotes the transition probability distribution such that on taking action $a\in\mathcal{A}$ in state $s\in\mathcal{S}$, the system moves to state $s'\in\mathcal{S}$ with probability $P(s'|s,a)$. $r:\mathcal{S}\times\mathcal{A}\to[0,1]$ and $c_i:\mathcal{S}\times\mathcal{A}\to[0,1], i\in{1, \cdots, d}$ denotes the average reward obtained and average costs incurred in state action pair $(s,a)\in \mathcal{S}\times\mathcal{A}$, and  { $\phi$ is the distribution over the initial state.}

The agent interacts with $\mathcal{M}$ in time-steps $t\in{1, 2, \cdots}$ for a total of $T$ time-steps. We note that $T$ is possibly unknown {and $s_1\sim\phi$}. At each time $t$, the agent observes state $s_t$ and plays action $a_t$. The agent selects an action on observing the state $s$  using a policy $\pi:\mathcal{S}\to\Delta(\mathcal{A})$, where $\Delta(\mathcal{A})$ is the probability simplex on the action space. On following a policy $\pi$, the long-term average reward of the agent is denoted as:
\begin{align}
\lambda_{\pi}^P &= \lim_{\tau\to\infty}\mathbb{E}_{\pi,P}\left[\sum\nolimits_{t=1}^\tau r(s_t,a_t)/\tau\right]
\end{align}
where $\mathbb{E}_{\pi, P}[\cdot]$ denotes the expectation over the state and action trajectory generated from following $\pi$ on transitions $P$. The long-term average reward can also be represented as:
\begin{align}
    \lambda_{\pi}^P = \sum\nolimits_{s,a}\rho_{\pi}^P(s,a)r(s,a)&= \lim_{\gamma\to1}(1-\gamma)V_\gamma^{\pi, P}(s)\ \forall s\in\mathcal{S}\nonumber
\end{align}
where $V_\gamma^{\pi, P}(s)$ is the discounted cumulative reward on following policy $\pi$, and $\rho_{\pi}^{P}\in\Delta(\mathcal{S}\times\mathcal{A})$ is the steady-state occupancy measure generated from following policy $\pi$ on MDP with transitions $P$ \cite{puterman2014markov}. Similarly, we also define the long-term average costs as follows:
\begin{align}
    \zeta_{\pi}^P(i) &= \lim_{\tau\to\infty}\mathbb{E}_{\pi,P}\left[\sum\nolimits_{t=1}^\tau c_i(s_t,a_t)/\tau\right] = \lim_{\gamma\to1}(1-\gamma)V_\gamma^{\pi, P}(s;i)\ \ \ \forall s\in\mathcal{S}\nonumber\\
    &= \sum\nolimits_{s,a}\rho_{\pi}^P(s,a)c_i(s,a)
\end{align}

{
The agent interacts with the CMDP $\mathcal{M}$ for $T$ time-steps in an online manner and aims to maximize a function $f:[0,1]\to\mathbb{R}$ of the average per-step reward. Further, the agent attempts to ensure that a function of average per-step costs $g:[0,1]^d\to\mathbb{R}$ is at most $0$. In the hindsight, the agents wants to play a policy $\pi^*$ which which satisfies:
\begin{align}
    &\max_\pi f\left(\lambda_\pi^P\right)~~~s.t.~~~g\left(\zeta_{\pi}^P(1), \cdots, \zeta_{\pi}^P(d)\right) \le 0
\end{align}
}

Let $P^{t}_{\pi, s}{ = \prod_{t'=1}^t{P_\pi}}$ denote the $t$-step transition probability on following policy $\pi$ in MDP $\mathcal{M}$ starting from some state $s$ {where $P_\pi(\cdot|s) = \sum_a \pi(a|s)P(\cdot|s,a)$}. Let $T_{s\to s'}^\pi$ denote the time taken by the Markov chain induced by the policy $\pi$ to hit state $s'$ starting from state $s$. Further,  let $T_M := \max_\pi \mathbb{E}[T^\pi_{s\to s'}]$ be the mixing time of the MDP $\mathcal{M}$. We now introduce our assumptions on the MDP $\mathcal{M}$.

\begin{assumption}\label{ergodic_assumption}
For MDP $\mathcal{M}$, we have  $\|P^t_{\pi, s} - P_{\pi}\| \le C\rho^t$ with $P_\pi$ being the long-term steady state distribution induced by policy $\pi$, and $C > 0$ and $\rho < 1$ are problem specific constants. Additionally, the mixing time of the MDP $\mathcal{M}$ if finite or $T_M < \infty$. In other words, the MDP, $\mathcal{M}$, is ergodic.  
\end{assumption}

\begin{assumption} \label{known_rewards}
The rewards $r(s,a)$, the costs $c_i(s,a); \forall \ i$, and the functions $f$ and $g$ are known to the agent.
\end{assumption}

\begin{assumption}\label{concave_assumption}
The scalarization function $f$ is jointly concave and the constraints $g$ are jointly convex. Hence for any arbitrary distributions $\mathcal{D}_1$ and $\mathcal{D}_2$, the following holds.
\begin{align}
    f\left(\mathbb{E}_{x\sim\mathcal{D}_1}\left[x\right]\right) &\geq \mathbb{E}_{x\sim\mathcal{D}_1}\left[f\left(x\right)\right]\label{eq:concave_utility}\\
    g\left(\mathbb{E}_{\mathbf{x}\sim\mathcal{D}_2}\left[\mathbf{x}\right]\right) &\leq \mathbb{E}_{\mathbf{x}\sim\mathcal{D}_2}\left[g\left(\mathbf{x}\right)\right];\ \mathbf{x}\in\mathbb{R}^d\label{eq:convex_constraints}
\end{align}
\end{assumption}

\begin{assumption}\label{lipschitz_assumption}
The function $f$ and $g$ are assumed to be a $L-$ Lipschitz function, or
    \begin{align}
        \left|f\left(x\right) - f\left(y\right)\right| &\leq L|x - y|;\ x,y\in\mathbb{R} \label{eq:Lipschitz}\\
        \left|g\left(\mathbf{x}\right) - g\left(\mathbf{y}\right)\right| &\leq L\left\lVert \mathbf{x} - \mathbf{y}\right\rVert_1;\ \mathbf{x}, \mathbf{y}\in\mathbb{R}^d \label{eq:cost_Lipschitz}
    \end{align}
\end{assumption}

\begin{remark}
We consider a standard setup of concave and the Lipschitz function as considered by \citet{cheung2019regret,brantley2020constrained,yu2021morl}. Note that the  analysis in this paper directly works for $f:\mathbb{R}^K\to\mathbb{R}$, where the function takes as input $K$ average per-step rewards for $K$ objectives. 
\end{remark}

\begin{remark}
For non-Lipshitz continuous functions such as entropy, we can obtain maximum entropy exploration if choose function $f = -\sum_{k}\lambda_k\log(\lambda_k+\eta)$ with $r_k(s,a) = \bm{1}_{\{s_k, a_k\}}$ for a particular state action pair $s_k, a_k$ and choosing $K = S\times A$ to cover all state-action pairs and a regularizer $\eta$ \cite{hazan2019provably}.
\end{remark}

\begin{assumption} \label{slaters_conditon}
There exists a policy $\pi$, and one constant $\delta > LdST_M\sqrt{(A\log T)/T} + (CSA\log T)/(T(1-\rho))$ such that 
\begin{align}
    g\left(\zeta_{\pi}^P(1), \cdots, \zeta_{\pi}^P(d)\right) \le -\delta
\end{align}
\end{assumption}
This assumption is again a standard assumption in the constrained RL literature \citep{efroni2020exploration,ding2021provably,ding2020natural,wei2021provably}. $\delta$ is referred as Slater's constant. \citet{ding2021provably} assumes that the Slater's constant $\delta$ is known. \citet{wei2021provably} assumes that the number of iterations of the algorithm is at least $ \Tilde{\Omega}(SAH/\delta)^5$ for episode length $H$. On the contrary, we simply assume the existence of $\delta$ and a lower bound on the value of $\delta$ which gets relaxed as the agent acquires more time to interact with the environment.

Any online algorithm starting with no prior knowledge will need to obtain estimates of transition probabilities $P$ and obtain reward $r$ and costs $c_k, \forall\ k\in\{1,\cdots,d\}$, for each state action pair. Initially, when algorithm does not have good estimate of the model, it accumulates a regret as well as violates constraints as it does not know the  optimal policy.  We define reward regret $R(T)$ as the difference between the average cumulative reward obtained vs the expected rewards from running the optimal policy $\pi^*$ for $T$ steps, or
\begin{align}
    R(T)& = f\left(\lambda_{\pi^*}^P\right) - f\left(\sum\nolimits_{t=1}^Tr(s_t, a_t)/T\right) \nonumber
\end{align}
Additionally, we define constraint regret $C(T)$ as the gap between the constraint function and incurred and constraint bounds, or
\begin{align}
    C(T)& = \left(g\left(\sum\nolimits_{t=1}^Tc_1(s_t, a_t)/T, \cdots, \sum\nolimits_{t=1}^Tc_d(s_t, a_t)/T\right)\right)_+ \text{, where} (x)_+ = \max(0, x)\nonumber 
\end{align}


In the following section, we present a model-based algorithm to obtain this policy $\pi^*$, and reward regret and the constraint regret accumulated by the algorithm.

\section{Algorithm}
\noindent We now present our algorithm \NAM\ and the key ideas used in designing the algorithm. Note that if the agent is aware of the true transition $P$, it can solve the following optimization problem for the optimal feasible policy.
\begin{align}
        \max_{\rho(s,a)} f\big(\sum\nolimits_{s,a}r(s, a)\rho(s,a)\big) \label{eq:optimization_equation}
\end{align}
with the following set of constraints,
\begin{align}
    &\sum\nolimits_{s,a} \rho(s, a) = 1,\ \  \rho(s, a) \geq 0 \label{eq:valid_prob_constrant}\\
    &\sum\nolimits_{a\in\mathcal{A}}\rho(s',a) = \sum\nolimits_{s,a}P(s'|s, a)\rho(s, a)\label{eq:transition_constraint}\\
    &g\big(\sum\nolimits_{s,a}c_1(s, a)\rho(s,a), \cdots, \sum\nolimits_{s,a}c_d(s, a)\rho(s,a)\big) \leq 0\label{eq:cmdp_constraints}
\end{align}
for all $ s'\in\mathcal{S},~\forall~s\in\mathcal{S},$ and $\forall~a\in\mathcal{A}$.
Equation \eqref{eq:transition_constraint} denotes the constraint on the transition structure for the underlying Markov Process. Equation \eqref{eq:valid_prob_constrant} ensures that the solution is a valid probability distribution. Finally, Equation \eqref{eq:cmdp_constraints} are the constraints for the constrained MDP setup which the policy must satisfy. Using the solution for $\rho$, we can obtain the optimal policy as:
\begin{align}
    \pi^*(a|s) = \frac{\rho(s,a)}{\sum_{b\in\mathcal{A}}\rho(s,b)} \forall\ s,a
\end{align}

However, the agent does not have the knowledge of $P$ to solve this optimization problem, and thus starts learning the transitions with an arbitrary policy. We first note that if the agent does not have complete knowledge of the transition $P$ of the true MDP $\mathcal{M}$, it should be conservative in its policy to allow room to violate constraints. Based on this idea, we formulate the $\epsilon$-tight optimization problem by modifying the constraint in Equation \eqref{eq:cmdp_constraints} as.
\begin{align}
     g\big(\sum\nolimits_{s,a}c_1(s,a)\rho_\epsilon(s,a), \cdots, \sum\nolimits_{s,a}c_d(s,a)\rho_\epsilon(s,a)\big) \leq -\epsilon \label{eq:eps_cmdp_constraints}
\end{align}
Let $\rho_\epsilon$ be the solution of the $\epsilon$-tight optimization problem, then the optimal conservative policy becomes:
\begin{align}
    \pi_\epsilon^*(a|s) = \frac{\rho_\epsilon(s,a)}{\sum\nolimits_{b\in\mathcal{A}}\rho_\epsilon(s,b)} \forall\ s,a
\end{align}
We are now ready to design our algorithm \NAM\ which is based on the optimism principle  \citep{jaksch2010near}. The \NAM\ algorithm is presented in Algorithm \ref{alg:algorithm}. The algorithm proceeds in epochs $e$. The algorithm maintains three key variables $\nu_e(s,a)$, $N_e(s,a)$, and $\hat{P}(s,a,s')$ for all $s,a$. $\nu_e(s,a)$ stores the number of times state-action pair $(s,a)$ are visited in epoch $e$. $N_e(s,a)$ stores the number of times $(s,a)$ are visited till the start of epoch $e$.
$\hat{P}(s,a,s')$ stores the number of times the system transitions to state $s'$ after taking action $a$ in state $s$. Another key parameter of the algorithm is $\epsilon_e = K\sqrt{(\log t_e)/t_e}$ where $t_e$ is the start time of the epoch $e$ and $K$ is a configurable constant. Using these variables, the agent solves for the optimal $\epsilon_e$-conservative policy for the optimistic MDP by replacing the constraints in Equation \eqref{eq:transition_constraint} by:
\begin{align}
    &\sum\nolimits_{a\in\mathcal{A}}\rho(s',a) \le \sum\nolimits_{s,a}\Tilde{P}_e(s'|s, a)\rho(s, a)\label{eq:opt_eps_transition_constraint}\\
    & \Tilde{P}_e(s'|s, a) > 0, \sum\nolimits_{s'}\Tilde{P}_e(s'|s, a)=1\label{eq:boundary}\\
    &\|\Tilde{P}_e(\cdot|s,a) - \frac{\hat{P}(s,a,\cdot)}{1\vee N_e(s,a)}\|_1 \le \sqrt{\frac{14S\log(2At)}{1\vee N_e(s,a)}}\label{eq:opt_transition_probability}  
\end{align}
for all $ s'\in\mathcal{S},\forall s\in\mathcal{S},$ and $\forall a\in\mathcal{A}$ and $x \vee y = \max(x,y)$. Equation \eqref{eq:opt_transition_probability} ensures that the agent searches for optimistic policy in the confidence intervals of the transition probability estimates. 

Combining the right hand side of \eqref{eq:opt_eps_transition_constraint} with \eqref{eq:valid_prob_constrant} gives
\begin{align}
\sum\nolimits_{s'} \sum\nolimits_{s,a}\Tilde{P}_e(s'|s, a)\rho(s, a) = 1 \nonumber =  \sum\nolimits_{s',a} \rho(s', a) 
\end{align}
Thus, joint with \eqref{eq:opt_eps_transition_constraint}, we see that equality in \eqref{eq:opt_eps_transition_constraint} will be satisfied at the boundary as $\sum_{a}\rho(s',a)$ for some $s'$ can never exceed the boundary to compensate for another $s'$ and hence, for all $s'$, $\sum_a\rho(s', a)$ will lie on the boundary. In other words, the above constraints give $\sum\nolimits_{a\in\mathcal{A}}\rho(s',a) = \sum\nolimits_{s,a}\Tilde{P}_e(s'|s, a)\rho(s, a)$. Further, we note that the region for the constraints is convex. This is because the set $\{x,y,z:xy\ge z\}$ is convex when $x, y, z \ge 0$. 
We note that even though the optimization problem may look non-convex due to constraints having product of two variables, we see Equations \eqref{eq:optimization_equation}, \eqref{eq:eps_cmdp_constraints}, and \eqref{eq:opt_eps_transition_constraint}-\eqref{eq:opt_transition_probability} form a convex optimization problem. We expand more on this in Appendix \ref{sec:convex_opt}. We note that \citep{rosenberg2019online} provide another approach to obtain a convex optimization problem for optimistic MDP. 

\begin{algorithm}[tb]
\caption{\NAM}
\label{alg:algorithm}
\textbf{Parameters}: $K$\\
\textbf{Input}: $S$, $A$, $r$, $d$, $c_i\ \forall\ i\in[d]$
\begin{algorithmic}[1] 
\STATE Let $t=1$, $e = 1, \epsilon_e = K\sqrt{\frac{\ln t}{t}}$
\FOR{$(s,a) \in \mathcal{S}\times\mathcal{A}$}
    \STATE $\nu_e(s,a) = 0, N_e(s,a) = 0, \widehat{P}(s',a,s) = 0\forall\ s'\in\mathcal{S}$
\ENDFOR
\STATE Solve for policy $\pi_e$ using Eq. \eqref{eq:opt_cons_optimal_policy}
\FOR{$t \in \{1, 2, \cdots\}$}
    \STATE Observe $s_t$, and play $a_t\sim\pi_e(\cdot|s_t)$
    \STATE Observe $s_{t+1}$, $r(s_t,a_t)$ and $c_i(s_t, a_t)\ \forall\ i\in[d]$
    \STATE $\nu_e(s_t, a_t) = \nu_e(s_t, a_t) + 1$
    \STATE $\widehat{P}(s_t, a_t, s_{t+1}) = \widehat{P}(s_t, a_t, s_{t+1}) + 1$
    \IF {$\nu_e(s,a) = \max\{1, N_e(s,a)\}$ for any $s,a$}
        \FOR{$(s,a) \in \mathcal{S}\times\mathcal{A}$}
            \STATE $N_{e+1}(s,a) = N_e(s,a) + \nu_e(s,a)$
            \STATE $e = e+1$, $\nu_e(s,a) = 0$
        \ENDFOR
        \STATE $\epsilon_e = K\sqrt{\frac{\ln t}{t}}$
        \STATE Solve for policy $\pi_e$ using Eq. \eqref{eq:opt_cons_optimal_policy}
    \ENDIF
\ENDFOR
\end{algorithmic}
\end{algorithm}


Let $\rho_e$ be the solution for $\epsilon_e$-tight optimization equation for the optimistic MDP. Then, we obtain the optimal conservative policy for epoch $e$ as:
\begin{align}
    \pi_e(a|s) = \frac{\rho_e(s,a)}{\sum_{b\in\mathcal{A}}\rho_e(s,b)} \forall\ s,a \label{eq:opt_cons_optimal_policy}
\end{align}
The agent plays the optimistic conservative policy $\pi_e$ for  epoch $e$. Note that the conservative parameter $\epsilon_e$ decays with time. As the agent interacts with the environment, the system model improves and the agent does not need to be as conservative as before. This allows us to bound both constraint violations and the objective regret. Further, if during the initial iterations of the algorithms a conservative solution is not feasible, we can ignore the constraints completely. We will show that the conservation behavior is required when $t = \Theta(T)$ to compensate for the violations in the initial period of the algorithm \ref{sec:conservative_effect}.

For the \NAM\ algorithm described in Algorithm \ref{alg:algorithm}, we choose $\{\epsilon_e\} = \{K\sqrt{(\log t_e)/t_e}\}$. 
However, if the agent has access to a lower bound $T_l$ (Assumption \ref{slaters_conditon}) on the time horizon $T$, the algorithm can change the $\epsilon_e =K\sqrt{(\ln (t_e \vee T_{l}))/(t_e\vee T_{l})} \le \delta$ in each epoch $e$ as follows.
Note that if $T_l = 0$, $\epsilon_e$ becomes as specified in Algorithm \ref{alg:algorithm} and if $T_l = T$, $\epsilon_e$ becomes constant for all epochs $e$.

\section{Regret Analysis}
After describing \NAM\ algorithm, we now perform the regret and constraint violation analysis. 
 We note that the standard analysis for infinite horizon tabular MDPs of UCRL2 \cite{jaksch2010near} cannot be directly applied as the policy $\pi_e$ is possibly stochastic for every epoch. Another peculiar aspect of the analysis of the infinite horizon MDPs is that the regret grows linearly with the number of epochs (or policy switches). This is because a new policy induces a new Markov chain and this chain  take time to converge to the stationary distribution. The analysis still bounds the regret by $\Tilde{O}(T_MS\sqrt{A/T})$ as the number of epochs are bounded by $O(SA\log T)$. 

Before diving into the details, we first define few important variables which are key to our analysis. The first variable is the standard $Q$-value function. We define $Q_\gamma^{\pi, P}$ as the long term expected reward on taking action $a$ in state $s$ and then following policy $\pi$ for the MDP with transition $P$. Mathematically, we have
\begin{align}
Q_\gamma^{\pi, P}(s,a) = r(s,a) + \gamma\sum\nolimits_{s'\in\mathcal{S}}P(s'|s,a)V_\gamma^{\pi,P}(s'); V_\gamma^{\pi,P}(s) = \mathbb{E}_{a\sim\pi}\left[Q_\gamma^{\pi, P}(s,a)\right]\nonumber
\end{align}
We also define Bellman error $B^{\pi, \Tilde{P}}(s,a)$ for the infinite horizon MDPs as the difference between the cumulative expected rewards obtained for deviating from the system model with transition $\Tilde{P}$ for one step by taking action $a$ in state $s$ and then following policy $\pi$. We have:
\begin{align}
    B^{\pi, \Tilde{P}}(s,a)&= \lim_{\gamma\to 1}\Big(Q_\gamma^{\pi, \Tilde{P}}(s,a) - r(s,a) -  \gamma\sum\nolimits_{s'\in\mathcal{S}}P(s'|s,a)V_\gamma^{\pi, \Tilde{P}}(s,a)\Big) \label{eq:Bellman_error_definition}
\end{align}
After defining the key variables, we can now jump into bounding the objective regret $R(T)$. Intuitively, the algorithm incurs regret on three accounts. First source is following the conservative policy which we require to limit the constraint violations. Second source of regret is solving for the policy which is optimal for the optimistic MDP. Third source of regret is the stochastic behavior of the system. We also note that the constraints are violated because of the imperfect MDP knowledge and the stochastic behavior. However, the conservative behavior actually allows us to violate the constraints within some limits which we will discuss in the later part of this section.

We start by stating our first lemma which bounds the regret due to solving for a conservative policy. We define $\epsilon_e$-tight optimization problem as optimization problem for the true MDP with transitions $P$ with $\epsilon=\epsilon_e$. We bound the gap between the value of function $f$ at the long-term expected reward of the policy for $\epsilon_e$-tight optimization problem and the true optimization problem (Equation \eqref{eq:optimization_equation}-\eqref{eq:cmdp_constraints}) in the following lemma.

\begin{lemma}\label{lem:optimality_gap_objective}
Let $\lambda_{\pi^*}^P$ be the long-term average reward following the optimal feasible policy $\pi^*$ for the true MDP $\mathcal{M}$ and let $\lambda_{\pi_e}^P$ be the long-term average rewards following the optimal policy $\pi_e$ for the $\epsilon_e$ tight optimization problem for the true MDP $\mathcal{M}$, then for $\epsilon_e \le \delta$, we have,
\begin{align}
    &f\left(\lambda_{\pi^*}^P\right) -f\left(\lambda_{\pi_e}^P\right) \le 2L\epsilon_e/\delta
\end{align}
\end{lemma}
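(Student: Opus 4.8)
The plan is to exploit Slater's condition (Assumption \ref{slaters_conditon}) through a mixing argument carried out in the space of occupancy measures. Let $\rho^*$ denote the occupancy measure of the optimal feasible policy $\pi^*$, and let $\bar\rho$ denote the occupancy measure of the Slater policy $\bar\pi$ guaranteed by Assumption \ref{slaters_conditon}, so that $g\big(\zeta_{\bar\pi}^P(1),\dots,\zeta_{\bar\pi}^P(d)\big) \le -\delta$. The key structural fact I would use is that the set of occupancy measures satisfying the flow constraint \eqref{eq:transition_constraint} together with the simplex constraint \eqref{eq:valid_prob_constrant} is a convex polytope, and that both $\lambda_\pi^P = \sum_{s,a} r(s,a)\rho(s,a)$ and each $\zeta_\pi^P(i) = \sum_{s,a} c_i(s,a)\rho(s,a)$ are linear functionals of $\rho$. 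Consequently, for any $\kappa\in[0,1]$ the convex combination $\rho' = (1-\kappa)\rho^* + \kappa\bar\rho$ is itself a valid occupancy measure of some stationary policy $\pi'$, with $\lambda_{\pi'}^P = (1-\kappa)\lambda_{\pi^*}^P + \kappa\lambda_{\bar\pi}^P$ and $\zeta_{\pi'}^P(i) = (1-\kappa)\zeta_{\pi^*}^P(i) + \kappa\zeta_{\bar\pi}^P(i)$.

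Next I would verify that a suitable mixture is feasible for the $\epsilon_e$-tight problem. Using convexity of $g$ (Eq. \eqref{eq:convex_constraints}) and the linearity of each $\zeta^P(i)$ in $\rho$,
\begin{align}
g\big(\zeta_{\pi'}^P(1),\dots,\zeta_{\pi'}^P(d)\big) &\le (1-\kappa)\,g\big(\zeta_{\pi^*}^P\big) + \kappa\,g\big(\zeta_{\bar\pi}^P\big) \le -\kappa\delta, \nonumber
\end{align}
where I used $g(\zeta_{\pi^*}^P)\le 0$ (feasibility of $\pi^*$) and $g(\zeta_{\bar\pi}^P)\le -\delta$ (Slater). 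Choosing $\kappa = \epsilon_e/\delta$, which lies in $[0,1]$ precisely because $\epsilon_e\le\delta$, gives $g(\zeta_{\pi'}^P)\le -\epsilon_e$, so $\pi'$ is feasible for the $\epsilon_e$-tight optimization problem on the true MDP.

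Finally I would chain optimality with Lipschitz continuity. Since $\pi_e$ is optimal for the $\epsilon_e$-tight problem and $\pi'$ is feasible for it, $f(\lambda_{\pi_e}^P) \ge f(\lambda_{\pi'}^P)$, hence
\begin{align}
f(\lambda_{\pi^*}^P) - f(\lambda_{\pi_e}^P) \le f(\lambda_{\pi^*}^P) - f(\lambda_{\pi'}^P) \le L\,\big|\lambda_{\pi^*}^P - \lambda_{\pi'}^P\big| = L\kappa\,\big|\lambda_{\pi^*}^P - \lambda_{\bar\pi}^P\big|, \nonumber
\end{align}
using the Lipschitz bound \eqref{eq:Lipschitz}. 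Since $r\in[0,1]$ forces every long-term average reward into $[0,1]$, the reward gap is at most $1$, giving $f(\lambda_{\pi^*}^P) - f(\lambda_{\pi_e}^P) \le L\kappa = L\epsilon_e/\delta$; the factor $2$ in the statement is slack, arising from a cruder bound on $|\lambda_{\pi^*}^P - \lambda_{\bar\pi}^P|$ or on the chosen $\kappa$. The main obstacle I anticipate is not any single inequality but the structural justification that the mixed occupancy measure $\rho'$ genuinely corresponds to a single stationary policy $\pi'$ realizing the averaged reward and cost vectors; this rests on convexity of the occupancy-measure polytope and linearity of $\lambda_\pi^P$ and $\zeta_\pi^P(i)$ in $\rho$, which is exactly why the argument is performed in occupancy-measure space rather than directly over policies.
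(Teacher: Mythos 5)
Your proposal is correct and follows essentially the same route as the paper's proof: mix the occupancy measures of $\pi^*$ and the Slater policy with weight $\epsilon_e/\delta$, verify $\epsilon_e$-tight feasibility via convexity of $g$, and then chain optimality of $\pi_e$ with Lipschitz continuity of $f$. Your observation that $|\lambda_{\pi^*}^P - \lambda_{\bar\pi}^P|\le 1$ (since $r\in[0,1]$) in fact yields the slightly sharper constant $L\epsilon_e/\delta$, whereas the paper bounds this difference by $2$ via the triangle inequality.
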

\begin{proof}[Proof Sketch]
We construct a policy for which the steady state distribution is the weighted average of two steady state distributions. First distribution is for the optimal policy for the true optimization problem. Second distribution is for the policy which satisfies Assumption \ref{slaters_conditon}. We show that this constructed policy satisfies the $\epsilon_e$-tight constraints. Further, using Lipschitz continuity, we convert the difference between function values into the difference between the long-term average rewards to obtain the required result. The detailed proof is provided in Appendix \ref{app:proof_of_epsilon_gap_objective}.
\end{proof}

Lemma \ref{lem:optimality_gap_objective} and our construction of $\epsilon_e$ sequence allows us to limit the growth of regret because of conservative policy by $\Tilde{O}(LdT_MS\sqrt{A/T})$.

To bound the regret from the second source, we use a Bellman error based analysis. In our next lemma, we show that the difference between the performance of a policy on two different MDPs is bounded by long-term averaged Bellman error. Formally, we have:
\begin{lemma}\label{lem:bound_average_by_bellman_main}
The difference of long-term average rewards for running the optimistic policy $\pi_e$ on the optimistic MDP, $\lambda_{\pi_e}^{\Tilde{P}_e}$, and the average long-term average rewards for running the optimistic policy $\pi_e$ on the true MDP, $\lambda_{\pi_e}^P$, is the long-term average Bellman error as
\begin{align}
    \lambda_{\pi_e}^{\Tilde{P}_e} - \lambda_{\pi_e}^P = \sum\nolimits_{s,a}\rho_{\pi_e}^P B^{\pi_e, \Tilde{P}_e}(s,a)
\end{align}
\end{lemma}
\begin{proof}[Proof Sketch]
We start by writing $Q_\gamma^{\pi_e, \Tilde{P}_e}$ in terms of the Bellman error. Subtracting $V_\gamma^{\pi_e, P}$ from $V_\gamma^{\pi_e, \Tilde{P}_e}$ and using the fact that $\lambda_{\pi_e}^P = \lim_{\gamma\to1}V_\gamma^{\pi,P}$ and $\lambda_{\pi_e}^{\Tilde{P}_e} = \lim_{\gamma\to1}V_\gamma^{\pi,{\Tilde{P}_e}}$, we obtain the required result. A complete proof is provided in Appendix \ref{app:bounding_imperferct_model_regret}.
\end{proof}

After relating the gap between the long-term average rewards of policy $\pi_e$ on the two MDPs, we aim to bound the sum of Bellman error over an epoch. For this, we first bound the Bellman error for a particular state action pair $(s,a)$ in the following lemma. We have,
\begin{lemma}\label{lem:bound_bellman_s_a_main}
With probability at least $1- 1/t_e^6$, the Bellman error $B^{\pi_e, \Tilde{P}_e}(s,a)$ for state-action pair $(s,a)$ in epoch $e$ is upper bounded as
\begin{align}
B^{\pi_e, \Tilde{P}_e}(s,a) \le \sqrt{\frac{14S\log(2AT)}{1\vee N_e(s,a)}}\|\Tilde{h}\|_\infty
\end{align}
where $N_e(s,a)$ is the number of visitations to $(s,a)$ till epoch $e$ and $\Tilde{h}$ is the bias of the MDP with transition probability $\Tilde{P}_e$.
\end{lemma}
\begin{proof}[Proof Sketch]
We start by noting that the Bellman error essentially bounds the impact of the difference in value obtained because of the difference in transition probability to the immediate next state. We bound the difference in transition probability between the optimistic MDP and the true MDP using the result from \cite{weissman2003inequalities}. This approach gives the required result. A complete proof is provided in Appendix \ref{app:bounding_imperferct_model_regret}.
\end{proof}

We use Lemma \ref{lem:bound_average_by_bellman_main} and Lemma \ref{lem:bound_bellman_s_a_main} to bound the regret because of the imperfect knowledge of the system model. We bound the expected Bellman error in epoch $e$ starting from state $s_{t_e}$ and action $a_{t_e}$ by constructing a Martingale sequence with filtration $\mathcal{F}_t = \{s_1, a_1, \cdots, s_{t-1}, a_{t-1}\}$ and using Azuma's inequality \cite{bercu2015concentration}. Using the Azuma's inequality, we can also bound the deviations because of the stochasticity of the Markov Decision Process. The result is stated in the following lemma with proof in Appendix \ref{app:regret_bounds}.
\begin{lemma}
With probability at least $1-T^{-5/4}$, the regret incurred from imperfect model knowledge and process stochastics is bounded by
\begin{align}
    O(T_MS\sqrt{A(\log AT)/T} + (CT_MS^2A\log T)/(1-\rho))
\end{align}
\end{lemma}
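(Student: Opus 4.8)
The plan is to bound the cumulative gap
\begin{align}
\Delta := \sum_{t=1}^{T}\Big(\lambda_{\pi_{e(t)}}^{\Tilde{P}_{e(t)}} - r(s_t,a_t)\Big),\nonumber
\end{align}
since this is exactly the loss from planning on the optimistic model rather than the true one (imperfect knowledge) together with the realised trajectory departing from its stationary behaviour (process stochastics). Writing $T_e$ for the set of steps in epoch $e$, I would first insert the true-model average $\lambda_{\pi_e}^{P}$ as a pivot and split each epoch's contribution as $\sum_{t\in T_e}(\lambda_{\pi_e}^{\Tilde{P}_e}-\lambda_{\pi_e}^{P}) + \sum_{t\in T_e}(\lambda_{\pi_e}^{P}-r(s_t,a_t))$. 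Lemma \ref{lem:bound_average_by_bellman_main} converts the first group into $|T_e|\sum_{s,a}\rho_{\pi_e}^{P}B^{\pi_e,\Tilde{P}_e}(s,a)$, and since $\lambda_{\pi_e}^{P}=\sum_{s,a}\rho_{\pi_e}^{P}r(s,a)$ the second group is the deviation of the empirical reward from its stationary value. Both groups thus take the common form of a stationary average $\mathbb{E}_{\rho_{\pi_e}^{P}}[\phi]$ of a bounded $\phi$ compared against its realisation along the trajectory.

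The crux is to replace the stationary occupancy $\rho_{\pi_e}^{P}$ by the realised visits. For a generic bounded $\phi$ I would center and decompose
\begin{align}
\sum_{t\in T_e}\phi(s_t,a_t) - |T_e|\,\mathbb{E}_{\rho_{\pi_e}^{P}}[\phi] &= \sum_{t\in T_e}\Big(\phi(s_t,a_t) - \mathbb{E}[\phi(s_t,a_t)\mid\mathcal{F}_t]\Big)\nonumber\\
&\quad + \Big(\sum_{t\in T_e}\mathbb{E}[\phi(s_t,a_t)\mid\mathcal{F}_t] - |T_e|\,\mathbb{E}_{\rho_{\pi_e}^{P}}[\phi]\Big).\nonumber
\end{align}
The first sum is a martingale with respect to $\mathcal{F}_t$; the second (transient) sum I would control by conditioning down to the epoch's starting state $s_{t_e}$ and invoking the ergodicity assumption $\|P^{t}_{\pi,s}-P_\pi\|\le C\rho^{t}$, whose geometric series over the epoch gives a bias of at most $C\|\phi\|_\infty/(1-\rho)$. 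Summing this over the logarithmically many epochs (the doubling schedule bounds the epoch count) and using that the relevant $\|\phi\|_\infty=\|B^{\pi_e,\Tilde{P}_e}\|_\infty$ is controlled by the bias span, i.e. $\lesssim T_M$, produces the $\tfrac{CT_MS\log T}{1-\rho}$ term.

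For the empirical Bellman mass $\sum_e\sum_{s,a}\nu_e(s,a)B^{\pi_e,\Tilde{P}_e}(s,a)$ that survives the swap, I would plug in Lemma \ref{lem:bound_bellman_s_a_main}, namely $B^{\pi_e,\Tilde{P}_e}(s,a)\le\sqrt{14S\log(2AT)/(1\vee N_e(s,a))}\,\Tilde{h}(\cdot)$ with $\Tilde{h}\lesssim T_M$, reducing it to a UCRL2-type quantity. Applying the pigeonhole bound $\sum_e \nu_e(s,a)/\sqrt{1\vee N_e(s,a)}\le(\sqrt{2}+1)\sqrt{N(s,a)}$ and then Cauchy--Schwarz $\sum_{s,a}\sqrt{N(s,a)}\le\sqrt{SAT}$ (using $\sum_{s,a}N(s,a)=T$) bounds this by $O(T_MS\sqrt{AT\log(AT)})$, which after normalising by $T$ gives the $T_MS\sqrt{A(\log AT)/T}$ term.

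Finally, the martingale parts for both the reward ($\|\phi\|_\infty\le1$) and Bellman ($\|\phi\|_\infty\lesssim T_M$) decompositions have bounded increments, so I would apply Azuma--Hoeffding against $\mathcal{F}_t$; these fluctuations are of strictly lower order than the main Bellman-mass term and are absorbed into it. Choosing the Azuma deviation level at $T^{-5/4}$ and union-bounding its tail together with the per-pair failure probabilities $1/t_e^{6}$ from Lemma \ref{lem:bound_bellman_s_a_main} over all epochs and steps yields the stated $1-T^{-5/4}$ confidence. The hard part will be the stationary-to-empirical replacement: isolating the martingale contribution from the transient one while the Bellman increments $\|B^{\pi_e,\Tilde{P}_e}\|_\infty$ can be large at state--action pairs with small $N_e$, since a careless bound there would blow up the per-epoch mixing bias beyond $T_M/(1-\rho)$ and break the stated rate.
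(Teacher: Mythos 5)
Your proposal follows essentially the same route as the paper: the same split into an imperfect-model term $\bigl(\lambda_{\pi_e}^{\Tilde{P}_e}-\lambda_{\pi_e}^{P}\bigr)$ and a stochasticity term $\bigl(\lambda_{\pi_e}^{P}-r(s_t,a_t)\bigr)$, conversion of the former to the stationary-averaged Bellman error via Lemma \ref{lem:bound_average_by_bellman_main}, the stationary-to-empirical swap via the geometric-mixing bias plus an Azuma martingale (the paper's Lemma \ref{lem:bound_expected_bellman} and Lemma \ref{lem:bound_average_observed_reward_gap}), and then Lemma \ref{lem:bound_bellman_s_a_main} with the pigeonhole and Cauchy--Schwarz steps to get the $T_MS\sqrt{A(\log AT)/T}$ term, with the epoch count and failure probabilities handled as in the paper. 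The difficulty you flag at the end is handled in the paper exactly as you anticipate, by the uniform bound $B^{\pi_e,\Tilde{P}_e}(s,a)\le 2\|\Tilde{h}\|_\infty\lesssim T_M$ from Lemma \ref{lem:bounded_bias_span}.
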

The regret analysis framework also prepares us to bound the constraint violations as well. We again start by quantifying the reasons for constraint violations. The agent violates the constraint because \textbf{1.} it is playing with the imperfect knowledge of the MDP and \textbf{2.} the stochasticity of the MDP which results in the deviation from the average costs. We note that the conservative policy $\pi_e$ for every epoch does not violate the constraints, but instead allows the agent to manage the constraint violations because of the imperfect model knowledge and the system dynamics.

We note that the Lipschitz continuity of the constraint function $g$ allows us to convert the function of $d$ averaged costs to the sum of $d$ averaged costs. Further, we note that we can treat the cost similar to rewards \cite{brantley2020constrained}. This property allows us to bound the cost incurred incurred in a way similar to how we bound the gap from the optimal reward by $LdT_MS\sqrt{A(\log AT)/T}$. We now want that the slackness provided by the conservative policy should allow $LdT_MS\sqrt{A(\log AT)/T}$ constraint violations. This is ensured by our chosen $\epsilon_e$ sequence. We formally state that result in the following lemma proven in parts in Appendix \ref{app:regret_bounds} and Appendix \ref{app:constraint_violation_bounds}.
\begin{lemma}
The cumulative sum of the $\epsilon_e$ sequence is upper and lower bounded as
\begin{align}
    \sum\nolimits_{e=1}^E(t_{e+1}-t_e)\epsilon_e = \Theta \left(K\sqrt{T\log T}\right)
\end{align}
\end{lemma}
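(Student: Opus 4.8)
The plan is to reduce the two-sided bound to a single integral estimate. Recall that the algorithm sets $\epsilon_e = K\sqrt{(\ln t_e)/t_e}$, and that for the constraint-violation accounting to close, the configurable constant is taken at the order $K = \Theta(L d T_M S\sqrt{A})$. Hence it suffices to prove the scalar statement $\sum_{e=1}^{E}(t_{e+1}-t_e)\sqrt{(\ln t_e)/t_e} = \Theta(\sqrt{T\ln T})$; multiplying through by $K$ then yields the advertised $\Theta(LdT_MS\sqrt{AT\log T})$. The core observation is that the left-hand sum is a step-function (Riemann-type) approximation of $\int_{1}^{T}\sqrt{(\ln t)/t}\,dt$, where the integrand is frozen at its value at the \emph{left} endpoint $t_e$ over each epoch interval $[t_e,t_{e+1})$.

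First I would evaluate the integral using the antiderivative $2\sqrt{t\ln t}$. Since $\tfrac{d}{dt}\big(2\sqrt{t\ln t}\big) = \sqrt{(\ln t)/t} + 1/\sqrt{t\ln t}$, and the correction term $1/\sqrt{t\ln t} = (\ln t)^{-1}\sqrt{(\ln t)/t}$ is smaller than the main integrand by a factor $1/\ln t$, integration gives $\int_{1}^{T}\sqrt{(\ln t)/t}\,dt = 2\sqrt{T\ln T}\,(1+o(1)) = \Theta(\sqrt{T\ln T})$. Next I would sandwich the sum between constant multiples of this integral. The map $t\mapsto \sqrt{(\ln t)/t}$ is decreasing for $t>e$ (its derivative carries the sign of $1-\ln t$), so the left-endpoint value overestimates the integrand on each interval, giving immediately the lower bound $\sum_e (t_{e+1}-t_e)\sqrt{(\ln t_e)/t_e} \ge \int_{1}^{T}\sqrt{(\ln t)/t}\,dt = \Omega(\sqrt{T\ln T})$.

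For the matching upper bound I would invoke the doubling structure of the epochs. The epoch-termination rule $\nu_e(s,a)=\max\{1,N_e(s,a)\}$ forces $t_{e+1}-t_e = \sum_{s,a}\nu_e(s,a) \le SA + \sum_{s,a}N_e(s,a) = SA + t_e$, hence $t_{e+1}\le 2t_e + SA \le 3t_e$ once $t_e \ge SA$. Consequently, for any $t\in[t_e,t_{e+1})$ we have $\sqrt{(\ln t)/t} \ge \sqrt{(\ln t_e)/t_{e+1}} \ge \tfrac{1}{\sqrt 3}\sqrt{(\ln t_e)/t_e}$, so the left-endpoint value exceeds the true integrand on the interval by at most a constant factor, yielding $(t_{e+1}-t_e)\sqrt{(\ln t_e)/t_e} \le \sqrt{3}\int_{t_e}^{t_{e+1}}\sqrt{(\ln t)/t}\,dt$. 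Summing over epochs gives the upper bound $O(\sqrt{T\ln T})$, and combining with the lower bound closes the $\Theta$.

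The main obstacle is precisely this upper bound: because $\epsilon_e$ is held at its largest (left-endpoint) value throughout an entire epoch and epochs can be long, a crude estimate could overshoot the integral by more than a constant factor, which would break the tightness needed downstream. The doubling property $t_{e+1}=O(t_e)$ is what rescues the argument by guaranteeing the integrand drops by at most a constant over each epoch. I would additionally dispatch the finitely many initial epochs with $t_e \le \max(e,SA)$, where monotonicity fails, separately: they contribute only $O\big(K\cdot\mathrm{poly}(SA)\big)$, which is dominated by the leading $\Theta(\sqrt{T\ln T})$ term and absorbed into the constants. Finally, for the variant where a lower bound $T_l$ is known and $\epsilon_e$ is floored at its $T_l$-value, the identical comparison applies with the integral taken over $[\max(1,T_l),T]$, leaving the $\Theta$-order unchanged.
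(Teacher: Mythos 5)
Your proof is correct, and it takes a genuinely different --- and in one respect tighter --- route than the paper's, which establishes the two directions in separate places with separate techniques. For the lower bound (the $C_1(T)$ term in Appendix~\ref{app:constraint_violation_bounds}), the paper keeps only the epochs starting after some $t_{E'}\in[T/4,T/2)$, bounds each surviving $\epsilon_e$ below by $K\sqrt{\log(T/4)/T}$, and counts at least $T/2$ remaining time steps; your observation that $t\mapsto\sqrt{(\ln t)/t}$ is eventually decreasing, so the left-endpoint sum dominates $\int_1^T\sqrt{(\ln t)/t}\,dt=\Theta(\sqrt{T\ln T})$, reaches the same $\Omega(K\sqrt{T\log T})$ without needing to exhibit such an epoch $E'$ (whose existence the paper asserts without proof and which itself rests on the doubling structure you make explicit). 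For the upper bound the paper works inside the $R_1(T)$ estimate of Appendix~\ref{app:regret_bounds}: it rewrites $(t_{e+1}-t_e)\epsilon_e$ as $K\sum_{s,a}\nu_e(s,a)\sqrt{\log T/N_e(s,a)}$, applies Lemma~19 of \citet{jaksch2010near} per state--action pair, and finishes with Cauchy--Schwarz, which yields $O(K\sqrt{SAT\log T})$ --- larger by a factor $\sqrt{SA}$ than the $\Theta$ asserted in the lemma statement. Your integral comparison, driven by $t_{e+1}\le 2t_e+SA$ (correctly extracted from the epoch-termination rule) so that the integrand drops by at most a constant factor across each epoch, gives the matching $O(K\sqrt{T\log T})$ and therefore actually closes the stated two-sided bound, which the paper's own two halves do not quite do. The only blemishes in your write-up are cosmetic: the correction term $1/\sqrt{t\ln t}$ in your antiderivative is not uniformly smaller than the main term near $t=1$, though it is integrable there; and since $\sqrt{(\ln t)/t}\le e^{-1/2}$ for all $t\ge1$, the early epochs you set aside indeed contribute only $O(K\cdot SA)$, which is absorbed for large $T$.
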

After giving the details on bounds on the possible sources of regret and constraint violations, we can formally state the result in the form of following theorem.

\begin{theorem}\label{thm:regret_bound}
For all $T$ and $K = \Theta(LdT_MS\sqrt{A} + CSA/(1-\rho))$, the regret $R(T)$ of \NAM\ algorithm is bounded as 
\begin{align}
    R(T) = O\left(\frac{1}{\delta}LdT_MS\sqrt{A\frac{\log AT}{T}} + \frac{CT_MS^2A\log T}{(1-\rho)T}\right)
\end{align}
and the constraints are bounded as $C(T) = 0$, with probability at least $1-\frac{1}{T^{5/4}}$.
\end{theorem}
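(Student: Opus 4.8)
The plan is to bound the objective regret $R(T)$ by decomposing the gap $f(\lambda_{\pi^*}^P)-f(\bar r)$, with $\bar r=\frac1T\sum_{t=1}^T r(s_t,a_t)$, into the three sources flagged earlier — conservatism, optimism/model error, and process stochasticity — each handled by one of the preceding lemmas, and to bound the constraint regret $C(T)$ by showing that the same error budget is absorbed by the slack that the $\epsilon_e$-tight program buys. Throughout I condition on the single high-probability event that the true transition $P$ lies in every confidence set \eqref{eq:opt_transition_probability} and that all per-$(s,a)$ Bellman bounds of Lemma \ref{lem:bound_bellman_s_a_main} hold; since there are $O(SA\log T)$ epochs and each failure probability is $1/t_e^6$, a union bound keeps this event at probability at least $1-T^{-5/4}$, together with the Azuma bound of the imperfect-model lemma.

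For the objective I exploit concavity of $f$ (Assumption \ref{concave_assumption}). Writing $T_e=t_{e+1}-t_e$ and letting $\bar r^{(e)}$ be the empirical mean reward in epoch $e$, Jensen gives $f(\bar r)\ge \frac1T\sum_e T_e f(\bar r^{(e)})$, so that
\begin{align*}
R(T)\le \frac1T\sum_e T_e\big[f(\lambda_{\pi^*}^P)-f(\lambda_{\pi_e}^{\tilde P_e})\big]+\frac1T\sum_e T_e\big[f(\lambda_{\pi_e}^{\tilde P_e})-f(\bar r^{(e)})\big].
\end{align*}
In the first bracket, optimism (the true MDP lies in the confidence set and $\pi_e$ is optimal for the $\epsilon_e$-tight program on $\tilde P_e$) yields $f(\lambda_{\pi_e}^{\tilde P_e})$ no smaller than the $\epsilon_e$-tight optimum on the true MDP, so Lemma \ref{lem:optimality_gap_objective} bounds it by $\tfrac{2L\epsilon_e}{\delta}$ per epoch. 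The second bracket is controlled by $L$-Lipschitzness (Assumption \ref{lipschitz_assumption}): $f(\lambda_{\pi_e}^{\tilde P_e})-f(\bar r^{(e)})\le L|\lambda_{\pi_e}^{\tilde P_e}-\lambda_{\pi_e}^P|+L|\lambda_{\pi_e}^P-\bar r^{(e)}|$, the first piece being the averaged Bellman error of Lemma \ref{lem:bound_average_by_bellman_main} (bounded via Lemma \ref{lem:bound_bellman_s_a_main}) and the second a martingale-difference/mixing term. Summing, the second bracket is the imperfect-model-plus-stochasticity bound $O\!\big(T_MS\sqrt{A\log(AT)/T}+\tfrac{CT_MS\log T}{1-\rho}\big)$, while the conservatism terms summed with the cumulative-$\epsilon_e$ lemma and the choice $K=\Theta(LdT_MS\sqrt A)$ produce $O\!\big(\tfrac1\delta LdT_MS\sqrt{A\log(AT)/T}\big)$, which dominates the model-error term and leaves the residual epoch-switching cost $\tfrac{C\log T}{1-\rho}$; this gives the stated $R(T)$.

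For the constraint regret I use convexity of $g$ (Assumption \ref{concave_assumption}) to pull the function inside the epoch-weighted average, $g(\bar c)\le \frac1T\sum_e T_e\,g(\bar c^{(e)})$, with $\bar c^{(e)}$ the empirical cost vector in epoch $e$. Since $\pi_e$ is feasible for the $\epsilon_e$-tight program \eqref{eq:eps_cmdp_constraints} on $\tilde P_e$ and $P$ lies in the confidence set, the expected cost under $\pi_e$ satisfies $g\le-\epsilon_e$; applying $L$-Lipschitzness of $g$ and treating each of the $d$ costs exactly as a reward, the deviation of $\bar c^{(e)}$ from this expected cost is bounded by the same imperfect-model-plus-stochasticity budget used above. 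Summing, the accumulated slack $-\frac1T\sum_e T_e\epsilon_e$ is, by the cumulative-$\epsilon_e$ lemma ($\Theta(LdT_MS\sqrt{AT\log T})$) and the choice of $K$, at least as large in magnitude as the accumulated deviation, so $g(\bar c)$ is nonpositive up to an $O(1)$ additive residual, giving $C(T)=(g(\bar c))_+=O(1/T)$.

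The step I expect to be the main obstacle is the two-sided balance encoded in the cumulative-$\epsilon_e$ lemma: the single sequence $\{\epsilon_e\}$ must decay slowly enough that its weighted sum strictly dominates the accumulated cost deviation (forcing $C(T)=O(1/T)$), yet fast enough that after dividing by $T$ and the Slater factor $1/\delta$ the objective regret stays at $\tilde O(1/\sqrt T)$; the $\Theta$ (two-sided) control of the sum is exactly what reconciles these competing demands. A second delicate point is keeping the epoch-transition/bias contribution at the lower-order $\tfrac{C\log T}{1-\rho}$ rather than letting it scale with the number of epochs as in \cite{jaksch2010near}, which is where the Bellman-error identity of Lemma \ref{lem:bound_average_by_bellman_main} and the ergodicity constants $C,\rho$ of Assumption~1 must be combined under the single $1-T^{-5/4}$ event.
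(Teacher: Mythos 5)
Your proposal is correct and follows essentially the same route as the paper: the same three-way decomposition (conservatism via Lemma \ref{lem:optimality_gap_objective}, optimism/model error via the Bellman-error lemmas, stochasticity via Azuma plus the ergodicity constants), the same use of optimism, concavity/Jensen and Lipschitzness to reduce to these pieces, and the same balancing of the cumulative $\epsilon_e$ slack against the cost deviations with $K=\Theta(LdT_MS\sqrt{A})$ to get $C(T)=O(1/T)$. The only cosmetic difference is that you apply Jensen to the per-epoch empirical averages before linearizing, whereas the paper applies concavity to the epoch-weighted optimistic averages and then Lipschitzness to the whole-trajectory deviation; both yield the stated bounds.
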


\subsection{Posterior Sampling Algorithm} \label{sec:psrl_algorithm_main}
We can also modify the analysis to obtain Bayesian regret for a posterior sampling version of the UC-CURL algorithm using Lemma 1 of \cite{osband2013more}. In the posterior sampling algorithm, instead of finding the optimistic MDP, we sample the transition probability $\Tilde{P}_e$ using an updated posterior. This sampling allows to reduce the complexity of the optimization problem by eliminating Eq. \eqref{eq:boundary} and Eq. \eqref{eq:opt_transition_probability}. The complete algorithm is described in Appendix \ref{sec:psrl}. We note that optimization problem for the UC-CURL algorithm is feasible because the true MDP lies in the confidence interval. However, for the sampled MDP obtaining the feasibility requires a stronger Slater's condition. 

\subsection{Further Modifications} \label{sec:modifications}
The proposed algorithm, and the analysis can be easily extended to $M$ convex constraints $g_1, \cdots, g_M$ by applying union bounds. Further, our analysis uses Proposition 1 of \cite{jaksch2010near} to bound the epochs by $O(SA\log_2 T)$. However, we can improve the empirical performance of the \NAM\ algorithm by modifying the epoch trigger condition (Line 11 of Algorithm \ref{alg:algorithm}). Triggering a new episode whenever $\nu_e(s,a)$ becomes $\max\{1, \nu_{e-1}(s,a) + 1\}$ for any state-action pair results in a linearly increasing episode length with total epochs bounded by $O(SA + \sqrt{SAT})$. This modification results in a better empirical performance (See Appendix \ref{app:sim_details} for simulations) at the cost of a higher theoretical regret bound and computation complexity for obtaining a new policy at every epoch.

\section{Simulation Results} \label{app:sim_details}
To validate the performance of the \NAM\ algorithm and the \psrl\ algorithm,  we run the simulation on the flow and service control in a single-serve queue, which was introduced in \citep{altman1991constrained}. Along with validating the performance of the proposed algorithms, we also compare the algorithms against the algorithms proposed in \cite{singh2020learning} and in \cite{chen2022learning} for model-based constrained reinforcement learning for infinite horizon MDPs. Compared to these algorithms, we note that our algorithm is also designed to handle concave objectives of expected rewards with convex constraints on costs with $0$ constraint violations.

In the queue environment, a discrete-time single-server queue with a buffer of finite size $L$ is considered. The number of  customers waiting in the queue is considered as the state in this problem and thus $\vert S\vert=L+1$. Two kinds of the actions, service and flow, are considered in the problem and control the number of customers together. The action space for service is a finite subset $A$ in $[a_{min},a_{max}]$, where $0<a_{min}\leq a_{max}<1$. Given a specific service action $a$, the service a customer is successfully finished with the probability $b$. If the service is successful, the length of the queue will reduce by 1. Similarly, the space for flow is also a finite subsection $B$ in $[b_{min}, b_{max}]$. In contrast to the service action, flow action will increase the queue by $1$ with probability $b$ if the specific flow action $b$ is given. Also, we assume that there is no customer arriving when the queue is full. The overall action space is the Cartesian product of the $A$ and $B$. According to the service and flow probability, the transition probability can be computed and is given in the Table \ref{table:transition}.

\begin{table*}[ht]   
		\caption{Transition probability of the queue system}  
		\label{table:transition}
		\begin{center}  
			\begin{tabular}{|c|c|c|c|}  
				\hline  
				Current State & $P(x_{t+1}=x_t-1)$ & $P(x_{t+1}=x_t)$ & $P(x_{t+1}=x_t+1)$ \\ \hline
				$1\leq x_t\leq L-1$ & $a(1-b)$ & $ab+(1-a)(1-b)$ & $(1-a)b$ \\ \hline
				$x_t=L$ & $a$ & $1-a$ & $0$ \\ \hline
				$x_t=0$ & $0$ & $1-b(1-a)$ & $b(1-a)$ \\ 
				\hline  
			\end{tabular}  
		\end{center}  
\end{table*}
\begin{figure}[t]
    \centering
     \begin{subfigure}[b]{0.45\textwidth}
         \centering
         \includegraphics[width=\textwidth]{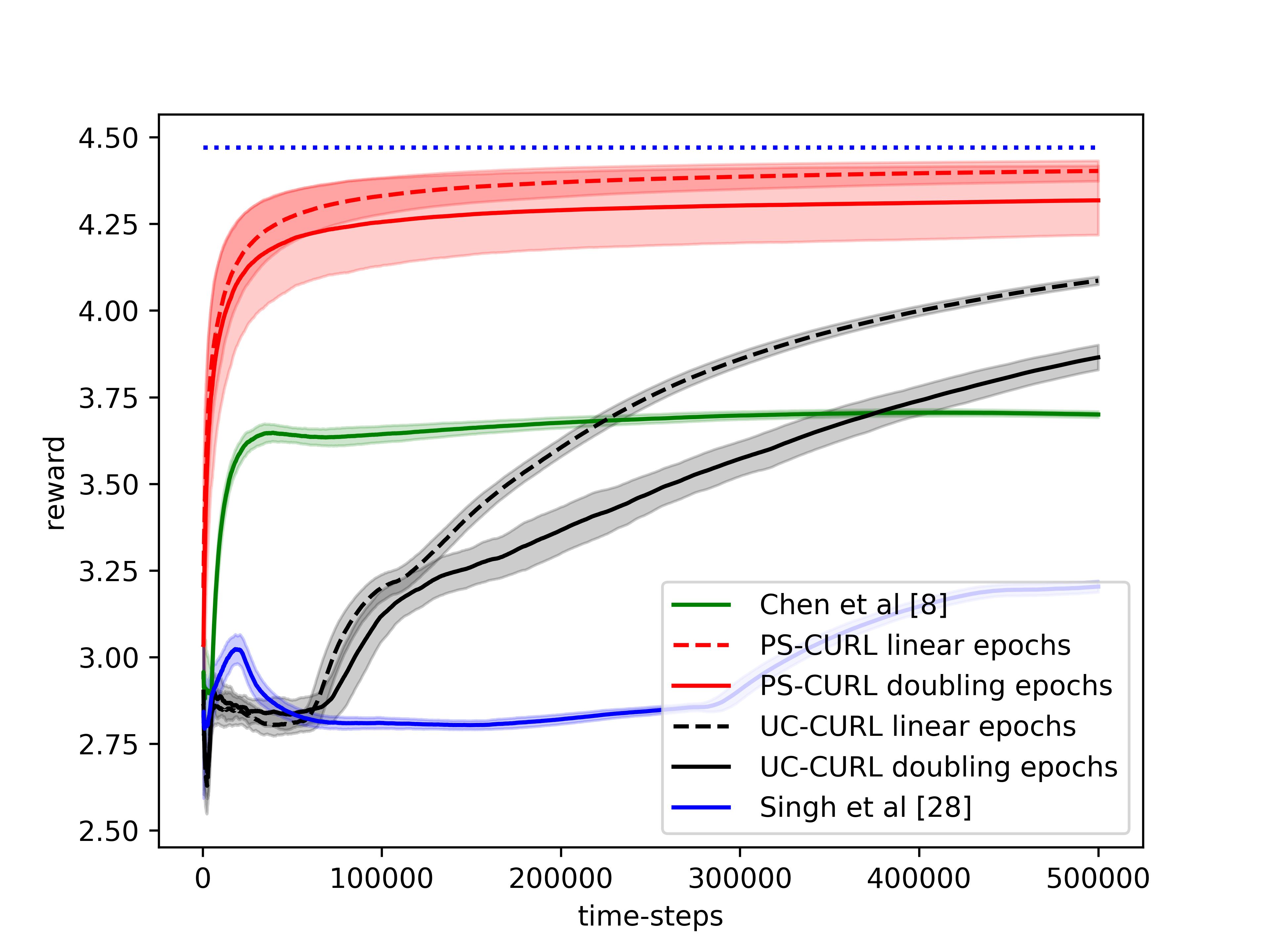}
         \caption{Reward growth \textit{w.r.t.} time}
         \label{fig:rewards}
     \end{subfigure}
     \hfill
     \begin{subfigure}[b]{0.45\textwidth}
         \centering
         \includegraphics[width=\textwidth]{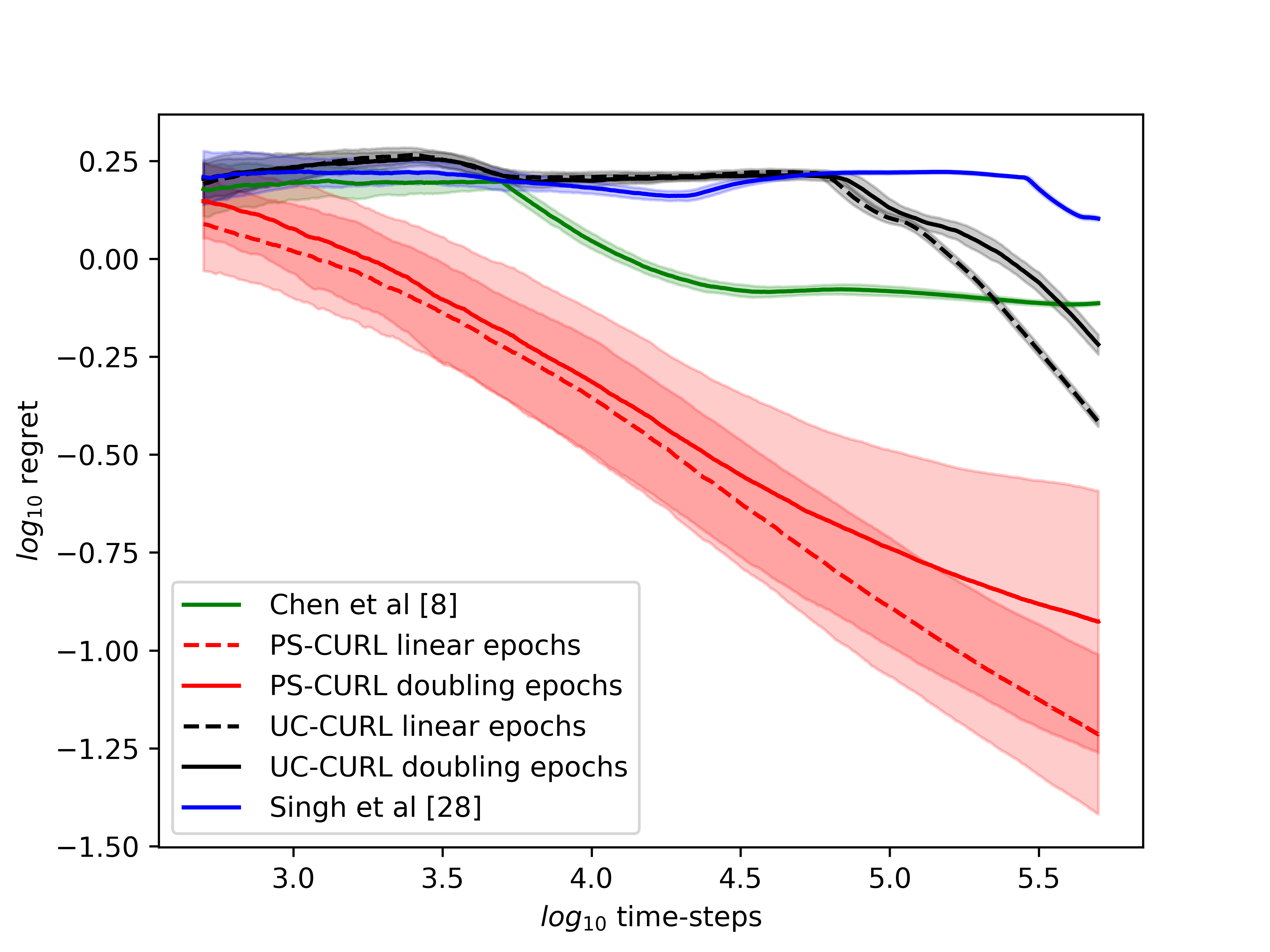}
         \caption{Regret \textit{w.r.t.} time}
         \label{fig:regret}
     \end{subfigure}         
     \hfill
     \begin{subfigure}[b]{0.45\textwidth}
         \centering
         \includegraphics[width=\textwidth]{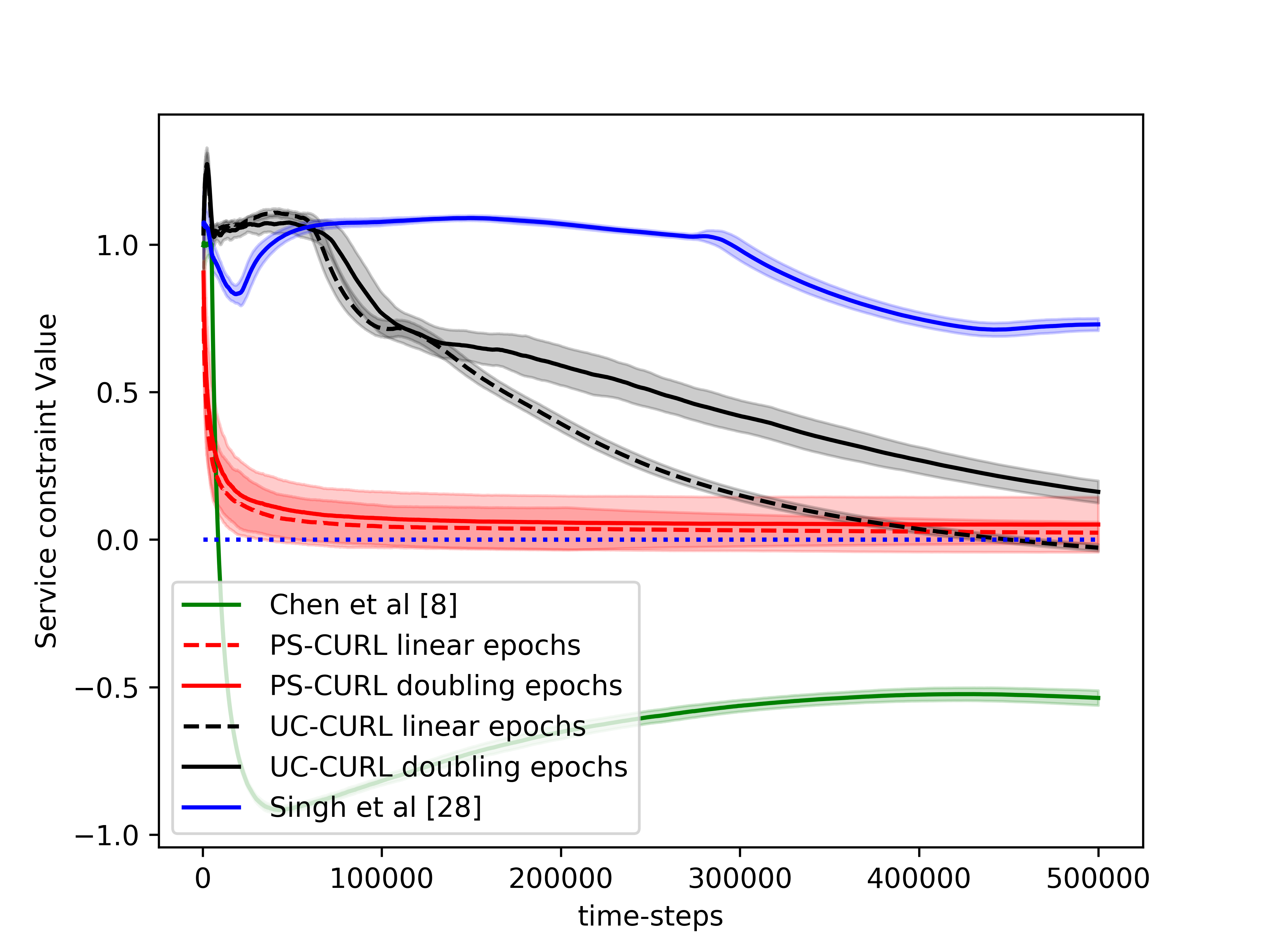}
         \caption{Service constraints \textit{w.r.t.} time}
         \label{fig:service}
     \end{subfigure}
     \hfill
     \begin{subfigure}[b]{0.45\textwidth}
         \centering
         \includegraphics[width=\textwidth]{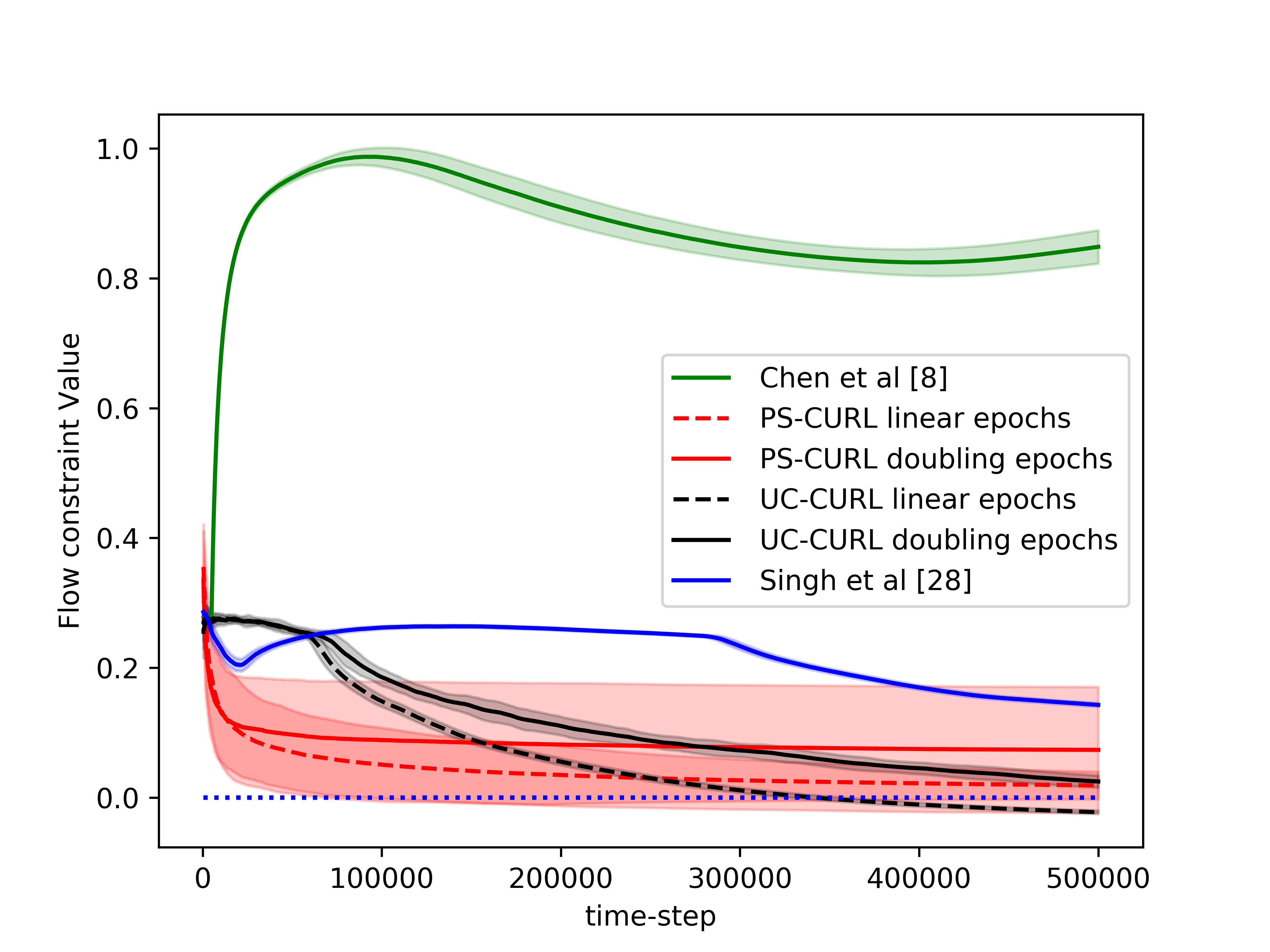}
         \caption{Flow constraints \textit{w.r.t.} time}
         \label{fig:flow}
     \end{subfigure}
    \caption{Performance of the proposed \NAM\ and \psrl\ algorithms on a flow and service control problem for a single queue with doubling epoch lengths and linearly increasing epoch lengths. The algorithms are compared against \citet{chen2022learning} and \citet{singh2020learning}}.
    \vspace{-1.2cm}
    \label{fig:empirical_study}
\end{figure}

Define the reward function as $r(s,a, b)$ and the constraints for service and flow as $c^1(s,a, b)$ and $c^2(s,a, b)$, respectively. Define the stationary policy for service and flow as $\pi_a$ and $\pi_b$, respectively. Then, the problem can be defined as
\begin{equation}
    \begin{split}
        \max_{\pi_a,\pi_b} &\quad \lim\limits_{T\rightarrow\infty}\frac{1}{T}\sum_{t=1}^{T}r(s_t,\pi_a(s_t),\pi_b(s_t))\\
        s.t. &\quad \lim\limits_{T\rightarrow\infty}\frac{1}{T}\sum_{t=1}^{T}c^1(s_t,\pi_a(s_t),\pi_b(s_t))\geq 0\\
        &\quad \lim\limits_{T\rightarrow\infty}\frac{1}{T}\sum_{t=1}^{T}c^2(s_t,\pi_a(s_t),\pi_b(s_t))\geq 0
    \end{split}
\end{equation}

According to the discussion in \citep{altman1991constrained}, we define the reward function as $r(s,a,b)=5 - s$, which is an decreasing function only dependent on the state. It is reasonable to give higher reward when the number of customers waiting in the queue is small. For the constraint function, we define $c^1(s,a,b)=-10a + 6$ and $c^2 = - 8(1-b)^2+2$, which are dependent only on service and flow action, respectively. Higher constraint value is given if the probability for the service and flow are low and high, respectively.

In the simulation, the length of the buffer is set as $L=5$. The service action space is set as $[0.2,0.4,0.6,0.8]$ and the flow action space is set as $[0.4,0.5,0.6,0.7]$. We use the length of horizon $T=5\times10^5$ and run $50$ independent simulations of all algorithms. The experiments were run on a $36$ core Intel-i9 CPU @$3.00$ GHz with $64$ GB of RAM. The result is shown in the Figure \ref{fig:empirical_study}. The average values of the cumulative reward and the constraint functions are shown in the solid lines. Further, we plot the standard deviation around the mean value in the shadow to show the random error. In order to compare this result to the optimal, we assume that the full information of the transition dynamics is known and then use  Linear Programming to solve the problem. The optimal cumulative reward for the constrained optimization is calculated to be 4.48 with both flow constraint and service constraint values to be $0$. The optimal cumulative reward for the unconstrained optimization is 4.8 with service constraint being $-2$ and flow constraint being $-0.88$.
 
We now discuss the performance of all the algorithms starting with our algorithms \NAM\ and \psrl. In Figure \ref{fig:empirical_study}, we observe that the proposed \NAM\ algorithm in Algorithm \ref{alg:algorithm} does not perform well initially. We observe that this is because the confidence interval radius $\sqrt{S\log(At)/N(s,a)}$ for any $(s,a)$ are not tight enough in the initial period. After the algorithms collects sufficient samples to construct tight confidence intervals around the transition probabilities, the algorithm starts converging towards the optimal policy. We also note that the linear epoch modification of the algorithm works better than the doubling epoch algorithm presented in Algorithm \ref{alg:algorithm}. This is because the linear epoch variant updates the policy quickly whereas the doubling epoch algorithm works with the same policy for too long and thus looses the advantages of collected samples. For our implementation, we choose the value of parameter $K$ in Algorithm \ref{alg:algorithm} as $K=1$, using which we observe that the constraint values start converging towards zero.

We now analyse the performance of the \psrl\ algorithm. For our implementation of the \psrl\ algorithm, we sample the transition probabilities using Dirichlet distribution. Note that the true transition probabilities were not sampled from a Dirichlet distribution and hence this experiment also shows the robustness against misspecified priors. We observe that the algorithm quickly brings the reward close to the optimal rewards. The performance of the \psrl\ algorithm is significantly better than the \NAM\ algorithm. We suspect this is because the \NAM\ algorithm wastes a large-number of steps to find optimistic policy with a large confidence interval. This observation aligns with the TDSE algorithm \cite{ouyang2017learning}, where they show that the Thompson sampling algorithm with $O(\sqrt{SAT})$ epochs performs empirically better than the optimism based UCRL2 algorithm \cite{jaksch2010near} with $O(\sqrt{SA\log T)}$ epochs. \cite{osband2013more} also made a similar observation where their PSRL algorithm worked better than the UCRL2 algorithm.
Again, we set the value of parameter $K$ as $1$ and with $K=1$, the algorithm does not violate constraints. We also observe that the standard deviation of the rewards and constraints are higher for the \psrl\ algorithm as compared to the \NAM\ algorithm as the \psrl\ algorithm has an additional stochastic component which arises from sampling the transition probabilities.

After analysing the algorithms presented in this paper, we now analyse the performance of the algorithm by  \citet{chen2022learning}. They provide an optimistic online mirror descent algorithm which also works with conservative parameter to tightly bound constraint violations. Their algorithm also obtains a $O(\sqrt{T})$ regret bound. However, their algorithm is designed for a linear reward/constraint setup with a single constraint, and empirically the algorithm is difficult to tune as it requires additional knowledge of $T_M$, $\rho$, $\delta$, and $T$ to fine tune parameters used in their algorithm. We set the value of the learning rate $\theta$ for online mirror descent as $5\times10^{-2}$ with an episode length of $5\times10^3$. Further, we scale the rewards and costs to ensure that they lie between $0$ and $1$. We analyze the behavior of the optimistic online mirror descent algorithm in Figure \ref{fig:regret}. We observe that the algorithm has three phases. The first phase is the first episodes where the algorithm uses a uniform policy which is the initial flat area till first $5000$ steps. In the second phase, the algorithm updates the policy for the first time and starts converging to the optimal policy with a convergence rate which matches to that of the \psrl\ algorithm. However, after few policy updates, we observe that the algorithm has oscillatory behavior which is because the dual variable updates require online constraint violations.

Finally, we analyze the the algorithm by  \citet{singh2020learning}. They also provide an algorithm which proceeds in epochs and solves an optimization problem at every epoch. The algorithm considers a fixed epoch length $T^{1/3}$. Further, the algorithm considers a confidence interval on each estimate of $P(s'|s,a)$ for all $s,a,s'$ triplet. The algorithm does not perform well even though it updates the policy most frequently because of creating confidence intervals on individual transition probabilities $P(s'|s,a)$ instead of the probability vector $P(s'|s,a)$.

From the experimental observations, we note that the proposed \NAM\ algorithm is suitable in cases where the parameter tuning is not possible and the system requires tighter bounds on deviation of the performance of the algorithm. The \psrl\ algorithm can be used in cases where the variance in algorithm's performance can be tolerated or computational complexity is a constraint. Further, for both the algorithms, it is beneficial to use the linear increasing epoch lengths. Additionally, the algorithm by \citet{chen2022learning} is suitable for cases where solving an optimization equation is not feasible, for example an embedded system, as the algorithm updates policy using exponential function which can be easily computed. However, this algorithm is only applicable in applications with linear reward/constraint and single constraint.



\if 0
\subsection{Limitations and Discussion}\label{sec:discussion}
After describing the algorithm, the analysis, and the regret bounds, we discuss the limitations of the proposed algorithm. We believe the foremost limitation of the optimistic and the posterior sampling algorithm is they are designed only for tabular setups which limits the use case in many practical scenarios directly. Even with tabular setups, we develop model-based algorithms which requires $O(S^2A)$ storage for the transition probability estimates. The other limitation is the assumption of ergodic MDPs. A more general assumption could have been of finite diameter instead of finite mixing time. However, with only finite diameter assumption the optimal policy may not necessarily be stationary \citep{cheung2019regret}. Another  limitation for the proposed UC-CURL algorithm is the computation complexity for solving the optimization problem to obtain the policy in Equation \eqref{eq:opt_cons_optimal_policy}. To limit the computation complexity, we provide a posterior sampling algorithm. A limiting assumption which arises here is that the regret of the posterior sampling algorithm is only obtained for MDP with known priors. However, we do observe that in simulation, the posterior sampling algorithm works well for mis-specified priors using Dirichlet distribution. We also consider tuning of the parameter $K$ as a limitation. The parameter $K$ depends on problem parameters $T_M$, $C$, and $\rho$ to bound regret and constraint violations. When compared with prior works, this is an still an improvement as in addition to the mentioned problem parameters, they also require the knowledge of the time horizon $T$, and the Slater's parameter $\delta$ while obtaining a weaker regret bound and constraint violations \citep{chen2022learning}.

\fi 
\section{Conclusion}\label{sec:conclusion}
We considered the problem of Markov Decision Process with concave objective and convex constraints. For this problem, we proposed \NAM\ algorithm which works on the principle of optimism. To bound the constraint violations, we solve for a conservative policy using an optimistic  model for an $\epsilon$-tight optimization problem. Using an analysis based on Bellman error for infinite-horizon MDPs, we show the \NAM\ algorithm achieves $0$ constraint violations with a regret bound of $\Tilde{O}(LdT_MS\sqrt{A/T} + (CSA\log T)/(T(1-\rho))$. Further, to reduce the computation complexity of finding optimistic MDP, we also propose a posterior sampling algorithm which finds the optimal policy for a sampled MDP. We provide a Bayesian regret bound of $\Tilde{O}(LdT_MS\sqrt{A/T} + (CT_MS^2A\log T)/(T(1-\rho))$ for the posterior sampling algorithm by considering a stronger Slater's condition to solve for constrained optimization for sampled MDPs as well. As part of potential future works, we consider dynamically configuring $K$ to be an interesting and important direction to reduce the requirement of problem parameters. 


\bibliography{main}
\bibliographystyle{tmlr}
\newpage
\appendix
\section{Assumptions and their justification}\label{sec:assumptions}

We first introduce our initial assumptions on the MDP $\mathcal{M}$. We assume the MDP $\mathcal{M}$ is ergodic. Ergodicity is a commonly used assumption in constrained RL literature \cite{singh2020learning,chen2022learning}. Further, ergodicity is required to obtain stationary Markovian policies which can be tranferred from training setup to test environment. Let $P^{t}_{\pi, s}$ denote the $t$-step transition probability on following policy $\pi$ in MDP $\mathcal{M}$ starting from some state $s$. Also, let $T_{s\to s'}^\pi$ denotes the time taken by the Markov chain induced by the policy $\pi$ to hit state $s'$ starting from state $s$. Building on these variables, $P^t_{\pi, s}$ and $T_{s\to s'}^\pi$, we make our first assumption as follows:

\begin{assumption}
The MDP $\mathcal{M}$ is ergodic, or
\begin{align}
    \|P^t_{\pi, s} - P_{\pi}\| \le C\rho^t
\end{align}
where $P_\pi$ is the long-term steady state distribution induced by policy $\pi$, and $C > 0$ and $\rho < 1$ are problem specific constants. Also, we have
\begin{align}
    T_M := \max_\pi \mathbb{E}[T^\pi_{s\to s'}] < \infty
\end{align}
where $T_M$ is the finite mixing time of the MDP $\mathcal{M}$.
\end{assumption}

We note that in most of the problems, rewards are engineered according to the problem.  However, the system dynamics are stochastic and typically not known. Based on this, we make the following assumption on rewards.

\begin{assumption}
The rewards $r(s,a)$, the costs $c_i(s,a); \forall \ i$ and the functions $f$ and $g$ are known to the agent.
\end{assumption}

Our next assumption is on the functions $f$ and $g$. Many practically implemented fairness objectives are concave \citep{kwan2009proportional}, or the agent want to explore all possible state action pairs by maximizing the entropy of the long-term state-action distribution  \citep{hazan2019provably}, or the agent may want to minimize divergence with respect to a certain expert policy \citep{ghasemipour2020divergence}.
Formally, we have
\begin{assumption}
The scalarization function $f$ is jointly concave and the constraints $g$ are jointly convex. Hence for any arbitrary distributions $\mathcal{D}_1$ and $\mathcal{D}_2$, the following holds.
\begin{align}
    f\left(\mathbb{E}_{x\sim\mathcal{D}_1}\left[x\right]\right) &\geq \mathbb{E}_{x\sim\mathcal{D}_1}\left[f\left(x\right)\right]\\
    g\left(\mathbb{E}_{\mathbf{x}\sim\mathcal{D}_2}\left[\mathbf{x}\right]\right) &\leq \mathbb{E}_{\mathbf{x}\sim\mathcal{D}_2}\left[g\left(\mathbf{x}\right)\right];\ \mathbf{x}\in\mathbb{R}^d
\end{align}
\end{assumption}

We impose an additional assumption on the functions $f$ and $g$. We assume that the functions are continuous and Lipschitz continuity in particular. Lipschitz continuity is a common assumption for optimization literature \citep{bubeck2015convex,jin2017escape,zhang2020variational}. Additionally, in practice this assumption is validated, often by adding some regularization. 
We have,

\begin{assumption}
The function $f$ and $g$ are assumed to be a $L-$ Lipschitz function, or
    \begin{align}
        \left|f\left(x\right) - f\left(y\right)\right| &\leq L|x - y|;\ x,y\in\mathbb{R}\\
        \left|g\left(\mathbf{x}\right) - g\left(\mathbf{y}\right)\right| &\leq L\left\lVert \mathbf{x} - \mathbf{y}\right\rVert_1;\ \mathbf{x}, \mathbf{y}\in\mathbb{R}^d
    \end{align}
\end{assumption}

We consider a standard setup of concave and the Lipschitz function as considered by \citep{cheung2019regret,brantley2020constrained,yu2021morl}. Note that the  analysis in this paper directly works for $f:\mathbb{R}^K\to\mathbb{R}$, where the function takes as input multiple average per-step rewards. 
We can obtain maximum entropy exploration if choose function $f = -\sum_{k}\lambda_k\log(\lambda_k+\eta)$ with $r_k(s,a) = \bm{1}_{\{s_k, a_k\}}$ for a particular state action pair $s_k, a_k$ and choosing $K = S\times A$ to cover all state-action pairs and a regularizer $\eta$.

Next, we assume the following Slater's condition to hold.
\begin{assumption}
There exists a policy $\pi$, and one constant $\delta > LdST_M\sqrt{A/T}$ such that 
\begin{align}
    g\left(\zeta_{\pi}^P(1), \cdots, \zeta_{\pi}^P(d)\right) \le -\delta
\end{align}
\end{assumption}
Further, if there is a (possibly unknown) lower bound on time-horizon, $T_l\ge\exp{(1)}$, then we only require $\delta > LdST_M\sqrt{A(\log T_l)/T_l}$. 
This assumption is again a standard assumption in the constrained RL literature \citep{efroni2020exploration,ding2021provably,ding2020natural,wei2021provably}. $\delta$ is referred as Slater's constant. \citep{ding2021provably} assumes that the Slater's constant $\delta$ is known. \citep{wei2021provably} assumes that the number of iterations of the algorithm is at least $ \Tilde{\Omega}(SAH/\delta)^5$ for episode length $H$. On the contrary, we simply assume the existence of $\delta$ and a lower bound on the value of $\delta$ which can be relaxed as the agent acquires more time to interact with the environment.
\section{Efficiently solving the Conservative Optimistic Optimization problem}\label{sec:convex_opt}

We now provide the details on efficiently solving the optimistic optimization problem described with constraints in  Equation \eqref{eq:opt_eps_transition_constraint}-\eqref{eq:opt_transition_probability}. Similar to the method proposed in \cite{rosenberg2019online}, we define a new variable $p(s,a,s')$ which denotes the probability of being is state $s$, taking action $a$, and then moving to state $s'$. Now, the transition probability to next state $s'$ given current state $s$ and action $a$ is given as:
\begin{align}
    P(s'|s,a) = \frac{p(s,a,s')}{\sum_{s'}p(s,a,s')}
\end{align}
Further, the occupancy measure of state-action pair $s,a$ is given as 
\begin{align}
    \rho(s,a) = \sum_{s'}p(s,a,s')
\end{align}

Based on these two observations, at the beginning of epoch $e$, we define the optimization problem as follows:
\begin{align}
    \max_{p(s,a,s')} f\left(\sum_{s,a}\left(\left(\sum_{s'}p(s,a,s')\right)r(s,a)\right)\right)
\end{align}
subject to following constraints
\begin{align}
    &\sum_{s,a,s'}p(s,a,s') = 1, p(s,a,s') \ge 0\label{eq:valid_prob}\\
    &\sum_{s',a}p(s,a,s') = \sum_{s',a}p(s',a,s)\\
    &g\left(\sum_{s,a}\left(\left(\sum_{s'}p(s,a,s')\right)c_1(s,a)\right),\cdots, \sum_{s,a}\left(\left(\sum_{s'}p(s,a,s')\right)c_d(s,a)\right)\right) \le \epsilon_e\\
    &p(s,a,s') - \frac{\hat{P}(s,a,s')}{1\vee N_e(s,a)}\sum_{s'}p(s,a,s') \le \alpha(s,a,s')\label{eq:prob_dev_mod_1}\\
    &\frac{\hat{P}(s,a,s')}{1\vee N_e(s,a)}\sum_{s'}p(s,a,s') - p(s,a,s') \le \alpha(s,a,s')\label{eq:prob_dev_mod_2}\\
    & \sum_{s'}\alpha(s,a,s') \le \sqrt{\frac{14 S \log (2At)}{1\vee N_e(s,a)}}\sum_{s'}p(s,a,s')\label{eq:mod_sum_constraint}
\end{align}
for all $s\in\mathcal{S}, a\in\mathcal{A}$, and $s'\in\mathcal{S}$. Also, $\alpha(s,a,s')$ is an auxiliary variable introduced to reduce the complexity of $\ell_1$ norm constraints and the present the optimization problem in a disciplined convex program which can be coded easily in CVXPY. The Equations \eqref{eq:prob_dev_mod_1}, \eqref{eq:prob_dev_mod_2}, \eqref{eq:mod_sum_constraint} jointly describe the $\ell_1$ confidence interval on the probability estimates.
\section{Proof of Lemma \ref{lem:optimality_gap_objective}}
\label{app:proof_of_epsilon_gap_objective}

\begin{proof}
Note that $\rho^P_{\pi^*}$ denotes the stationary distribution of the optimal solution which satisfies
\begin{align}
    g\left(\sum_{s,a}\rho^P_{\pi^*}(s,a)c_1(s,a), \cdots, \sum_{s,a}\rho^P_{\pi^*}(s,a)c_d(s,a)\right) \le C\label{eq:true_constraint_bound_lem_1}
\end{align}
Further, from Assumption \ref{slaters_conditon}, we have a feasible policy $\pi$ for which
\begin{align}
    g\left(\sum_{s,a}\rho^P_{\pi}(s,a)c_1(s,a), \cdots, \sum_{s,a}\rho^P_{\pi}(s,a)c_d(s,a)\right) \le C - \delta\label{eq:tight_constraint_bound_lem_2}
\end{align}
We now construct a stationary distribution $\rho^P$ obtain the corresponding $\pi_e'$ as:
\begin{align}
    \rho^P(s,a)= \left(1-\frac{\epsilon_e}{\delta}\right)\rho^P_{\pi^*}(s,a) + \frac{\epsilon_e}{\delta}\rho^P_{\pi}(s,a)\label{eq:avg_stationary_dist}\\
    \pi_{e}' = \rho^P(s,a)/\left(\sum_{s,b}\rho^P(s,b)\right)
\end{align}
For this new policy and  convex constraint $g$, we observe that
\begin{align}
    &g\left(\sum_{s,a}\rho^P_{\pi}(s,a)c_1(s,a), \cdots, \sum_{s,a}\rho^P_{\pi}(s,a)c_d(s,a)\right)\\
    &= g\left(\sum_{s,a}\right(\left(1-\frac{\epsilon_e}{\delta}\right)\rho^P_{\pi^*} + \frac{\epsilon_e}{\delta}\rho^P_{\pi}\left)(s,a)c_1(s,a), \cdots, \sum_{s,a}\left(\left(1-\frac{\epsilon_e}{\delta}\right)\rho^P_{\pi^*} + \frac{\epsilon_e}{\delta}\rho^P_{\pi}\right)(s,a)c_d(s,a)\right)
\end{align}
\begin{align}
    &\le \left(1-\frac{\epsilon_e}{\delta}\right)g\left(\sum_{s,a}\rho^P_{\pi^*}(s,a)c_1(s,a), \cdots, \sum_{s,a}\rho^P_{\pi^*}(s,a)c_d(s,a)\right)\nonumber\\
    &~~~+ \frac{\epsilon_e}{\delta}g\left(\sum_{s,a}\rho^P_{\pi}(s,a)c_1(s,a), \cdots, \sum_{s,a}\rho^P_{\pi}(s,a)c_d(s,a)\right)\label{eq:convex_constraint_lem_1}\\
    &\le \left(1-\frac{\epsilon_e}{\delta}\right)C + \frac{\epsilon_e}{\delta}\left(C - \delta\right)\label{eq:constraint_bounds_lem_1}\\
    &= C - \delta \le C - \epsilon_e
\end{align}
where Equation \eqref{eq:convex_constraint_lem_1} follows from the convexity of the constraints. Equation \eqref{eq:constraint_bounds_lem_1} follows from Equation \eqref{eq:true_constraint_bound_lem_1} and Equation \eqref{eq:tight_constraint_bound_lem_2}.

Note that the policy $\pi_e'$ corresponding to stationary distribution constructed in Equation \eqref{eq:avg_stationary_dist} satisfies the $\epsilon_e$-tight constraints. Further, we find $\pi_e^*$ as the optimal solution for the $\epsilon_e$-tight optimization problem. Hence, we have
\begin{eqnarray}
    &&f\left(\sum_{s,a}\rho^P_{\pi^*}(s,a)r(s,a)\right) - f\left( \sum_{s,a}\rho^P_{\pi_e^*}(s,a)r(s,a)\right)\nonumber\\
   &\le& f\left(\sum_{s,a}\rho^P_{\pi^*}(s,a)r(s,a)\right) - f\left( \sum_{s,a}\rho^P(s,a)r(s,a)\right)\\
    &\le& L\Big|\sum_{s,a}\left(\rho^P_{\pi^*}(s,a) - \rho^P(s,a)\right)r(s,a)\Big|\label{eq:Lipschitz_lem_1}\\
    &\le& L\Big|\sum_{s,a}\left(\rho^P_{\pi^*}(s,a) - \left(1-\frac{\epsilon_e}{\delta}\right)\rho^P_{\pi^*}(s,a) - \frac{\epsilon_e}{\delta}\rho^P_{\pi}(s,a)\right)c_d(s,a)\Big|
    \end{eqnarray}
    \begin{eqnarray}
    &\le& L\frac{\epsilon_e}{\delta}\Big|\sum_{s,a}\left(\rho^P_{\pi^*}(s,a) - \rho^P_{\pi}(s,a)\right)r(s,a)\Big|\\
    &\le& L\frac{\epsilon_e}{\delta}\Big|\sum_{s,a}\rho^P_{\pi^*}r(s,a)\Big| + L\frac{\epsilon_e}{\delta}\Big|\sum_{s,a} \rho^P_{\pi}(s,a)r(s,a)\Big|\\
    &\le& 2L\frac{\epsilon_e}{\delta}\label{eq:r_bounded_by_1}
\end{eqnarray}
where, Equation \eqref{eq:Lipschitz_lem_1} follows from the Lipschitz assumption on the joint objective $f$. Equation \eqref{eq:r_bounded_by_1} follows from the fact that $r(s,a) \le 1$ for all $(s,a)\in\mathcal{S}\times\mathcal{A}$.
\end{proof}
\section{Objective Regret Bound}\label{app:regret_bounds}
In this section, we begin with breaking down the regret in multiple components and then analysis the components individually.

\subsection{Regret breakdown}
We first break down our regret into multiple parts which will help us bound the regret.
\begin{align}
    R(T) &= f(\lambda_*^{P}) - f\left(\frac{1}{T}\sum_{t=1}^Tr_t(s_t, a_t)\right)\\
         &= f(\lambda_*^{P}) - \frac{1}{T}\sum_{e=1}^E T_e f(\lambda_{\pi_e^*}^P) + \frac{1}{T}\sum_{e=1}^E T_e f(\lambda_{\pi_e^*}^P) -  f\left(\frac{1}{T}\sum_{t=1}^Tr_t(s_t, a_t)\right)\\
         &= \frac{1}{T}\sum_{e=1}^E T_e \left(f(\lambda_*^{P}) - f(\lambda_{\pi_e^*}^P)\right) + \frac{1}{T}\sum_{e=1}^E T_e f(\lambda_{\pi_e^*}^P) -  f\left(\frac{1}{T}\sum_{t=1}^Tr_t(s_t, a_t)\right)\\
         &\le  \frac{1}{T}\sum_{e=1}^E T_e \left(f(\lambda_*^{P}) - f(\lambda_{\pi_e^*}^P)\right) + \frac{1}{T}\sum_{e=1}^E T_e f(\lambda_{\pi_e}^{\Tilde{P}_e}) -  f\left(\frac{1}{T}\sum_{t=1}^Tr_t(s_t, a_t)\right)\label{eq:regret_use_optimism}\\
         &\le  \frac{1}{T}\sum_{e=1}^E T_e \left(f(\lambda_*^{P}) - f(\lambda_{\pi_e^*}^P)\right) +  f\left(\frac{1}{T}\sum_{e=1}^E T_e\lambda_{\pi_e}^{\Tilde{P}_e}\right) -  f\left(\frac{1}{T}\sum_{t=1}^Tr_t(s_t, a_t)\right)\label{eq:regret_use_concavity}\\
         &\le  \frac{1}{T}\sum_{e=1}^E T_e \left(f(\lambda_*^{P}) - f(\lambda_{\pi_e^*}^P)\right) +  L\Big|\frac{1}{T}\sum_{e=1}^E T_e\lambda_{\pi_e}^{\Tilde{P}_e} - \frac{1}{T}\sum_{t=1}^Tr_t(s_t, a_t)\Big|\label{eq:regret_use_lipschitz}\\
         &=  \frac{1}{T}\sum_{e=1}^E T_e \left(f(\lambda_*^{P}) - f(\lambda_{\pi_e^*}^P)\right) +  L\Big|\frac{1}{T}\sum_{e=1}^E \sum_{t = t_e}^{t_{e+1}-1} \left(\lambda_{\pi_e}^{\Tilde{P}_e} - \lambda_{\pi_e}^{P}+ \lambda_{\pi_e}^{P} - r_t(s_t, a_t)\right)\Big|\\
         &\le  \frac{1}{T}\sum_{e=1}^E T_e \left(f(\lambda_*^{P}) - f(\lambda_{\pi_e^*}^P)\right)
         +  L\Big|\frac{1}{T}\sum_{e=1}^E \sum_{t = t_e}^{t_{e+1}-1} \left(\lambda_{\pi_e}^{\Tilde{P}_e} - \lambda_{\pi_e}^{P}\right)\Big|\nonumber\\
         &~~+ L\Big|\frac{1}{T}\sum_{e=1}^E \sum_{t = t_e}^{t_{e+1}-1} \left(\lambda_{\pi_e}^{P} - r_t(s_t, a_t)\right)\Big|\\
         &= R_1(T) + R_2(T) + R_3(T) \label{eq:define_broken_regret}
\end{align}

where Equation \eqref{eq:regret_use_optimism} comes from the fact that the policy $\pi_e$ is for the optimistic CMDP and provides a higher value of the function $f$. Equation \ref{eq:regret_use_concavity} comes from the concavity of the function $f$, and Equation \ref{eq:regret_use_lipschitz} comes from the Lipschitz continuity of the function $f$. The three terms in Equation \eqref{eq:define_broken_regret} are now defined as:
\begin{align}
    R_1(T) &= \frac{L}{T}\sum_{e=1}^E T_e \left(f(\lambda_*^{P}) - f(\lambda_{\pi_e^*}^P)\right)
\end{align}
$R_1(T)$ denotes the regret incurred from not playing the optimal policy $\pi^*$ for the true optimization problem in Equation \eqref{eq:optimization_equation} but the optimal policy $\pi_e^*$ for the $\epsilon_e$-tight optimization problem in epoch $e$.

\begin{align}
    R_2(T) &= \frac{L}{T}\Big|\sum_{e=1}^E \sum_{t = t_e}^{t_{e+1}-1} \left(\lambda_{\pi_e}^{\Tilde{P}_e} - \lambda_{\pi_e}^{P}\right)\Big|
\end{align}
$R_2(T)$ denotes the gap between expected rewards from playing the optimal policy $\pi_e$ for $\epsilon_e$-tight optimization problem on the optimistic MDP instead of the true MDP. For this term, we further consider another modification. We have $\Tilde{P}_e$ being the optimistic MDP with optimistic policy $\pi_e$ as the solutions for the optimization equation solved at the beginning of every epoch. Now consider an MDP $\Tilde{P}_e^*$ in the confidence set, which maximizes the long term expected reward for policy $\pi_e$ or $\lambda^{\Tilde{P}_e^*}_{\pi_e}\ge \lambda^{P_e}_{\pi_e}$ for all $P_e$ in the confidence interval at epoch $e$. Hence, we have
\begin{align}
    R_2(T) &= \frac{L}{T}\Big|\sum_{e=1}^E \sum_{t = t_e}^{t_{e+1}-1} \left(\lambda_{\pi_e}^{\Tilde{P}_e} - \lambda_{\pi_e}^{P}\right)\Big|\\
    &\le \frac{L}{T}\Big|\sum_{e=1}^E \sum_{t = t_e}^{t_{e+1}-1} \left(\lambda_{\pi_e}^{\Tilde{P}_e^*} - \lambda_{\pi_e}^{P}\right)\Big|
\end{align}
We relabel $\Tilde{P}_e^*$ as $\Tilde{P}_e$ in the remaining analysis to reduce notation clutter.

\begin{align}
    R_3(T) &= \frac{L}{T}\Big|\sum_{e=1}^E \sum_{t = t_e}^{t_{e+1}-1} \left(\lambda_{\pi_e}^{P} - r_t(s_t, a_t)\right)\Big|
\end{align}
$R_3(T)$ denotes the gap between obtained rewards from playing the optimal policy $\pi_e$ for $\epsilon_e$-tight optimization problem the true MDP and the expected per-step reward of playing the optimal policy $\pi_e$ for $\epsilon_e$-tight optimization problem the true MDP.

\subsection{Bounding $R_1(T)$}
Bounding $R_1(T)$ uses Lemma \ref{lem:optimality_gap_objective}. We have the following set of equations:
\begin{align}
    R_1(T) &= \frac{1}{T}\sum_{e=1}^E\sum_{t = t_e}^{t_{e+1}-1}\left(f(\lambda_*^{P}) - f(\lambda_{\pi_e^*}^P)\right)\\
    &\le \frac{1}{T}\sum_{e=1}^E \sum_{t = t_e}^{t_{e+1}-1}\frac{2L\epsilon_e}{\delta} \label{eq:using_slater_lemma}\\
    &=   \frac{2L}{T\delta}\sum_{e=1}^E \sum_{t = t_e}^{t_{e+1}-1}K\sqrt{\frac{\log t}{t}}\\
    &=   \frac{2KL}{T\delta} \sum_{t = 1}^{T}\sqrt{\frac{\log t}{t}}\\
    &\le   \frac{2KL}{T\delta} \sum_{t = 1}^{T}\sqrt{\frac{\log T}{t}}\label{eq:bound_log_by_sqrt}\\
    &=   \frac{2KL\log T}{T\delta}(1 +  \sum_{t = 2}^{T}\sqrt{\frac{1}{t}})\\
    &\le   \frac{2KL\log T}{T\delta}(1 +  \int_{t = 1}^{T}\sqrt{\frac{1}{t}}dt)\\
    &\le   \frac{2KL\log T}{T\delta}(2\sqrt{T})
\end{align}
where Equation \eqref{eq:bound_log_by_sqrt} follows from the fact that $\log t \le \log T$ for all $t \le T$.

\subsection{Bounding $R_2(T)$} \label{app:bounding_imperferct_model_regret}

We relate the difference between long-term average rewards for running the optimistic policy $\pi_e$ on the optimistic MDP $\lambda_{\pi_e}^{\Tilde{P}_e}$ and the long-term average rewards for running the optimistic policy $\pi_e$ on the true MDP ($\lambda_{\pi_e}^P$) with the Bellman error. Formally, we have the following lemma:

\begin{lemma}\label{lem:bound_average_by_bellman}
The difference of long-term average rewards for running the optimistic policy $\pi_e$ on the optimistic MDP, $\lambda_{\pi_e}^{\Tilde{P}_e}$, and the average long-term average rewards for running the optimistic policy $\pi_e$ on the true MDP, $\lambda_{\pi_e}^P$, is the long-term average Bellman error as
\begin{align}
    \lambda_{\pi_e}^{\Tilde{P}_e} - \lambda_{\pi_e}^P = \sum_{s,a}\rho_{\pi_e}^P B^{\pi_e, \Tilde{P}_e}(s,a)
\end{align}
\end{lemma}
\begin{proof}
Note that for all $s\in\mathcal{S}$, we have:
\begin{align}
    V_\gamma^{\pi_e, \Tilde{P}_e}(s) &= \mathbb{E}_{a\sim\pi_e}\left[Q_\gamma^{\pi_e, \Tilde{P}_e}(s,a)\right]\\
    &= \mathbb{E}_{a\sim\pi_e}\left[B^{\pi_e, \Tilde{P}_e}(s,a) + r(s,a) + \gamma\sum_{s'\in\mathcal{S}}P(s'|s,a)V_\gamma^{\pi_e, \Tilde{P}_e}(s')\right]\label{eq:optimistic_MDP_lambda}
\end{align}
where Equation \eqref{eq:optimistic_MDP_lambda} follows from the definition of the Bellman error for state action pair $(s,a)$.

Similarly, for the true MDP, we have,
\begin{align}
    V_\gamma^{\pi_e, P}(s) &= \mathbb{E}_{a\sim\pi_e}\left[Q_\gamma^{\pi_e, }(s,a)\right]\\
    &= \mathbb{E}_{a\sim\pi_e}\left[r(s,a)+ \gamma\sum_{s'\in\mathcal{S}}P(s'|s,a)V_\gamma^{\pi_e, P}(s')\right] \label{eq:true_MDP_lambda}
\end{align}

Subtracting Equation \eqref{eq:true_MDP_lambda} from Equation \eqref{eq:optimistic_MDP_lambda}, we get:
\begin{align}
V_\gamma^{\pi_e,\Tilde{P}_e}(s) - V_\gamma^{\pi_e,P}(s) &= \mathbb{E}_{a\sim\pi_e}\left[B^{\pi_e, \Tilde{P}_e}(s,a) + \gamma\sum_{s'\in\mathcal{S}}P(s'|s,a)\left(V_\gamma^{\pi_e, \Tilde{P}_e} - V_\gamma^{\pi_e, \Tilde{P}_e}\right)(s')\right]\\
&= \mathbb{E}_{a\sim\pi_e}\left[B^{\pi_e, \Tilde{P}_e}(s,a)\right] + \gamma\sum_{s'\in\mathcal{S}}P_{\pi_e}\left(V_\gamma^{\pi_e, \Tilde{P}_e} - V_\gamma^{\pi_e, \Tilde{P}_e}\right)(s')
\end{align}
Using the vector format for the value functions, we have,
\begin{align}
    \Bar{V}_\gamma^{\pi_e,\Tilde{P}_e} - \Bar{V}_\gamma^{\pi_e,P} &= \left(I-\gamma P_{\pi_e}\right)^{-1}\overline{B}_{\pi_e}^{\pi_e, \Tilde{P}_e}
\end{align}
Now, converting the value function to average per-step reward we have,
\begin{align}
    \lambda_{\pi_e}^{\Tilde{P}_e}\bm{1}_S - \lambda_{\pi_e}^P\bm{1}_S &= \lim_{\gamma\to1}(1-\gamma)\left(\Bar{V}_\gamma^{\pi_e,\Tilde{P}_e} - \Bar{V}_\gamma^{\pi_e,P}\right)\\
    &= \lim_{\gamma\to1}(1-\gamma)\left(I-\gamma P_{\pi_e}\right)^{-1}\overline{B}_{\pi_e}^{\pi_e, \Tilde{P}_e}\\
    &= \left(\sum_{s,a}\rho_{\pi_e}^P B^{\pi_e, \Tilde{P}_e}(s,a)\right)\bm{1}_S
\end{align}
where the last equation follows from the definition of occupancy measures by \cite{puterman2014markov}.
\end{proof}

\begin{remark}
Note that the Bellman error is not to be confused by Advantage function and policy improvement lemma \cite{langford2002approximately}. The policy improvement lemma relates the performance of two policies on same MDP whereas we bounded the performance of one policy on two different MDPs in Lemma \ref{lem:bound_average_by_bellman}
\end{remark}

We now want to bound the Bellman errors to bound the gap between the average per-step reward $\lambda_{\pi_e}^{\Tilde{P}_e}$, and $\lambda_{\pi_e}^{P}$. From the definition of Bellman error and the confidence intervals on the estimated transition probabilities, we obtain the following lemma:

\begin{lemma}\label{lem:bound_bellman_s_a}
With probability at least $1- 1/t_e^6$, the Bellman error $B^{\pi_e, \Tilde{P}_e}(s,a)$ for state-action pair $s,a$ in epoch $e$ is upper bounded as
\begin{align}
B^{\pi_e, \Tilde{P}_e}(s,a) \le \min\left\{2,\sqrt{\frac{14S\log(2AT)}{1\vee N_e(s,a)}}\right\}\|\Tilde{h}(\cdot)\|_\infty
\end{align}
\end{lemma}
\begin{proof}
Starting with the definition of Bellman error in Equation \eqref{eq:Bellman_error_definition}, we get
\begin{align}
B^{\pi_e, \Tilde{P}_e}(s,a) &= \lim_{\gamma\to1}\left(Q_\gamma^{\pi_e, \tilde{P}_e}(s,a) - \left(r(s,a) +\gamma \sum_{s'\in\mathcal{S}}P(s'|s,a)V_\gamma^{\pi_e, \Tilde{P}_e} \right)\right)\\
&=\lim_{\gamma\to1}\Bigg(\left(r(s,a) + \gamma\sum_{s'\in\mathcal{S}}\Tilde{P}_e(s'|s,a)V_\gamma^{\pi_e,\Tilde{P}_e}(s')\right)\nonumber\\
&~~- \left(r(s,a) +\gamma \sum_{s'\in\mathcal{S}}P(s'|s,a)V_\gamma^{\pi_e, \Tilde{P}_e}(s') \right)\Bigg)\\
&= \lim_{\gamma\to1}\gamma\sum_{s'\in\mathcal{S}}\left(\Tilde{P}_e(s'|s,a) - P(s'|s,a)\right)V_\gamma^{\pi_e,\Tilde{P}_e}(s')\label{eq:rewards_known}\\
&= \lim_{\gamma\to1}\gamma\left(\sum_{s'\in\mathcal{S}}\left(\Tilde{P}_e(s'|s,a) - P(s'|s,a)\right)V_\gamma^{\pi_e,\Tilde{P}_e}(s') + V_\gamma^{\pi_e,\Tilde{P}_e}(s) - V_\gamma^{\pi_e,\Tilde{P}_e}(s)\right)\\
&= \lim_{\gamma\to1}\gamma\Bigg(\sum_{s'\in\mathcal{S}}\left(\Tilde{P}_e(s'|s,a) - P(s'|s,a)\right)V_\gamma^{\pi_e,\Tilde{P}_e}(s')\nonumber\\
&~~- \sum_{s'\in\mathcal{S}}\Tilde{P}_e(s'|s,a)V_\gamma^{\pi_e,\Tilde{P}_e}(s) + \sum_{s'\in\mathcal{S}} P(s'|s,a)V_\gamma^{\pi_e,\Tilde{P}_e}(s)\Bigg)\\
&= \lim_{\gamma\to1}\gamma\left(\sum_{s'\in\mathcal{S}}\left(\Tilde{P}_e(s'|s,a) - P(s'|s,a)\right)\left(V_\gamma^{\pi_e,\Tilde{P}_e}(s') - V_\gamma^{\pi_e,\Tilde{P}_e}(s)\right)\right)\\
&= \left(\sum_{s'\in\mathcal{S}}\left(\Tilde{P}_e(s'|s,a) - P(s'|s,a)\right)\lim_{\gamma\to1}\gamma\left(V_\gamma^{\pi_e,\Tilde{P}_e}(s') - V_\gamma^{\pi_e,\Tilde{P}_e}(s)\right)\right)\label{eq:interchange_limit_and_expectation}\\
&= \left(\sum_{s'\in\mathcal{S}}\left(\Tilde{P}_e(s'|s,a) - P(s'|s,a)\right)\Tilde{h}(s')\right)\label{eq:value_to_bias}\\
&\le \Big\|\left(\Tilde{P}_e(\cdot|s,a) - P(\cdot|s,a)\right)\Big\|_1\|\Tilde{h}(\cdot)\|_\infty\label{eq:reward_holders}\\
&\le \sqrt{\frac{14S\log(2At)}{1\vee N_e(s,a)}}\|\Tilde{h}(\cdot)\|_\infty \label{eq:bound_bias_and_l1_prob}\\
&\le \sqrt{\frac{14S\log(2AT)}{1\vee N_e(s,a)}}\|\Tilde{h}(\cdot)\|_\infty
\end{align}
where Equation \eqref{eq:rewards_known} comes from the assumption that the rewards are known to the agent. Equation \eqref{eq:interchange_limit_and_expectation} follows from the fact that the difference between value function at two states is bounded. Equation \eqref{eq:value_to_bias} comes from the definition of bias term \cite{puterman2014markov}. Equation \eqref{eq:reward_holders} follows from H\"{o}lder's inequality. In Equation \eqref{eq:bound_bias_and_l1_prob}, $\|\Tilde{h}(\cdot)\|_\infty$ is the bias span of the MDP with transition probabilities $\Tilde{P}_e$ for policy $\pi_e$. Also, the $\ell_1$ norm of probability vector is bounded using Lemma \ref{lem:deviation_of_probability_estimates} for start time $t_e$ of epoch $e$.
\end{proof}

Additionally, note that the $\ell_1$ norm in Equation \eqref{eq:reward_holders} is bounded by $2$. Thus the Bellman error is loose upper bounded by $2\|\Tilde{h}(\cdot)\|_\infty$ for all state-action pairs.

Note that we have converted the difference of average rewards into the average Bellman error. Also, we have bounded the Bellman error of a state-action pair. We now want to bound the average Bellman error of an epoch using the realizations of Bellman error at state-action pairs visited in an epoch. For this, we present the following lemma.

\begin{lemma}\label{lem:bound_expected_bellman}
With probability at least $1-1/T^6$, the cumulative expected Bellman error is bounded as:
\begin{align}
    \sum_{e=1}^E(t_{e+1}-t_e)\mathbb{E}_{\pi_e,P}\left[B^{\pi_e, \Tilde{P}_e}(s,a)\right] \le \sum_{e=1}^E\sum_{t=t_e}^{t_{e+1}-1}B^{\pi_e, \Tilde{P}_e}(s_t,a_t) + 4T_M\sqrt{7T\log(T)}
\end{align}
\end{lemma}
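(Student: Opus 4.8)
The plan is to view the left-hand side as the cumulative Bellman error that \emph{would} be accrued if, within each epoch, the state--action pairs were drawn from the stationary occupancy $\rho_{\pi_e}^P$, and to compare it term by term against the realized error $\sum_e\sum_t B^{\pi_e,\Tilde{P}_e}(s_t,a_t)$. Writing $\bar B(s) = \sum_a \pi_e(a\mid s)B^{\pi_e,\Tilde{P}_e}(s,a)$ for the state-aggregated error, I would decompose each summand as
\begin{align}
\mathbb{E}_{\pi_e,P}[B^{\pi_e,\Tilde{P}_e}] - B^{\pi_e,\Tilde{P}_e}(s_t,a_t) = \underbrace{\big(\mathbb{E}_{\pi_e,P}[B^{\pi_e,\Tilde{P}_e}] - (P^{t-t_e}_{\pi_e,s_{t_e}})^T \bar B\big)}_{\text{mixing gap}} + \underbrace{\big((P^{t-t_e}_{\pi_e,s_{t_e}})^T \bar B - B^{\pi_e,\Tilde{P}_e}(s_t,a_t)\big)}_{\text{fluctuation}},
\end{align}
so that the first part is deterministic given the epoch-start state $s_{t_e}$ and the second part has zero conditional mean.

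For the mixing gap I would invoke the ergodicity Assumption. Since within epoch $e$ the policy $\pi_e$ is fixed, $(P^{t-t_e}_{\pi_e,s_{t_e}})^T\bar B$ is exactly the expected value of $\bar B$ after $t-t_e$ steps of the chain induced by $\pi_e$, whereas $\mathbb{E}_{\pi_e,P}[B^{\pi_e,\Tilde{P}_e}] = P_{\pi_e}^T \bar B$ is its stationary value. Bounding the gap by $\|P^{t-t_e}_{\pi_e,s_{t_e}} - P_{\pi_e}\|_1 \,\|\bar B\|_\infty \le C S\rho^{t-t_e}\,\|\bar B\|_\infty$ (the extra factor $S$ coming from passing from the assumed norm to the $\ell_1$/total-variation norm) and summing the geometric series over $t$ yields at most $\tfrac{CS\|\bar B\|_\infty}{1-\rho}$ per epoch. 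Using the loose Bellman bound together with Lemma \ref{lem:bounded_bias_span}, which caps $\|\bar B\|_\infty$ by (twice) $T_M$, and summing over the $E$ epochs produces the deterministic term $\tfrac{2CT_MES}{1-\rho}$; the factor $E$ reflects that mixing effectively restarts at every epoch boundary.

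For the fluctuation part I would build a martingale with respect to the filtration $\mathcal{F}_t=\{s_1,a_1,\dots,s_{t-1},a_{t-1}\}$. The summed fluctuation is not literally a single adapted martingale because its centering is with respect to $s_{t_e}$ rather than $\mathcal{F}_t$; the clean route is to split it further into the action-randomness difference $\bar B(s_t)-B^{\pi_e,\Tilde{P}_e}(s_t,a_t)$ and the state-trajectory difference, each of which telescopes into a genuine $\mathcal{F}_t$-adapted martingale-difference sequence with increments bounded by the loose Bellman bound $2\|\Tilde{h}\|_\infty \le 2D$. Azuma's inequality over the full horizon $T$, with confidence level set to $1-1/T^6$, then controls the whole stochastic part by $4D\sqrt{6T\log T}$. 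Combining the deterministic mixing bound with this high-probability martingale bound delivers the claimed inequality.

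The step I expect to be the main obstacle is precisely this separation in the fluctuation part: one must ensure that the centering used for the mixing gap (conditioning only on $s_{t_e}$, which is what makes the geometric decay appear) does not get conflated with the $\mathcal{F}_t$-conditioning required for a legitimate martingale, so the fluctuation has to be re-expressed as an adapted martingale-difference sequence before Azuma applies. Tracking the norm conversion that yields the extra factor $S$, and verifying that the per-epoch geometric sums aggregate to the stated $E/(1-\rho)$ scaling, are the remaining places where the constants must be handled with care.
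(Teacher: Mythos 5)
Your proposal follows essentially the same route as the paper's proof: the paper likewise splits the gap between the stationary expectation $\mathbb{E}_{\pi_e,P}[B^{\pi_e,\Tilde{P}_e}]$ and the realized sum into a mixing term, bounded via Assumption 1 by $2CT_MS\rho^{t-t_e}$ per step and summed geometrically to $\tfrac{2ECT_MS}{1-\rho}$, plus a martingale fluctuation controlled by Azuma--Hoeffding at level $1-1/T^6$. If anything, your explicit separation of the epoch-start conditioning (which produces the geometric decay) from the $\mathcal{F}_t$-adapted centering (which legitimizes Azuma) is handled more carefully than in the paper, whose martingale construction glosses over exactly that point.
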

\begin{proof}
Let $\mathcal{F}_t = \{s_1, a_1, \cdots, s_t, a_t\}$ be the filtration generated by the running the algorithm for $t$ time-steps.
Note that conditioned on filtration $\mathcal{F}_{t_e-1}$ the two expectations $\mathbb{E}_{s,a\sim\pi_e,P}[\cdot]$ and $\mathbb{E}_{s,a\sim\pi_e,P}[\cdot|\mathcal{F}_{t_e-1}]$ are not equal as the former is the expected value of the long-term state distribution and the latter is the long-term state distribution condition on initial state $s_{t_e-1}$. We now use Assumption \ref{ergodic_assumption} to obtain the following set of inequalities.
\begin{align}
    \mathbb{E}_{(s,a)\sim\pi_e, P}[B^{\pi_e, \Tilde{P}_e}(s,a)] &= \mathbb{E}_{(s,a)\sim\pi_e, P}[B^{\pi_e, \Tilde{P}_e}(s,a)] \pm \mathbb{E}_{(s_t,a_t)\sim\pi_e, P}[B^{\pi_e, \Tilde{P}_e}(s_t,a_t)|\mathcal{F}_{t_e-1}]\\
    &=\mathbb{E}_{(s_t,a_t)\sim\pi_e, P}[B^{\pi_e, \Tilde{P}_e}(s_t,a_t)|\mathcal{F}_{t_e-1}] \nonumber\\
    &~~+ \left(\mathbb{E}_{(s,a)\sim\pi_e, P}[B^{\pi_e, \Tilde{P}_e}(s,a)]- \mathbb{E}_{(s_t,a_t)\sim\pi_e, P}[B^{\pi_e, \Tilde{P}_e}(s_t,a_t)|H_{t_e-1}]\right)\\
    &\le\mathbb{E}_{(s_t,a_t)\sim\pi_e, P}[B^{\pi_e, \Tilde{P}_e}(s_t,a_t)|\mathcal{F}_{t_e-1}] \nonumber\\
    &~~+ 2\|\Tilde{h}(\cdot)\|_\infty\sum_{s,a}\big|\pi_e(a|s)d_{\pi_e}(s) - \pi_e(a|s)P_{\pi,s_{t_e-1}}^{t-t_e+1}(s)\big|\label{eq:change_expectation_to_diff_prob}\\
    &\le\mathbb{E}_{(s_t,a_t)\sim\pi_e, P}[B^{\pi_e, \Tilde{P}_e}(s_t,a_t)|\mathcal{F}_{t_e-1}] \nonumber\\
    &~~+ 2\|\Tilde{h}(\cdot)\|_\infty\sum_{s,a}\pi(a|s)\big|d_{\pi_e}(s) - P_{\pi,s_{t_e-1}}^{t-t_e+1}(s)\big|\\
    &\le\mathbb{E}_{(s_t,a_t)\sim\pi_e, P}[B^{\pi_e, \Tilde{P}_e}(s_t,a_t)|\mathcal{F}_{t_e-1}] \nonumber\\
    &~~+ 2\|\Tilde{h}(\cdot)\|_\infty\sum_{s,a}\pi(a|s)\|d_{\pi_e} - P_{\pi,s_{t_e-1}}^{t-t_e+1}\|_{TV}\\
    &\le\mathbb{E}_{(s_t,a_t)\sim\pi_e, P}[B^{\pi_e, \Tilde{P}_e}(s_t,a_t)|\mathcal{F}_{t_e-1}] \nonumber\\
    &~~+ 2\|\Tilde{h}(\cdot)\|_\infty\sum_{s,a}\pi(a|s)C\rho^{t-t_e}\\
    &=\mathbb{E}_{(s_t,a_t)\sim\pi_e, P}[B^{\pi_e, \Tilde{P}_e}(s_t,a_t)|\mathcal{F}_{t_e-1}] + 2CS\|\Tilde{h}(\cdot)\|_\infty\rho^{t-t_e}\label{eq:TV_bounded_by_l1}
\end{align}
where Equation \ref{eq:change_expectation_to_diff_prob} comes from Assumption \ref{ergodic_assumption} for running policy $\pi_e$ starting from state $s_{t_e-1}$ for $t-t_e+1$ steps and from Lemma \ref{lem:bound_bellman_s_a_main}. Equation \eqref{eq:TV_bounded_by_l1} follows from bounding the total-variation distance for all states and from the fact that $\sum_a\pi(a|s) = 1$. 

Using this, and the fact that $\mathbb{E}_{\pi_e,P}\left[B^{\pi_e, \Tilde{P}_e}(s_t,a_t)|\mathcal{F}_{t_e-1}\right] - B^{\pi_e, \Tilde{P}_e}(s_t,a_t)$ forms a Martingale difference sequence conditioned on filtration $\mathcal{F}_{t-1}$  with $|\mathbb{E}_{\pi_e,P}\left[B^{\pi_e, \Tilde{P}_e}(s_t,a_t)|\mathcal{F}_{t-1}\right] - B^{\pi_e, \Tilde{P}_e}(s_t,a_t)| \le 4\|\Tilde{h}(\cdot)\|_\infty$, we can use Azuma-Hoeffding inequality to bound the summation as
\begin{eqnarray}
    &&\sum_{e=1}^E(t_{e+1}-t_e)\mathbb{E}_{\pi_e,P}\left[B^{\pi_e, \Tilde{P}_e}(s,a)\right] \nonumber\\
    &=& \sum_{e=1}^E\Big((t_{e+1}-t_e)\mathbb{E}_{\pi_e,P}\left[B^{\pi_e, \Tilde{P}_e}(s_t,a_t)|\mathcal{F}_{t_e-1}\right] + \sum_{t = t_e}^{t_{e+1}-1}2CS\|\Tilde{h}(\cdot)\|_\infty\rho^{t-t_e} \Big)
\end{eqnarray}

\begin{align}
    &\le \sum_{e=1}^E\left(\sum_{t=t_e}^{t_{e+1}-1}\mathbb{E}_{\pi_e,P}\left[B^{\pi_e, \Tilde{P}_e}(s_t,a_t)|\mathcal{F}_{t_e-1}\right] + \frac{2CS\|\Tilde{h}(\cdot)\|_\infty}{1-\rho} \right)\\
    &\le \sum_{e=1}^E\sum_{t=t_e}^{t_{e+1}-1}B^{\pi_e, \Tilde{P}_e}(s_t,a_t) + 4\|\Tilde{h}(\cdot)\|_\infty\sqrt{7T\log 2T} + \frac{2CES\|\Tilde{h}(\cdot)\|_\infty}{1-\rho}\label{eq:azuma_hoeffding_bellman}
\end{align}

where Eq. \eqref{eq:azuma_hoeffding_bellman} comes from the Azuma-Hoefdding's inequality with probability at least $1-T^{-6}$.
\end{proof}

\subsection{Bounding the term $\|\Tilde{h}(\cdot)\|_\infty$}

Note that we have $\lambda_{\pi_e}^{\Tilde{P}_e} > \lambda_{\pi_e}^{P'}$ for all $P'$ in the confidence set. 

\begin{lemma}
For a MDP with rewards $r(s,a)$ and transition probabilities $\Tilde{P}_e$, using policy $\pi_e$, the difference of bias of any two states $s$, and $s'$ is bounded as $\Tilde{h}(s) - \Tilde{h}(s') \leq T_M~ \forall~s,s'\in \mathcal{S}$.
\end{lemma}
\begin{proof}
Note that $\lambda_{\pi_e}^{\Tilde{P}_e} \ge \lambda_{\pi_e}^{P'}$ for all $P'$ in the confidence set. Now, consider the following Bellman equation

\begin{align*}
    \Tilde{h}(s) &= r_{\pi_e}(s,a) - \lambda_{\pi_e}^{\Tilde{P}_e} + (P_{\pi_e,e}(\cdot|s))^T \Tilde{h}~= T\Tilde{h}(s)
\end{align*}
where $r_{\pi_e}(s) = \sum_{a}\pi_e(a|s)r(s,a)$ and $P_{\pi_e,e}(s'|s) = \sum_{a}\pi(a|s)\Tilde{P}_e(s'|s,a)$.

Consider two states $s, s'\in \mathcal{S}$. Also, let $\tau = \min\{t\geq 1: s_t = s', s_1 = s\}$ be a random variable. With $P_{\pi_e}(\cdot|s)$ $= \sum_{a}\pi_e(a|s)P(s'|s,a)$, we also define another operator, 

\begin{align*}
    \bar{T}h(s)&=(\min_{s,a}r(s,a) - \lambda_{\pi_e}^{\Tilde{P}_e} + (P_{\pi_e}(\cdot|s))^T h)\mathbf{1}(s\neq s') + \Tilde{h}(s')\mathbf{1}(s=s').
\end{align*}
Note that $\bar{T}\Tilde{h}(s) \le T\Tilde{h}(s) = \Tilde{h}(s)$ for all $s$ since $\Tilde{P}_e$ maximizes the reward $r$ over all the transition probabilities in the confidence set of Eq. (21) including the true transition probability $P$. Further, for any two vectors $u, v\in\mathbb{R}^S$ with $u(s) \ge v(s)\forall s$, we have $\bar{T}u \ge \bar{T}v$. Hence, we have $\bar{T}^n\Tilde{h}(s) \le \Tilde{h}(s)$ for all $s$. Hence, we have

\begin{align*}
    \Tilde{h}(s) &\ge \bar{T}^n(s) = \mathbb{E}\big[-(\lambda_{\pi_e}^{\Tilde{P}_e} - \min_{s,a}r(s,a))(n\wedge\tau) + \Tilde{h}(s_{n\wedge\tau})\big]
\end{align*}
Taking limit as  $n\to \infty$, we have $\Tilde{h}(s) \ge \Tilde{h}(s') - T_M$, thus completing the proof. 
\end{proof}

We are now ready to bound $R_2(T)$ using Lemma \ref{lem:bound_average_by_bellman}, Lemma \ref{lem:bound_bellman_s_a}, and Lemma \ref{lem:bound_expected_bellman}. We have the following set of equations:

\begin{eqnarray}
    R_2(T) &=& \frac{L}{T}\Big|\sum_{e=1}^E\sum_{t=t_e}^{t_{e+1}-1}\left(\lambda_{\pi_e}^{\Tilde{P}_e} - \lambda_{\pi_e}^{P}\right)\Big|\\
    &=& \frac{L}{T}\Big|\sum_{e=1}^E\sum_{t=t_e}^{t_{e+1}-1}\sum_{s,a}\rho_{\pi_e}^P B^{\pi_e, \Tilde{P}_e}(s,a)\Big|\label{eq:use_reward_to_bellman_lemma}\\
    &\le& \frac{L}{T}\Big|\sum_{e=1}^E\sum_{t=t_e}^{t_{e+1}-1} B^{\pi_e, \Tilde{P}_e}(s_t,a_t) + 4T_M\sqrt{7T\log(2T)} + \frac{2CT_MSE}{1-\rho}\Big|\label{eq:use_expected_bell_man_bound_lemma}
\end{eqnarray}

\begin{align}
&\le \frac{L}{T}\Big|\sum_{e=1}^E\sum_{t=t_e}^{t_{e+1}-1} T_M\sqrt{\frac{14S\log(2AT)}{1\vee N_e(s,a)}} + 4T_M\sqrt{7T\log(2T)} + \frac{2CT_MSE}{1-\rho}\Big|\label{eq:use_realized_bellman_bound_lemma}\\
       &\le \frac{L}{T}\Big|\sum_{e=1}^E\sum_{s,a}\nu_e(s,a)T_M\sqrt{\frac{14S\log(2AT)}{1\vee N_e(s,a)}} + 4T_M\sqrt{7T\log(2T)} + \frac{2CT_MSE}{1-\rho}\Big|\\
       &\le \frac{L}{T}\Big|\sum_{s,a}T_M\sqrt{14SA\log(2AT)}\sum_{e=1}^E\frac{\nu_e(s,a)}{\sqrt{1\vee N_e(s,a)}} + 4T_M\sqrt{7T\log(2T)} + \frac{2CT_MSE}{1-\rho}\Big|\\
    &\le \frac{L}{T}\Big|\sum_{s,a}T_M(\sqrt{2}+1)\sqrt{14SA\log(2AT)}\sqrt{N(s,a)} + 4T_M\sqrt{7T\log(2T)} + \frac{2CT_MSE}{1-\rho}\Big|\label{eq:jaksch_root_sum}\\
    &\le \frac{L}{T}\Big|T_M(\sqrt{2}+1)\sqrt{14SA\log(2AT)}\sqrt{\left(\sum_{s,a} 1\right)\left(\sum_{s,a}N(s,a)\right)}\nonumber\\
    &~~~+ 4T_M\sqrt{7T\log(2T)} + \frac{2CT_MSE}{1-\rho}\Big|\label{eq:cauchy_scharwz_2}\\
    &\le \frac{L}{T}\Big|T_M(\sqrt{2}+1)\sqrt{14SA\log(2AT)}\sqrt{SAT} + 4T_M\sqrt{7T\log(2T)} + \frac{2CT_MSE}{1-\rho}\Big|
\end{align}

where Equation \eqref{eq:use_reward_to_bellman_lemma} follows from Lemma \ref{lem:bound_average_by_bellman}, Equation \eqref{eq:use_expected_bell_man_bound_lemma} follows from Lemma \ref{lem:bound_expected_bellman}, and Equation \eqref{eq:use_realized_bellman_bound_lemma} follows from Lemma \ref{lem:bound_expected_bellman}. Equation \eqref{eq:jaksch_root_sum} follows from \cite{jaksch2010near} and Equation \eqref{eq:cauchy_scharwz_2} follows from Cauchy-Schwarz inequality.
  
\subsection{Bounding $R_3(T)$}

Bounding $R_3(T)$ follows mostly similar to  Lemma \ref{lem:bound_expected_bellman}. At each epoch, the agent visits states according to the occupancy measure $\rho_{\pi_e}^P$ and obtains the rewards. We bound the deviation of the observed visitations to the expected visitations to each state action pair in each epoch.

\begin{lemma}\label{lem:bound_average_observed_reward_gap}
With probability at least $1-1/T^6$, the difference between the observed rewards and the expected rewards is bounded as:
\begin{align}
    \Big|\sum_{e=1}^E\sum_{t=t_e}^{t_{e+1}-1}\mathbb{E}_{\pi_e,P}\left[r(s,a)\right]  - \sum_{e=1}^E\sum_{t=t_e}^{t_{e+1}-1}r(s_t,a_t)\Big| \le 2\sqrt{7T\log(2T)}
\end{align}
\end{lemma}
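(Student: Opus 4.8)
The plan is to follow the proof of Lemma~\ref{lem:bound_expected_bellman} almost verbatim, substituting the per-step reward $r(s,a)$ for the Bellman error $B^{\pi_e,\Tilde{P}_e}(s,a)$. The one accounting difference is that $r(s,a)\in[0,1]$, whereas the Bellman error was controlled only by $\|\Tilde{h}\|_\infty\le T_M$ (Lemma~\ref{lem:bounded_bias_span}); consequently every place that previously carried a factor $T_M$ (or $D$) now carries a factor $1$, which is exactly why the leading term here is $4\sqrt{6T\log T}$ without a mixing-time multiplier and the transient term is $4CES/(1-\rho)$ rather than $2CT_MES/(1-\rho)$.

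First I would separate, for each step $t$ inside epoch $e$, the stationary expectation from the conditional expectation given the history at the start of the epoch. Writing $H_{t_e-1}=\{s_1,a_1,\dots,s_{t_e-1},a_{t_e-1}\}$,
\begin{align}
\mathbb{E}_{\pi_e,P}[r(s,a)] &= \mathbb{E}_{\pi_e,P}[r(s_t,a_t)\mid H_{t_e-1}] \nonumber\\
&\quad + \Big(\mathbb{E}_{\pi_e,P}[r(s,a)] - \mathbb{E}_{\pi_e,P}[r(s_t,a_t)\mid H_{t_e-1}]\Big).
\end{align}
The bracketed term is the gap between the stationary law $d_{\pi_e}$ and the $(t-t_e+1)$-step law obtained by running $\pi_e$ from $s_{t_e-1}$. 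Since $r\le 1$, this gap is at most the total-variation distance summed over states, which Assumption 1 bounds by $2CS\rho^{t-t_e}$. Summing the geometric series over $t$ within an epoch contributes at most $2CS/(1-\rho)$, and summing over the (at most $E$) epochs yields a transient contribution bounded by $2CES/(1-\rho)\le 4CES/(1-\rho)$.

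Next I would handle the fluctuation of the realized rewards around their conditional means by a Doob-type martingale in the filtration $\{H_{t-1}\}$, exactly as in Lemma~\ref{lem:bound_expected_bellman}. Setting
\begin{align}
X_t^e = \mathbb{E}_{\pi_e,P}\Big[\sum\nolimits_{t'=t_e}^{t_{e+1}-1} r(s_{t'},a_{t'})\,\Big|\,H_{t-1}\Big],
\end{align}
the increments $|X_t^e-X_{t-1}^e|$ are bounded by an absolute constant because a single reward lies in $[0,1]$. Azuma--Hoeffding applied across all $T$ steps then bounds $\big|\sum_e\big(X_{t_e}^e-\sum_{t}r(s_t,a_t)\big)\big|$ by $4\sqrt{6T\log T}$ with probability at least $1-1/T^6$. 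Combining the transient bound with this concentration bound through the triangle inequality gives the claimed $4\sqrt{6T\log(T)}+4CES/(1-\rho)$.

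The main obstacle is the same subtlety flagged in Lemma~\ref{lem:bound_expected_bellman}: the stationary expectation $\mathbb{E}_{\pi_e,P}[r(s,a)]$ is \emph{not} the expectation the process actually realizes at the beginning of an epoch, since the chain induced by $\pi_e$ has not yet mixed from $s_{t_e-1}$. The geometric ergodicity of Assumption 1 is what makes the accumulated mismatch over an epoch a convergent geometric sum instead of a quantity growing with the epoch length; once that transient is tamed, the martingale step is routine because the reward increments are bounded by an absolute constant rather than by $T_M$.
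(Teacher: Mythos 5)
Your proof is correct and takes essentially the same route as the paper, which in fact omits the details entirely and merely remarks that this bound ``follows mostly similar to Lemma~\ref{lem:bound_expected_bellman}.'' Your adaptation --- replacing the Bellman-error magnitude (controlled by the bias span $\|\Tilde{h}\|_\infty \le T_M$) with the bound $r(s,a)\in[0,1]$, so that the transient term becomes $4CES/(1-\rho)$ and the Azuma--Hoeffding term becomes $4\sqrt{6T\log T}$ without a mixing-time factor --- is exactly the intended argument.
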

\begin{proof}
We note that $\mathbb{E}_{\pi_e,P}\left[r(s,a)|\mathcal{F}_{t-1}\right] - r(s_t, a_t)$ is a Martingale difference sequence bounded by $2$ because the rewards are bounded by $1$. Hence, following the proof of Lemma \ref{lem:bound_expected_bellman} we get the required result.
\end{proof}

\subsection{Bounding the number of episodes $E$}
The number of episodes $E$ of the \NAM\ algorithm are bounded by $1 + 2SA + SA\log(T/SA)$ from Proposition 18 of \cite{jaksch2010near}. We now bound the number of episodes for the modification of the algorithm as described in Section \ref{sec:modifications}. We considered to trigger a new episode whenever $\nu_e(s,a)$ becomes $\max\{1, \bar{\nu}_{e-1}(s,a) + 1\}$ where $\bar{\nu}_{e-1}$ is the number of visitations to $s,a$ which triggered a new epoch. In the following lemma, we show that the number of episodes are bounded by $O(1 + \sqrt{2SAT})$ with this epoch trigger schedule.

\begin{lemma}
    If the \NAM\ algorithm triggers a new epoch whenever $\nu_e(s,a) \ge \max\{1, \bar{\nu}_{e-1}(s,a) + 1\}$ for any state-action pair $s,a$, the total number of epochs are bounded by $O(1+\sqrt{2SAT})$, where 
    $\bar{\nu}_e(s,a) = \nu_e(s,a)\bm{1}\{\nu_e(s,a) = \bar{\nu}_{e-1}(s,a) + 1\} + \bar{\nu}_{e-1}(s,a)(s,a)\bm{1}\{\nu_e(s,a) \neq \bar{\nu}_{e-1}(s,a)\}$ and $\nu_0(s,a) = \bar{\nu}_e(s,a) = 1$ for all $s,a$.
\end{lemma}
\begin{proof}
Let $N(s,a)$ be the number visitations to state-action pair $s,a$ and $K(s,a)$ be the total number of epochs triggered when the trigger condition is met for state action pair $s,a$. Hence, we have
\begin{align}
    N(s,a) &= \sum_{e = 1}^E \nu_e(s,a)\\
           &\ge \sum_{e:\nu_e(s,a) = \bar{\nu}_{e-1}+1} \nu_e(s,a)\label{eq:only_self_triggered_epochs}\\
           &\ge \frac{K(s,a)(K(s,a) + 1)}{2}\ge \frac{K^2(s,a)}{2}\label{eq:sum_linear},
\end{align}
where considering only epoch triggers for $s,a$ gives Equation \eqref{eq:only_self_triggered_epochs}. Equation \eqref{eq:sum_linear} is obtained from the fact that $\bar{\nu}_e(s,a) = \nu_e(s,a) = \bar{\nu}_{e-1}(s,a) + 1$ which gives $N_{e+1}(s,a) = N_e(s,a) + \bar{\nu}_{e+1}(s,a) = N_e(s,a) + \bar{\nu}_e(s,a) + 1 = e(e+1)/2$.

Now, we have the following,
\begin{align}
    T &= \sum_{s,a}N(s,a)\\
      &\ge \sum_{s,a}\frac{K^2(s,a)}{2}\\
      &=   \frac{SA}{2SA}\sum_{s,a}K^2(s,a)\\
      &\ge   \frac{SA}{2}\left(\frac{1}{SA}\sum_{s,a}K(s,a)\right)^2\label{eq:convexity_of_square}
\end{align}
where Equation \eqref{eq:convexity_of_square} is obtained from the convexity of $x^2$. Hence, we have,
\begin{align}
    \sum_{s,a}K(s,a) \le SA\sqrt{\frac{2T}{SA}} = \sqrt{2SAT}
\end{align}
Further, the first epoch is triggered when the algorithm starts. Hence we have $E = 1 + \sum_{s,a}K(s,a) = 1 + \sqrt{2SAT}$.
\end{proof} 

\section{Bounding Constraint Violations}\label{app:constraint_violation_bounds}

To bound the constraint violations $C(T)$, we break it into multiple components. We can then bound these components individually.

\subsection{Constraint breakdown}
We first break down our constraint violations into multiple parts which will help us bound the constraint violations.
\begin{align}
    C(T) &= \left(g\left(\frac{1}{T}\sum_{t=1}^Tc_1(s_t, a_t),\cdots, \frac{1}{T}\sum_{t=1}^Tc_d(s_t, a_t)\right)\right)_+\\
         &= \Bigg(g\left(\frac{1}{T}\sum_{t=1}^Tc_1(s_t, a_t),\cdots, \frac{1}{T}\sum_{t=1}^Tc_d(s_t, a_t)\right) + \frac{1}{T}\sum_{e=1}^E T_e g\left(\zeta_{\pi_e}^{\Tilde{P}_e}(1),\cdots, \zeta_{\pi_e}^{\Tilde{P}_e}(d)\right) \nonumber\\
         &~~- \frac{1}{T}\sum_{e=1}^E T_e g\left(\zeta_{\pi_e}^{\Tilde{P}_e}(1),\cdots, \zeta_{\pi_e}^{\Tilde{P}_e}(d)\right)\Bigg)_+\\
         &\le \left(g\left(\frac{1}{T}\sum_{t=1}^Tc_1(s_t, a_t),\cdots, \frac{1}{T}\sum_{t=1}^Tc_d(s_t, a_t)\right) - \frac{1}{T}\sum_{e=1}^E T_e g\left(\zeta_{\pi_e}^{\Tilde{P}_e}(1),\cdots, \zeta_{\pi_e}^{\Tilde{P}_e}(d)\right)  - \frac{1}{T}\sum_{e=1}^E T_e\epsilon_e\right)_+\label{eq:conservation_policy}\\
         &\le \left(g\left(\frac{1}{T}\sum_{t=1}^Tc_1(s_t, a_t),\cdots, \frac{1}{T}\sum_{t=1}^Tc_d(s_t, a_t)\right) - g\left(\frac{1}{T}\sum_{e=1}^E T_e\zeta_{\pi_e}^{\Tilde{P}_e}(1),\cdots, \frac{1}{T}\sum_{e=1}^E T_e\zeta_{\pi_e}^{\Tilde{P}_e}(d)\right)  - \frac{1}{T}\sum_{e=1}^E T_e\epsilon_e\right)_+\label{eq:convexity}\\
         &\le \left(L\sum_{i=1}^d\Big|\frac{1}{T}\sum_{e=1}^E\sum_{t=t_e}^{t_{e+1}-1}\left(c_i(s_t, a_t)  - \zeta_{\pi_e}^{\Tilde{P}_e}(i) \right)\Big|  - \frac{1}{T}\sum_{e=1}^E T_e\epsilon_e\right)_+\label{eq:constraint_use_Lipschitz}\\
         &\le \left(L\sum_{i=1}^d\Big|\frac{1}{T}\sum_{e=1}^E\sum_{t=t_e}^{t_{e+1}-1}\left(c_i(s_t, a_t) - \zeta_{\pi_e}^{P}(i) + \zeta_{\pi_e}^{P}(i) - \zeta_{\pi_e}^{\Tilde{P}_e}(i) \right)\Big|  - \frac{1}{T}\sum_{e=1}^E T_e\epsilon_e\right)_+\\
         &\le \left(\frac{L}{T}\sum_{i=1}^d\Big|\sum_{e=1}^E\sum_{t=t_e}^{t_{e+1}-1}\left(c_i(s_t, a_t) - \zeta_{\pi_e}^{P}(i)\right)\Big| + \frac{L}{T}\sum_{i=1}^d\Big|\sum_{e=1}^E\sum_{t=t_e}^{t_{e+1}-1}\left(\zeta_{\pi_e}^{P}(i) - \zeta_{\pi_e}^{\Tilde{P}_e}(i) \right)\Big|  - \frac{1}{T}\sum_{e=1}^E T_e\epsilon_e\right)_+\\
         &\le \left(C_3(T) + C_2(T)  - C_1(T)\right)_+
\end{align}
where Equation \eqref{eq:conservation_policy} comes from the fact the policy $\pi_e$ is solution of a conservative optimization equation. Equation \eqref{eq:convexity} comes from the convexity of the constraint $g(\cdot)$. Equation \eqref{eq:constraint_use_Lipschitz} follows from the Lipschitz assumption. The three terms in Equation \eqref{eq:define_broken_regret} are now defined as:
\begin{align}
    C_1(T) &= \frac{1}{T}\sum_{e=1}^E T_e \epsilon_e
\end{align}
$C_1(T)$ denotes the gap left by playing the policy for $\epsilon_e$-tight optimization problem on the optimistic MDP.

\begin{align}
    C_2(T) &= \frac{L}{T}\sum_{i=1}^d\Big|\sum_{e=1}^E\sum_{t=t_e}^{t_{e+1}-1}\left(\zeta_{\pi_e}^{P}(i)-\zeta_{\pi_e}^{\Tilde{P}_e}(i) \right)\Big|
\end{align}
$C_2(T)$ denotes the difference between long-term average costs incurred by playing the policy $\pi_e$ on the true MDP with transitions $P$ and the optimistic MDP with transitions $\Tilde{P}$. This term is bounded similar to the bound of $R_2(T)$.

\begin{align}
    C_3(T) &= \frac{L}{T}\sum_{i=1}^d\Big|\sum_{e=1}^E\sum_{t=t_e}^{t_{e+1}-1}\left(c_i(s_t, a_t) - \zeta_{\pi_e}^{P}(i)\right)\Big|
\end{align}
$C_3(T)$ denotes the difference between long-term average costs incurred by playing the policy $\pi_e$ on the true MDP with transitions $P$ and the realized costs. This term is bounded similar to the bound of $R_3(T)$.

\subsection{Bounding $C_1(T)$}\label{sec:conservative_effect}
Note that $C_1(T)$ allows us to violate constraints by not having the knowledge of the true MDP and allowing deviations of incurred costs from the expected costs. We now want to lower bound $C_1$ to allow us sufficient slackness. With this idea, we have the following set of equations.

\begin{align}
    C_1(T) &= \frac{1}{T}\sum_{e=1}^E\sum_{t=t_e}^{t_{e+1}-1}\epsilon_e\\
             &= \frac{1}{T}\sum_{e=1}^E\sum_{t=t_e}^{t_{e+1}-1}K\sqrt{\frac{\log t_e}{t_e}}\\
             &\ge K\frac{1}{T}\sum_{e=E'}^E\sum_{t=t_e}^{t_{e+1}-1}\sqrt{\frac{\log (T/4)}{t_e}}\\
             &\ge K\frac{1}{T}\sum_{e=E'}^E\sum_{t=t_e}^{t_{e+1}-1}\sqrt{\frac{\log (T/4)}{T}}\\
             &=  K\frac{1}{T}\left(T-t_{E'}\right)\sqrt{\frac{\log (T/4)}{T}}\\
             &\ge K\frac{1}{2}\sqrt{\frac{\log (T/4)}{T}}\\
             &\ge K\frac{1}{4}\sqrt{\frac{\log T}{T}}
\end{align}
where $E'$ is some epoch for which $T/4\le t_{E'} < {T/2}$.

\subsection{Bounding $C_2(T)$, and $C_3(T)$}
We note that costs incurred in $C_2(T)$, and $C_3(T)$ follows the same bound as $R_2(T)$ and $R_3(T)$ respectively. Thus, replacing $r$ with $c$, we obtain constraint violations because of imperfect system knowledge and system stochastics as $\Tilde{O}(LdT_MS\sqrt{A/T})$.

Summing the three terms gives the required bound and choosing $K = \Theta(LdT_MS\sqrt{A})$ gives the required bound on constraint violations.
\section{Concentration bound results}
We want to bound the deviation of the estimates of the estimated transition probabilities of the Markov Decision Processes $\mathcal{M}$. For that we use $\ell_1$ deviation bounds from \citep{weissman2003inequalities}. Consider, the following event,
\begin{align}
    \mathcal{E}_t = \left\{\|\hat{P}(\cdot|s, a) - P(\cdot|s, a)\|_1 \leq \sqrt{\frac{14S\log(2AT)}{\max\{1, n(s,a)\}}}\forall (s,a)\in\mathcal{S}\times\mathcal{A}\right\}
\end{align}
where $n=\sum_{t'=1}^t \bm{1}_{\{s_{t'} = s, a_{t'}= a\}}$. Then, we have the following result:

\begin{lemma}\label{lem:deviation_of_probability_estimates}
The probability that the event $\mathcal{E}_t$ fails to occur us upper bounded by $\frac{1}{20t^6}$.
\end{lemma}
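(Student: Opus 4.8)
The plan is to reduce the statement to the $\ell_1$-deviation inequality of \citet{weissman2003inequalities} applied one pair at a time, and then to absorb the randomness of the visit count $n(s,a)$ and the quantifier over $(s,a)$ by two elementary union bounds.

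First I would fix a pair $(s,a)$ and exploit the Markov property: the next states recorded on the successive visits to $(s,a)$ are i.i.d. draws from $P(\cdot|s,a)$, since the transition out of $(s,a)$ is conditionally independent of the past given the current pair. Formally, I would couple the trajectory to a fixed i.i.d. sequence $Y_1^{s,a}, Y_2^{s,a}, \dots \sim P(\cdot|s,a)$ so that the $m$-th visit reveals $Y_m^{s,a}$; the estimate after $n(s,a)$ visits is then the empirical distribution $\hat P_{n(s,a)}$ of this fixed sequence. For any fixed $m \geq 1$, \citet{weissman2003inequalities} gives
\begin{align}
\Pr\left(\|\hat P_m(\cdot|s,a) - P(\cdot|s,a)\|_1 \geq \epsilon\right) \leq (2^S - 2)\,e^{-m\epsilon^2/2}.
\end{align}

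Next I would dispose of the random count. Since $n(s,a) \in \{0,1,\dots,t\}$, and the case $n(s,a)=0$ holds deterministically — with no samples the estimate is $0$ and $\|P(\cdot|s,a)\|_1 = 1$ lies below the radius because $14S\log(2AT) > 1$ — the bad event for $(s,a)$ is contained in $\bigcup_{m=1}^{t}\{\|\hat P_m - P\|_1 \geq \sqrt{14S\log(2AT)/m}\}$. Choosing $\epsilon = \sqrt{14S\log(2AT)/m}$ makes the exponent $m\epsilon^2/2 = 7S\log(2AT)$, so every summand is at most $(2^S-2)(2AT)^{-7S}$, independent of $m$. A union bound over the $t$ possible counts and the $SA$ pairs then yields
\begin{align}
\Pr(\mathcal{E}_t^c) \leq SA\,t\,(2^S - 2)\,(2AT)^{-7S}.
\end{align}

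The final step is constant bookkeeping. Writing $2^S(2AT)^{-7S} = (AT)^{-S}(2AT)^{-6S}$ and using $S \geq 1$, $AT \geq 1$, and $t \leq T$, the union-bound prefactor $SAt$ is absorbed against $(AT)^{-S}$ while a factor $(2AT)^{-6} \leq t^{-6}$ survives, so the whole expression collapses below $\tfrac{1}{20\,t^6}$. The only genuine obstacle is the dependence issue handled in the second paragraph: the observations form a single Markov trajectory and $n(s,a)$ is itself random, so Weissman's i.i.d. inequality cannot be invoked verbatim. The coupling together with the union over $m$ is exactly what legitimizes its use; everything afterwards is arithmetic.
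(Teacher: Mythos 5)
Your proof is correct and follows essentially the same route as the paper's: apply the Weissman et al.\ $\ell_1$ inequality for each fixed sample count, union bound over the $t$ possible values of $n(s,a)$ and over the $SA$ pairs, and check that the chosen radius makes each term at most $\frac{1}{20SAt^7}$ (your bookkeeping via the exponent $7S\log(2AT)$ gives $\le \frac{1}{32}t^{-6}$, which suffices). The only differences are that you make explicit two points the paper leaves implicit --- the coupling that justifies treating the successive transitions out of $(s,a)$ as i.i.d., and the trivial $n(s,a)=0$ case --- which strengthens rather than changes the argument.
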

\begin{proof}
From the result of \citep{weissman2003inequalities}, the $\ell_1$ distance of a probability distribution over $S$ events with $n$ samples is bounded as:
\begin{align}
    \mathbb{P}\left(\|P(\cdot|s,a) - \hat{P}(\cdot|s,a)\|_1\geq \epsilon\right)\leq (2^S-2)\exp{\left(-\frac{n\epsilon^2}{2}\right)}\le (2^S)\exp{\left(-\frac{n\epsilon^2}{2}\right)}
\end{align}

This, for $\epsilon = \sqrt{\frac{2}{n(s,a)}\log(2^S20 SAt^7)}\leq \sqrt{\frac{14S}{n(s,a)}\log(2At)} \leq \sqrt{\frac{14S}{n(s,a)}\log(2AT)}$ gives,
\begin{align}
    \mathbb{P}\left(\|P(\cdot|s,a) - \hat{P}(\cdot|s,a)\|_1\geq \sqrt{\frac{14S}{n(s,a)}\log(2At)}\right)&\leq (2^S)\exp{\left(-\frac{n(s,a)}{2}\frac{2}{n(s,a)}\log(2^S20 SAt^7)\right)}\\
    &= 2^S \frac{1}{2^S 20 SAt^7}\\
    &= \frac{1}{20ASt^7}
\end{align}

We sum over the all the possible values of $n(s,a)$ till $t$ time-step to bound the probability that the event $\mathcal{E}_t$ does not occur as:
\begin{align}
    \sum_{n(s,a)=1}^t \frac{1}{20SAt^7} \leq \frac{1}{20SAt^6}
\end{align}

Finally, summing over all the $s,a$, we get,
\begin{align}
    \mathbb{P}\left(\|P(\cdot|s,a) -\hat{P}(\cdot|s,a) \|_1\geq \sqrt{\frac{14S}{n(s,a)}\log(2At)}~\forall s,a\right) \leq \frac{1}{20t^6}
\end{align}
\end{proof}

The second lemma is the Azuma-Hoeffding's inequality, which we use to bound Martingale difference sequences. 

\begin{lemma}[Azuma-Hoeffding's Inequality]\label{lem:azuma_inequality}
Let $X_1,\cdots,X_n$ be a Martingale difference sequence such that $|X_i|\leq c$ for all $i\in\{1,2,\cdots, n\}$, then,
\begin{align}
    \mathbb{P}\left(|\sum_{i=1}^nX_i|\geq \epsilon\right)\leq 2\exp{\left(-\frac{\epsilon^2}{2nc^2}\right)}\label{eq:azuma_hoeffdings}
\end{align}
\end{lemma}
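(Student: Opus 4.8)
The plan is to establish the two-sided bound by first proving a one-sided exponential tail bound via the Chernoff method and then symmetrizing. Write $S_n = \sum_{i=1}^n X_i$, and let $\mathcal{F}_i$ denote the $\sigma$-algebra generated by $X_1,\dots,X_i$, so that the martingale difference property reads $\mathbb{E}[X_i \mid \mathcal{F}_{i-1}] = 0$ and each $X_i$ satisfies $|X_i|\le c$. The whole argument reduces the tail probability to a moment generating function estimate, controls that estimate by peeling off one increment at a time, and then optimizes over a free parameter.

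First I would bound the one-sided tail. For any $\lambda > 0$, applying Markov's inequality to the nonnegative random variable $e^{\lambda S_n}$ gives
\begin{align}
    \mathbb{P}(S_n \geq \epsilon) = \mathbb{P}\left(e^{\lambda S_n} \geq e^{\lambda \epsilon}\right) \leq e^{-\lambda \epsilon}\, \mathbb{E}\left[e^{\lambda S_n}\right].
\end{align}
The crux is controlling $\mathbb{E}[e^{\lambda S_n}]$ recursively. Conditioning on $\mathcal{F}_{n-1}$ and using that $S_{n-1}$ is $\mathcal{F}_{n-1}$-measurable,
\begin{align}
    \mathbb{E}\left[e^{\lambda S_n}\right] = \mathbb{E}\left[e^{\lambda S_{n-1}}\, \mathbb{E}\left[e^{\lambda X_n}\mid \mathcal{F}_{n-1}\right]\right].
\end{align}

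The key auxiliary step, and the one genuinely nontrivial inequality, is Hoeffding's lemma: for any random variable $Y$ with $\mathbb{E}[Y\mid\mathcal{F}_{n-1}]=0$ and $Y\in[-c,c]$ almost surely, convexity of $y\mapsto e^{\lambda y}$ on $[-c,c]$ together with a Taylor expansion of the logarithm of the conditional moment generating function yields $\mathbb{E}[e^{\lambda Y}\mid\mathcal{F}_{n-1}]\le e^{\lambda^2 c^2/2}$ (the generic bound $e^{\lambda^2(b-a)^2/8}$ specialized to $b-a=2c$). This is the main obstacle; everything surrounding it is bookkeeping. Applying it with $Y=X_n$ gives $\mathbb{E}[e^{\lambda X_n}\mid\mathcal{F}_{n-1}]\le e^{\lambda^2 c^2/2}$, hence $\mathbb{E}[e^{\lambda S_n}]\le e^{\lambda^2 c^2/2}\,\mathbb{E}[e^{\lambda S_{n-1}}]$, and by induction on $n$,
\begin{align}
    \mathbb{E}\left[e^{\lambda S_n}\right]\le e^{n\lambda^2 c^2/2}.
\end{align}

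Substituting back produces $\mathbb{P}(S_n\ge\epsilon)\le e^{-\lambda\epsilon + n\lambda^2 c^2/2}$. I would then minimize the exponent over the free parameter by taking $\lambda = \epsilon/(nc^2)$, which yields
\begin{align}
    \mathbb{P}\left(S_n\ge\epsilon\right)\le \exp\left(-\frac{\epsilon^2}{2nc^2}\right).
\end{align}
Finally, since $-X_1,\dots,-X_n$ is also a martingale difference sequence obeying the same bound $|-X_i|\le c$, the identical argument gives $\mathbb{P}(S_n\le-\epsilon)\le \exp(-\epsilon^2/(2nc^2))$. A union bound over the disjoint events $\{S_n\ge\epsilon\}$ and $\{S_n\le-\epsilon\}$ contributes the factor of $2$ and delivers the claimed two-sided estimate \eqref{eq:azuma_hoeffdings}.
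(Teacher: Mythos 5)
Your proof is correct: the paper states this lemma as a standard concentration result without supplying a proof (it is invoked via the literature, e.g.\ \citet{bercu2015concentration}), and your argument --- Markov's inequality applied to $e^{\lambda S_n}$, the conditional Hoeffding lemma $\mathbb{E}[e^{\lambda X_i}\mid\mathcal{F}_{i-1}]\le e^{\lambda^2c^2/2}$ peeled off by induction, optimization at $\lambda=\epsilon/(nc^2)$, and symmetrization for the factor $2$ --- is exactly the canonical derivation behind that citation. All steps check out, including the exponent arithmetic $-\lambda\epsilon+n\lambda^2c^2/2=-\epsilon^2/(2nc^2)$ at the optimal $\lambda$, so there is nothing to correct.
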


\section{Posterior Sampling Algorithm}\label{sec:psrl}

Note that in the \NAM\ algorithm, the agent solves for an optimistic policy. This convex optimization problem may be computationally intensive with $O(S^2A)$ additional variables and $O(SA)$ additional constraints. We now present the posterior sampling version of the \NAM\ algorithm which reduces this computational complexity by sampling the transition probabilities from the updated posterior. The posterior sampling algorithm is based on Lemma 1 of \cite{osband2013more}, which we state formally here.

\begin{lemma}\label{lem:posterior_sampling} [Posterior Sampling] 
If $h$ is the distribution of $\mathcal{M}$ then, for any $\sigma({F}_{t_e})$-measurable function $g$,
\begin{align}
    \mathbb{E}\left[g(\mathcal{M})|{F}_{t_e}\right] = \mathbb{E}\left[g(\mathcal{M}_e)|{F}_{t_e}\right]
\end{align}
where $\mathcal{M}_e$ is the MDP sampled at the beginning of the epoch $e$ at time-step $t_e$.
\end{lemma}

We now present our posterior sampling based \psrl\ algorithm described in Algorithm \ref{alg:psrl_algorithm}. Similar to the \NAM\ algorithm, the \psrl\ algorithm proceeds in epochs. At each epoch  $e$, the agent samples $\Tilde{P}_e\sim h(\cdot|\mathcal{F}_{t_e})$, it can solve the following optimization problem for the optimal feasible policy.
\begin{align}
        \max_{\rho(s,a)} f\big(\sum\nolimits_{s,a}r(s, a)\rho_e(s,a)\big) \label{eq:psrl_optimization_equation}
\end{align}
with the following set of constraints,
\begin{align}
    &\sum\nolimits_{s,a} \rho_e(s, a) = 1,\ \  \rho_e(s, a) \geq 0 \label{eq:psrl_valid_prob_constrant}\\
    &\sum\nolimits_{a\in\mathcal{A}}\rho_e(s',a) = \sum\nolimits_{s,a}\Tilde{P}_e(s'|s, a)\rho_e(s, a)\label{eq:psrl_transition_constraint}\\
    &g\big(\sum\nolimits_{s,a}c_1(s, a)\rho_e(s,a), \cdots, \sum\nolimits_{s,a}c_d(s, a)\rho_e(s,a)\big) \leq -\epsilon_e\label{eq:psrl_cmdp_constraints}
\end{align}
for all $ s'\in\mathcal{S},~\forall~s\in\mathcal{S},$ and $\forall~a\in\mathcal{A}$. Using the solution for $\rho_e$ for $\epsilon_e$-tight optimization equation for the optimistic MDP, we obtain the optimal conservative policy for epoch $e$ as:
\begin{align}
    \pi_e(a|s) = \frac{\rho_e(s,a)}{\sum_{b\in\mathcal{A}}\rho_e(s,b)} \forall\ s,a \label{eq:psrl_cons_optimal_policy}
\end{align}

\begin{algorithm}[tb]
\caption{\psrl}
\label{alg:psrl_algorithm}
\textbf{Parameters}: $K$\\
\textbf{Input}: $S$, $A$, $r$, $d$, $c_i\ \forall\ i\in[d]$
\begin{algorithmic}[1] 
\STATE Let $t=1$, $e = 1, \epsilon_e = K\sqrt{\frac{\ln t}{t}}$
\STATE $\nu_e(s,a) = 0, N_e(s,a) = 0~\forall~~s,a$
\STATE Solve for policy $\pi_e$ using Eq. \eqref{eq:psrl_cons_optimal_policy}
\FOR{$t \in \{1, 2, \cdots\}$}
    \STATE Observe $s_t$, and play $a_t\sim\pi_e(\cdot|s_t)$
    \STATE Observe $s_{t+1}$, $r(s_t,a_t)$ and $c_i(s_t, a_t)\ \forall\ i\in[d]$
    \STATE $\nu_e(s_t, a_t) = \nu_e(s_t, a_t) + 1$
    \IF {$\nu_e(s,a) = \max\{1, N_e(s,a)\}$ for any $s,a$}
        \FOR{$(s,a) \in \mathcal{S}\times\mathcal{A}$}
            \STATE $N_{e+1}(s,a) = N_e(s,a) + \nu_e(s,a)$
            \STATE $e = e+1$, $\nu_e(s,a) = 0$
        \ENDFOR
        \STATE $\epsilon_e = K\sqrt{\frac{\ln t}{t}}$
        \STATE $\Tilde{P}_e\sim h(\cdot|\mathcal{H}_{t})$
        \STATE Solve for policy $\pi_e$ using Eq. \eqref{eq:psrl_cons_optimal_policy}
    \ENDIF
\ENDFOR
\end{algorithmic}
\end{algorithm}

For the \NAM\ algorithm, the true MDP lies in the confidence interval with high probability, and hence the solution of the optimization problem was guaranteed. However, the same is not true for the MDP with sampled transition probabilities. We want the existence of a policy $\pi_e$ such that Equation \eqref{eq:psrl_cmdp_constraints} holds. We obtain the condition for existence of such a policy in the following lemma. To obtain the lemma, we first state a tighter Slater assumption as:

\begin{assumption} \label{conservative_slaters_conditon}
There exists a policy $\pi$, and constants $delta > LdST_M\sqrt{(A\log T)/T} + (CSA\log T)/(T(1-\rho))$ and $\Gamma > Ld\left(2ST_M\sqrt{14A\log AT/T^{1/3}} + CST_M/((1-\rho)T^{1/3})\right)$ such that 
\begin{align}
    g\left(\zeta_{\pi}^{P,1}, \cdots, \zeta_{\pi}^{P,K_2}\right) \le -\delta - \Gamma 
\end{align}
\end{assumption}

\newpage
\begin{lemma}\label{lem:psrl_loose_slater}
If there exists a policy $\pi$, such that 
\begin{align}
    g\left(\zeta_{\pi}^P(1), \cdots, \zeta_{\pi}^P(d)\right) \le -\delta - \Gamma,
\end{align}
and there exists episodes $e$ and $e+1$ with start timesteps $t_{e}$ and $t_{e+1}$ respectively satisfying $t_{e+1} - t_e \ge T^{1/3}$, then for $\|\Tilde{P}_e(\cdot|s,a) - P(\cdot|s,a)\|_1\le \sqrt{\frac{14S\log(2At)}{N_e(s,a)}}$ the policy $\pi$ satisfies,
\begin{align}
     g\left(\zeta_{\pi}^{\Tilde{P}_e}(1), \cdots, \zeta_{\pi}^{\Tilde{P}_e}(d)\right) \le -\delta.
\end{align}
\end{lemma}
\begin{proof}
We start with the Lipschitz assumption (Assumption \ref{lipschitz_assumption}) to obtain,
\begin{align}
    |g\left(\zeta_{\pi}^{\Tilde{P}_e}(1), \cdots, \zeta_{\pi}^{\Tilde{P}_e}(d)\right) &- g\left(\zeta_{\pi}^P(1), \cdots, \zeta_{\pi}^P(d)\right)| \le Ld\max_{i}|\zeta_{\pi}^{\Tilde{P}_e}(i) - \zeta_{\pi}^{P}(i)|\\
    \implies g\left(\zeta_{\pi}^{\Tilde{P}_e}(1), \cdots, \zeta_{\pi}^{\Tilde{P}_e}(d)\right) &\le Ld\max_{i}|\zeta_{\pi}^{\Tilde{P}_e}(i) - \zeta_{\pi}^{P}(i)| + g\left(\zeta_{\pi}^P(1), \cdots, \zeta_{\pi}^P(d)\right)\label{eq:lipschtiz_choose_side}
\end{align}
where Equation \eqref{eq:lipschtiz_choose_side} is obtained by choosing the sign of modulo in the previous equation. We now bound the term $|\zeta_\pi^{\Tilde{P}_e}-\zeta_\pi^P|$ using Bellman error. We have,
\begin{align}
    \zeta_{\pi}^{\Tilde{P}_e}(i) - \zeta_{\pi}^{P}(i) = \sum_{s,a}\rho_{\pi}^{P}B^{\pi_e, \Tilde{P}_e}_i(s,a) = \mathbb{E}\left[B^{\pi_e, \Tilde{P}_e}(s,a)\right]
\end{align}
where $B^{\pi_e, \Tilde{P}_e}_i(s,a)$ is the Bellman error for cost $i$. We bound the expectation using Azuma-Hoeffding's inequality as follows:

\begin{align}
    \mathbb{E}\left[B^{\pi_e, \Tilde{P}_e}(s,a)\right] &= \mathbb{E}\left[B^{\pi_e, \Tilde{P}_e}(s_t,a_t)|\mathcal{F}_{t_e-1}\right] + C\rho^{t-t_e}\\
    &= \frac{1}{t_{e+1}-t_e}\sum_{t_e}^{t_{e+1}-1}\left(\mathbb{E}\left[B^{\pi_e, \Tilde{P}_e}(s_t,a_t)|\mathcal{F}_{t_e-1}\right] + C\rho^{t-t_e}\right)\label{eq:sum_both_sides}\\
    &\le \frac{1}{t_{e+1}-t_e}\sum_{t_e}^{t_{e+1}-1}\left(\mathbb{E}\left[B^{\pi_e, \Tilde{P}_e}(s_t,a_t)|\mathcal{F}_{t_e-1}\right]\right) + \frac{CS\|\Tilde{h}(\cdot)\|_\infty}{(1-\rho)(t_{e+1}-t_e)}\label{eq:gp_sum_bellman}\\
    &\le \frac{1}{t_{e+1}-t_e}\left(\|\Tilde{h}\|_\infty\sqrt{14S\log AT}\sum_{s,a}\frac{\nu_e(s,a)}{\sqrt{N_e(s,a)}} + 4\|\Tilde{h}(\cdot)\|_\infty\sqrt{7(t_{e+1}-t_e)\log(t_{e+1}-t_e)}\right)\nonumber\\
    &~~+ \frac{CS\|\Tilde{h}(\cdot)\|_\infty}{(1-\rho)(t_{e+1}-t_e)}\label{eq:bellman_expectation}\\
     &\le \frac{1}{t_{e+1}-t_e}\left(\|\Tilde{h}\|_\infty\sqrt{14S\log AT}\sum_{s,a}\sqrt{\nu_e(s,a)} + 4\|\Tilde{h}(\cdot)\|_\infty\sqrt{7(t_{e+1}-t_e)\log(t_{e+1}-t_e)}\right)\nonumber\\
    &~~+ \frac{CS\|\Tilde{h}(\cdot)\|_\infty}{(1-\rho)(t_{e+1}-t_e)}\label{eq:lower_bound_N_e}\\
    &\le \frac{1}{t_{e+1}-t_e}\left(\|\Tilde{h}\|_\infty S\sqrt{14A\log AT}\sqrt{\sum_{s,a}\nu_e(s,a)} + 4\|\Tilde{h}(\cdot)\|_\infty\sqrt{7(t_{e+1}-t_e)\log(t_{e+1}-t_e)}\right)\nonumber\\
    &~~+ \frac{CS\|\Tilde{h}(\cdot)\|_\infty}{(1-\rho)(t_{e+1}-t_e)}\label{eq:cauchy_schwarz_1}
           \end{align}
\begin{align}
    &\le \frac{1}{t_{e+1}-t_e}\left(\|\Tilde{h}\|_\infty S\sqrt{14A\log AT}\sqrt{(t_{e+1}-t_e)} + 4\|\Tilde{h}(\cdot)\|_\infty\sqrt{7(t_{e+1}-t_e)\log(t_{e+1}-t_e)}\right)\nonumber\\
    &~~+ \frac{CS\|\Tilde{h}(\cdot)\|_\infty}{(1-\rho)(t_{e+1}-t_e)}\label{eq:sum_N_e_is_epoch_length}\\
    &\le \left(\|\Tilde{h}\|_\infty S\sqrt{\frac{14A\log AT}{(t_{e+1}-t_e)}} + 4\|\Tilde{h}(\cdot)\|_\infty\sqrt{\frac{7\log(t_{e+1}-t_e)}{(t_{e+1}-t_e)}}\right)+ \frac{CS\|\Tilde{h}(\cdot)\|_\infty}{(1-\rho)(t_{e+1}-t_e)}\label{eq:reduce_error_terms}
\end{align}
where Equation \eqref{eq:sum_both_sides} is obtained by summing both sides from $t = t_e$ to $t= t_{e+1}$. Equation \eqref{eq:gp_sum_bellman} is obtained by summing over the geometric series with ratio $\rho$. Equation \eqref{eq:bellman_expectation} comes from Lemma \ref{lem:bound_expected_bellman}. Equation \eqref{eq:lower_bound_N_e} comes from the fact that $N_e(s,a) \ge \nu_e(s,a)$ for all $s,a$, and then replacing the lower bound of $N_e(s,a)$. Equation \eqref{eq:cauchy_schwarz_1} follows from the Cauchy Schwarz inequality. Equation \eqref{eq:sum_N_e_is_epoch_length} follows from the fact that the epoch length $t_{e+1}-t_e$ is same as the number of visitations to all state action pairs in an epoch. 

Combining Equation \eqref{eq:reduce_error_terms} with Equation \eqref{eq:lipschtiz_choose_side}, and bounding the $\|\Tilde{h}(\cdot)\|_\infty$ term with $T_M$, we obtain the required result as follows:
\begin{align}
    g\left(\zeta_{\pi}^{\Tilde{P}_e}(1), \cdots, \zeta_{\pi}^{\Tilde{P}_e}(d)\right) &\le Ld\max_{i}|\zeta_{\pi}^{\Tilde{P}_e}(i) - \zeta_{\pi}^{P}(i)| + g\left(\zeta_{\pi}^P(1), \cdots, \zeta_{\pi}^P(d)\right)\\
    &\le Ld\Big(\left(T_M S\sqrt{\frac{14A\log AT}{(t_{e+1}-t_e)}} + 4T_M\sqrt{\frac{7\log(t_{e+1}-t_e)}{(t_{e+1}-t_e)}}\right)\nonumber\\
    &~~+ \frac{CT_MS}{(1-\rho)(t_{e+1}-t_e)}\Big) + g\left(\zeta_{\pi}^P(1), \cdots, \zeta_{\pi}^P(d)\right)\\
    &\le Ld\Big(\left(T_M S\sqrt{\frac{14A\log AT}{(t_{e+1}-t_e)}} + 4T_M\sqrt{\frac{7\log(t_{e+1}-t_e)}{(t_{e+1}-t_e)}}\right)\nonumber\\
    &~~~+ \frac{CT_MS}{(1-\rho)(t_{e+1}-t_e)}\Big) -\delta - \Gamma \\
    &\le -\delta\label{eq:use_gamma_def_and_epoch_length_bound},
\end{align}
where Equation \eqref{eq:use_gamma_def_and_epoch_length_bound} is follows from the definition of $\Gamma$ in Assumption \ref{conservative_slaters_conditon} and $t_{e+1}-t_e\ge T^{1/3}$.
\end{proof}

From Lemma \ref{lem:psrl_loose_slater} we observe that for a tighter Slater condition on the true MDP, we can only guarantee a weaker Slater guarantee. However, we make that assumption to obtain the feasibility of the optimization problem in Equation \eqref{eq:psrl_optimization_equation}. 

The Bayesian regret of the \psrl\ algorithm is defined as follows:
\begin{align}
    \mathbb{E}[R(T)]& = \mathbb{E}\left[f\left(\lambda_{\pi^*}^P\right) - f\left(\sum\nolimits_{t=1}^Tr(s_t, a_t)/T\right)\right]\nonumber
\end{align}
Similarly, we define Bayesian constraint violations, $C(T)$, as the expected gap between the constraint function and incurred and constraint bounds, or
\begin{align}
    \mathbb{E}[C(T)]
    &= \mathbb{E}\left[\left(g\left(\sum\nolimits_{t=1}^Tc_1(s_t, a_t)/T, \cdots, \sum\nolimits_{t=1}^Tc_1(s_t, a_t)/T\right)\right)_+\right] \nonumber
\end{align}
where $(x)_+ = \max(0, x)$.

Now, we can use Lemma \ref{lem:posterior_sampling} to obtain $\mathbb{E}[f(\lambda_{\pi^*}^P)|\mathcal{F}_{t_e}] = \mathbb{E}[f(\lambda_{\pi_e}^{\Tilde{P}_e})|\mathcal{F}_{t_e}]$ and $\mathbb{E}[f(\zeta_{\pi^*}^P)(i)|\mathcal{F}_{t_e}] = \mathbb{E}[f(\zeta_{\pi_e}^{\Tilde{P}_e})(i)|\mathcal{F}_{t_e}]~\forall~i$,
and follow the analysis similar to the analysis of Theorem \ref{thm:regret_bound} to obtain the required regret bounds.

\subsection{Bound on constraints}

We now bound the constraint violations and prove that using a conservative policy. We can reduce the constraint violations to $0$. We have:
\begin{align}
    C(T) &= \left(g\left(\frac{1}{T}\sum_{t=1}^Tc_1(s_t, a_t),\cdots, \frac{1}{T}\sum_{t=1}^Tc_d(s_t, a_t)\right) \right)_+\\
         &= \Bigg(g\left(\frac{1}{T}\sum_{t=1}^Tc_1(s_t, a_t),\cdots, \frac{1}{T}\sum_{t=1}^Tc_d(s_t, a_t)\right) - \frac{1}{T}\sum_{e=1}^E T_e g\left(\zeta_{\pi_e}^{\Tilde{P}_e,1},\cdots, \zeta_{\pi_e}^{\Tilde{P}_e,K_2}\right) \nonumber\\
         &~~+ \frac{1}{T}\sum_{e=1}^E T_e g\left(\zeta_{\pi_e}^{\Tilde{P}_e,1},\cdots, \zeta_{\pi_e}^{\Tilde{P}_e,K_2}\right)\Bigg)_+\\
         &\le \left(g\left(\frac{1}{T}\sum_{t=1}^Tc_1(s_t, a_t),\cdots, \frac{1}{T}\sum_{t=1}^Tc_d(s_t, a_t)\right) - \frac{1}{T}\sum_{e=1}^E T_e g\left(\zeta_{\pi_e}^{\Tilde{P}_e,1},\cdots, \zeta_{\pi_e}^{\Tilde{P}_e,K_2}\right)  +C_1\right)_+\label{eq:conservation_policy}\\
         &\le \left(g\left(\frac{1}{T}\sum_{t=1}^Tc_1(s_t, a_t),\cdots, \frac{1}{T}\sum_{t=1}^Tc_d(s_t, a_t)\right) - g\left(\frac{1}{T}\sum_{e=1}^E T_e\zeta_{\pi_e}^{\Tilde{P}_e,1},\cdots, \frac{1}{T}\sum_{e=1}^E T_e\zeta_{\pi_e}^{\Tilde{P}_e,K_2}\right)  +C_1\right)_+\label{eq:convexity}\\
         &\le \left(L\sum_{k=1}^{K_2}\Big\vert\frac{1}{T}\sum_{e=1}^E\sum_{t=t_e}^{t_{e+1}-1}\left(c^k(s_t, a_t)  - \zeta_{\pi_e}^{\Tilde{P}_e,k} \right)\Big\vert  +C_1\right)_+\label{eq:constraint_use_Lipschitz}\\
         &\le \left(L\sum_{k=1}^{K_2}\Big\vert\frac{1}{T}\sum_{e=1}^E\sum_{t=t_e}^{t_{e+1}-1}\left(c^k(s_t, a_t) - \zeta_{\pi_e}^{P,k} + \zeta_{\pi_e}^{P,k} - \zeta_{\pi_e}^{\Tilde{P}_e,k} \right)\Big\vert  +C_1\right)_+\\
         &\le \left(\frac{L}{T}\sum_{k=1}^{K_2}\Big\vert\sum_{e=1}^E\sum_{t=t_e}^{t_{e+1}-1}\left(c^k(s_t, a_t) - \zeta_{\pi_e}^{P,k}\right)\Big\vert + \frac{L}{T}\sum_{k=1}^{K_2}\Big\vert\sum_{e=1}^E\sum_{t=t_e}^{t_{e+1}-1}\left(\zeta_{\pi_e}^{P,k} - \zeta_{\pi_e}^{\Tilde{P}_e,k} \right)\Big\vert  +C_1\right)_+\\
         &\le \left(C_3(T) + C_2(T)  + C_1(T)\right)_+
\end{align}
where Equation \eqref{eq:convexity} follows from the convexity of the function $g$, and Equation \eqref{eq:constraint_use_Lipschitz} follows from the Lipschtiz continuity.

We bound $C_2(T) + C_3(T)$ similar to the analysis of $R(T)$ by
\begin{align}
    \Tilde{\mathcal{O}}\left(T_MS\sqrt{\frac{A}{T}} + \frac{CT_MS^2A}{(1-\rho)T} \right)
\end{align}

We focus our attention on bounding $C_1(T)$. For this, note that in Assumption \ref{lipschitz_assumption} we assumed that the cost function $g$ is Lipschitz continuous and the gradients are bounded at all points. This implies for a bounded input domain the cost function is bounded. We assume that the upper bound of the cost is $g_\infty$. We now obtain the bound on $C_1(T)$ as:
\begin{align}
    C_1(T) &= \frac{1}{T}\sum_{e=1}^ET_e\left(g\left(\zeta_{\pi_e}^{\Tilde{P}_e,1},\cdots, \zeta_{\pi_e}^{\Tilde{P}_e,K_2}\right) \right)\\
        &= \frac{1}{T}\sum_{e=1}^ET_e\left(g\left(\zeta_{\pi_e}^{\Tilde{P}_e,1},\cdots, \zeta_{\pi_e}^{\Tilde{P}_e,K_2}\right) \right)\bm{1}\{T_e \ge T^{1/3}\}\nonumber\\
        &~~~+ \frac{1}{T}\sum_{e=1}^ET_e\left(g\left(\zeta_{\pi_e}^{\Tilde{P}_e,1},\cdots, \zeta_{\pi_e}^{\Tilde{P}_e,K_2}\right) \right)\bm{1}\{T_e < T^{1/3}\}\\
        &\le \frac{1}{T}\sum_{e=1}^ET_e\left(g\left(\zeta_{\pi_e}^{\Tilde{P}_e,1},\cdots, \zeta_{\pi_e}^{\Tilde{P}_e,K_2}\right) \right)\bm{1}\{T_e \ge T^{1/3}\} + \frac{1}{T}\sum_{e=1}^ET^{1/3}g_\infty\label{eq:max_value_of_convex_cost}\\
        &\le -\frac{1}{T}\sum_{e=1}^E T_e\epsilon_e\bm{1}\{T_e \ge T^{1/3}\} + \frac{1}{T}ET^{1/3}g_\infty\label{eq:use_conservative_policy}\\
        &= -\frac{1}{T}\left(\sum_{e=1}^E T_e\epsilon_e\left(1-\bm{1}\{T_e < T^{1/3}\}\right) + ET^{1/3}g_\infty\right)\\
        &= -\frac{1}{T}\left(\sum_{e=1}^E T_e\epsilon_e +\frac{1}{T}\sum_{e=1}^E T_e\epsilon_e\bm{1}\{T_e < T^{1/3}\} + \frac{1}{T}ET^{1/3}g_\infty\right)\\
        &\le -\frac{1}{T}\left(\frac{K}{4}\sqrt{T\log T} +\frac{1}{T}\sum_{e=1}^E T^{1/3}\kappa + \frac{1}{T}ET^{1/3}g_\infty\right)\label{eq:use_epsilon_sum_lower_bound}\\
        &= -\frac{K}{4T}\left(\sqrt{T\log T} +\frac{1}{T}E\kappa T^{1/3} +  \frac{1}{T}ET^{1/3}g_\infty\right)
\end{align}
where Equation \eqref{eq:max_value_of_convex_cost} follows from the bound on $g(\mathbf{x})\le g_\infty$. Equation \eqref{eq:use_conservative_policy} follows from following the conservative policy.

Thus, choosing an appropriate $K$, we can bound constraint violations by $0$.


\section{Further Discussions}
\subsection{Regarding Ergodicity in Assumption \ref{ergodic_assumption}}

Regarding the assumption on ergodicity in Assumption \ref{ergodic_assumption}, we make two observations:

For MDPs with constraints, we note that for optimal policy to be stationary, the MDP has to be ergodic. For finite diameter MDPs, the optimal policy can be non-stationary. Consider an MDP with three states, {left, middle, right}, and two actions {left, right}. The left action keeps the state unchanged from left, or takes the agent left from middle and to middle from right. Similarly, the right action takes the agent to middle from left and to right from middle, or keep the agent to right. Since there exists different recurrent classes for different policies (For a policy which takes only left action, the recurrent class contains only left state. For policy which takes only right action, the recurrent class contains only right state), the MDP is non-ergodic.
Further, the agent obtains a reward of +1 and cost of 0 on taking left action in left state and reward of 0 and cost of +1 on taking right action in the right state. A stationary policy provides an average reward of $(1/6, 1/6)$ as the agent stays in all three states with equal probability and takes either actions with equal probability in each state. Whereas, if the agent follows a a non-stationary optimal policy, the agent optimizes both rewards and cost with average reward of $(1/2, 1/2)$. Hence, the agent must stay in state left as much as the state right by only making minimal transitions via the middle state. Thus, the optimal policy is non-stationary for non-ergodic MDPs. This example is provided in detail by \citet{cheung2019regret}. 

The second observation is for finite diameter MDPs, where \citet{chen2022learning} provided an algorithm which requires the knowledge of the time horizon $T$ and the span of the costs $sp_c$. We note that the two variables might not be known to the agent in advance. Further, the knowledge of the time horizon is required to divide the time horizons to epochs of duration $O(T^{1/3})$ to obtain a regret bound of $O(T^{2/3})$. This particular epoch length is required to bound the bias-span of the MDP considered in the epoch. Finally, we note that the finite mixing time is also assumed in other works in constrained IH MDP \cite{singh2020learning}.

We note that even if we use other exploration strategies, we will require the Bellman error analysis to analyze the stochastic policies. In this work, we perform an exploration and exploitation strategy by dividing the time horizon into epochs and then updating the policy in each epoch using the MDP model built using exploration done in previous epochs. The analysis of the regret will still need the impact of stochastic policies and thus the analysis approaching the paper will still be needed for any exploration strategy.

We also note that since the MDP is ergodic, exploration can be done with any policy and the agent does not need an optimistic MDP to explore. However, the agent wants to minimize the regret for the online algorithm, and hence it plays the optimal policy based on the MDP estimated/learned till time $t$. To do so the agent finds the optimistic policy or the policy which provides the highest possible reward in the confidence interval. Note that if agent agent could have played any policy and obtained same regret bound, a policy worse than the true MDP can also exist in the confidence interval and that would not give the same performance. In the following, we provide a simplified problem setup and algorithm to demonstrate that some policy may achieve large regret bound even with ergodic assumption.

Consider a simplified problem setup where $f(\lambda_\pi^P)= \lambda_\pi^P$ with no constraints. Note that this is the classical RL setup. Also consider an algorithm where the agent uses the estimated MDP without considering the confidence intervals. After every epoch, the agent solves for the optimal policy using the following optimization equation.

\begin{align}
        \max_{\rho(s,a)} \sum\nolimits_{s,a}r(s, a)\rho(s,a) \label{eq:simple_optimization_equation}
\end{align}
with the following set of constraints,
\begin{align}
    &\sum\nolimits_{s,a} \rho(s, a) = 1,\ \  \rho(s, a) \geq 0 \label{eq:valid_prob_constrant_simple}\\
    &\sum\nolimits_{a\in\mathcal{A}}\rho(s',a) = \sum\nolimits_{s,a}\hat{P}_e(s'|s, a)\rho(s, a)\label{eq:transition_constraint_simple}
\end{align}
where $\hat{P}_e(\cdot|s,a)$ is the estimate for transition probability to next state given state action pair $(s,a)$ after epoch $e$. Let $\pi_e$ be the solution for optimization problem in Equation \eqref{eq:simple_optimization_equation}-\eqref{eq:transition_constraint_simple} for epoch $e$.

Now the regret $R(T)$, till time horizon $T$, is defined as 
\begin{align}
    R(T) &= T\lambda_{\pi^*}^P - \sum_{t=1}^Tr(s_t,a_t)\\
         &= \sum_{e}\sum_{t=t_e}^{t_{e+1}-1}\lambda_{\pi^*}^P - \sum_{e}\sum_{t=t_e}^{t_{e+1}-1}r(s_t,a_t)\\
         &= \sum_{e}\sum_{t=t_e}^{t_{e+1}-1}\lambda_{\pi^*}^P  \pm \sum_{e}\sum_{t=t_e}^{t_{e+1}-1}\lambda_{\pi_e}^P - \sum_{e}\sum_{t=t_e}^{t_{e+1}-1}r(s_t,a_t)\\
         &= \left(\sum_{e}\sum_{t=t_e}^{t_{e+1}-1}\lambda_{\pi^*}^P  - \sum_{e}\sum_{t=t_e}^{t_{e+1}-1}\lambda_{\pi_e}^P\right) + \left(\sum_{e}\sum_{t=t_e}^{t_{e+1}-1}\lambda_{\pi_e}^P - \sum_{e}\sum_{t=t_e}^{t_{e+1}-1}r(s_t,a_t)\right)\\
         &= R_1(T) + R_2(T)
\end{align}
$R_2(T)$ is analysed in similarly to the regret analysis of the proposed UC-CURL algorithm. For $R_2(T)$, we obtain the following analysis:
\begin{align}
    R_1(T)  &= \sum_{e}\sum_{t=t_e}^{t_{e+1}-1}\left(\lambda_{\pi^*}^P  - \lambda_{\pi_e}^P\right)\\
            &= \sum_{e}\sum_{t=t_e}^{t_{e+1}-1}\left(\left(\lambda_{\pi^*}^P - \lambda_{\pi_e}^{\hat{P}_e}\right) + \left(\lambda_{\pi_e}^{\hat{P}_e} - \lambda_{\pi_e}^P\right)\right)\label{eq:simple_regret_breakdown}
\end{align}
Now, the second term, $\lambda_{\pi_e}^{\hat{P}_e} - \lambda_{\pi_e}^P$, in Equation \eqref{eq:simple_regret_breakdown} can be again analyzed using the Bellman error based analysis. We are primarily interested in the first term.
Now, note that since the agent does not play the optimistic policy, $R_1(T)$ cannot be upper bounded by the optimal policy of the optimistic MDP in the confidence interval. For this, the average reward $\lambda_{\Tilde{\pi}_e}^{\Tilde{P}_e}$ satisfies $\lambda_{\Tilde{\pi}_e}^{\Tilde{P}_e} \ge \lambda_{\pi^*}^{P}$. For a posterior sampling algorithm, the optimal policy for the sampled MDP satisfies $\mathbb{E}[\lambda_{\Tilde{\pi}_e}^{\Tilde{P}_e}] = \mathbb{E}[\lambda_{\pi^*}^{P}]$. This two properties for the optimistic or posterior sampling algorithm contributes to the key parts in the analysis and design of the RL algorithms.

However, no such relationship can be established for $\lambda_{\pi^*}^P$, and $\lambda_{\pi_e}^{\hat{P}_e}$ and hence the first term of Equation \eqref{eq:simple_regret_breakdown} is not trivially upper bounded by $0$. Further, for the optimal policy $\pi_e$ for the estimated MDP $\hat{P}_e$ can return a reward lower than the optimal policy $\pi^*$ on the true MDP $P$ or $\lambda_{\pi_e}^{\hat{P}_e} < \lambda_{\pi^*}^P$ resulting in a trivial $O(T)$ regret bound.

\subsection{Regarding Optimality}

We note the work of \cite{singh2020learning} provided a lower bound of $\sqrt{DSAT}$, where $D$ is the diameter of the MDP, $A$, $S$ are the number of actions and states respectively and $T$ is the time horizon for which the algorithm runs. Based on the lower bound, the regret results presented in this work are optimal in $A$ and $T$. However, to obtain a tighter dependence on $S$ using tighter concentration inequalities for stochastic policies remain an open problem. Further, we note that  reducing the dependence of $T_M$ to the diameter $D$ while also keeping the regret order in $T$ as $\Tilde{O}(\sqrt{T})$ is an open problem.

\section{Experiments with Fairness Utility and Constraints}

We also evaluate the proposed algorithm on non-linear setup. We consider a scheduler allocating resources to two clients, $client_1, client_2$. At each time step, the scheduler allocates a resource to either of the clients. Hence, $\{client_1, client_2\}$ are $2$ actions available to the scheduler. The client, on resource allocation, consumes the resource and obtains a reward. The reward depends on the state of the client. Each client can be in $4$ possible states. Hence, there are $16$ total possible system states. At every step a client stays in the same state with probability $0.625$ and transitions to a different state with probability $0.125$ of landing in any of the remaining $3$ states.

The agent aims to maximize the proportional fairness among the two clients \cite{lan2010axiomatic}. The proportional fairness is used to quantitatively evaluate fairness in various networks scheduling systems such as wireless scheduling \cite{cui2019spatial} and queuing \cite{wierman2011fairness}. We calculate proportional fairness as:

\begin{align}
    \sum_{i=\{1,2\}} \log \left(\sum_{s,a}\rho(s,a)r_i(s,a)\right)
\end{align}
where $i$ denotes the client index, $r_i(s,a)$ is the reward received by the client $i$ when the system is in state $s$ and takes action $a$, and $\rho(s,a)$ is the steady-state state-action distribution. The rewards are $r_i$ with respect to client state is presented in Table \ref{table:rewards}

\begin{table*}[ht]   
		\caption{Transition probability of the queue system}  
		\label{table:rewards}
		\begin{center}  
			\begin{tabular}{|c|c|c|c|c|}  
				\hline  
				Client & Client State $1$ & Client State $2$ & Client State $3$ & Client State $4$ \\ \hline
				$client_1$ & $0.75$ & $0.375$ &$0.5$ & $0.375$\\ \hline
				$client_2$ & $0.25$ & $0.5$ &$0.75$ & $1.0$\\ \hline
			\end{tabular}  
		\end{center}  
\end{table*}

Further, the first client is a high priority client and requires a minimum service level agreement (SLA) guarantee. Every time $client_1$ is denied the resource, the scheduler incurs a penalty of $-1$. Let $C$ denote the SLA guarantee for $client_1$. Then the cost constraint can be written as:
\begin{align}
    -\sum_{s}\rho(s, a)\bm{1}_{\{a = client_2\}} \ge C
\end{align}
where $C$ is set to $-0.3$.

We evaluate the \psrl\ and \NAM\ algorithms on the scheduling system with $K = 1$. We evaluate both \psrl\ and \NAM\ algorithms for linear epochs and doubling epochs. We run $10$ independent iterations of the algorithm for $T = 500000$ time steps. The mean values of the simulation results for the constrained setup are presented in Figure \ref{fig:constrained_scheduling_system}. The scheduler take about $100,000$ steps to converge at the optimal fairness value (Figure 2a) for \psrl\ algorithm where as the optimistic setup does not converges till $500,000$ steps. This is inline with the results of Section \ref{app:sim_details} where the posterior sampling algorithm converges the fastest. Further, note that optimistic algorithm is conservative with respect to constraints but it does not optimises for the fairness to satisfy the constraints.

We also present the system behavior in absence of constraints in Figure \ref{fig:scheduling_system_rewards} and Figure \ref{fig:scheduling_system_constraint}. For the unconstrained setup, we only evaluate the linear epoch  setup of the \psrl\ algorithm which converges to the optimal fairness value at around $t=50,000$ which is faster than the constrained setup. We also observe that the optimal fairness among the clients is higher when the scheduler is not required to guarantee any service level agreements.

\begin{figure}[t]
    \centering
     \begin{subfigure}[b]{0.45\textwidth}
         \centering
         \includegraphics[width=\textwidth]{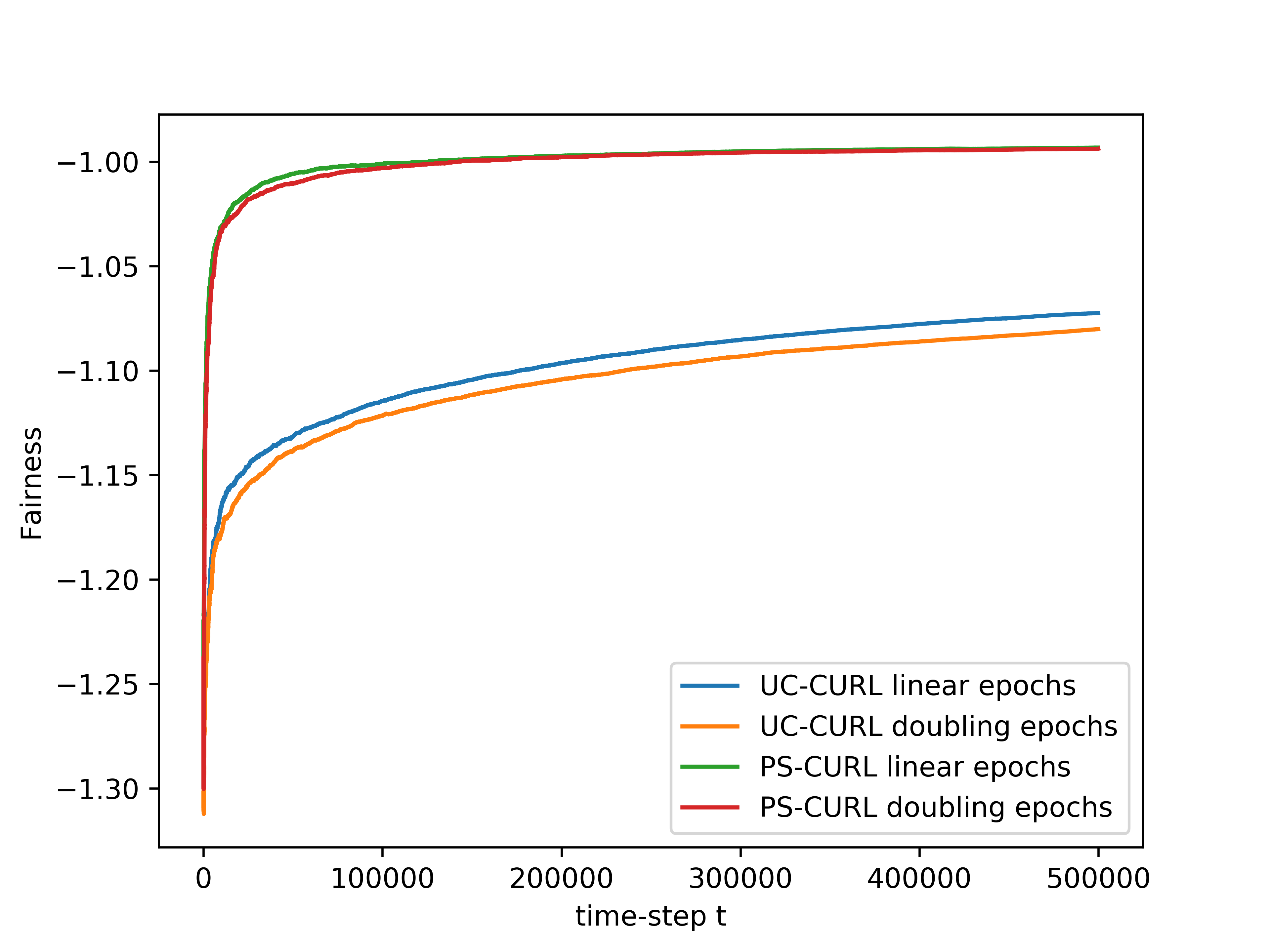}
         \caption{System fairness \textit{w.r.t.} time}
         \label{fig:constrained_fairness}
     \end{subfigure}
     \hfill
     \begin{subfigure}[b]{0.45\textwidth}
         \centering
         \includegraphics[width=\textwidth]{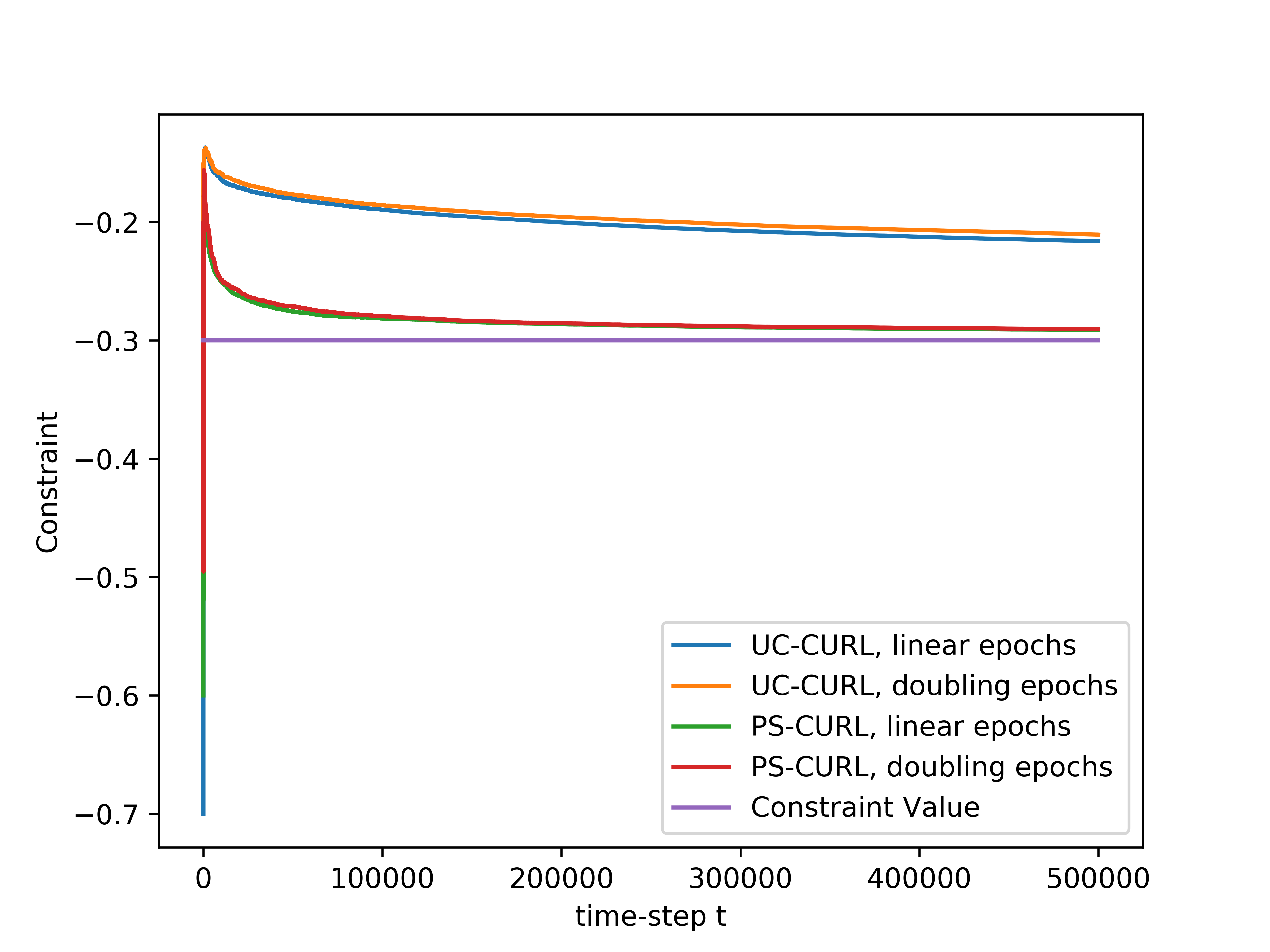}
         \caption{SLA penalty \textit{w.r.t.} time}
         \label{fig:constrained_constraints}
     \end{subfigure}         
     \hfill
     \begin{subfigure}[b]{0.45\textwidth}
         \centering
         \includegraphics[width=\textwidth]{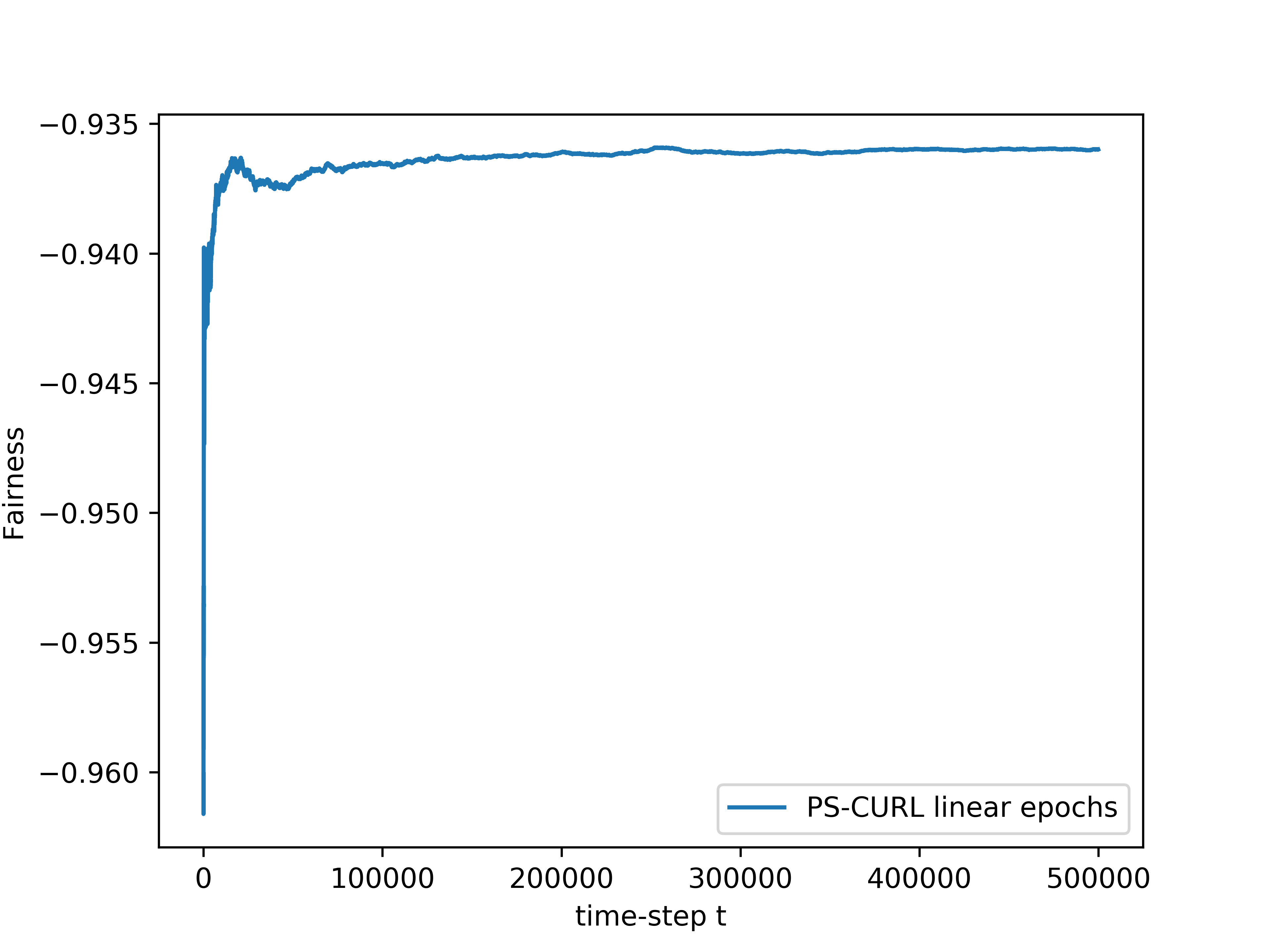}
         \caption{System fairness \textit{w.r.t.} time for unconstrained setup}
         \label{fig:scheduling_system_rewards}
     \end{subfigure}
     \hfill
     \begin{subfigure}[b]{0.45\textwidth}
         \centering
         \includegraphics[width=\textwidth]{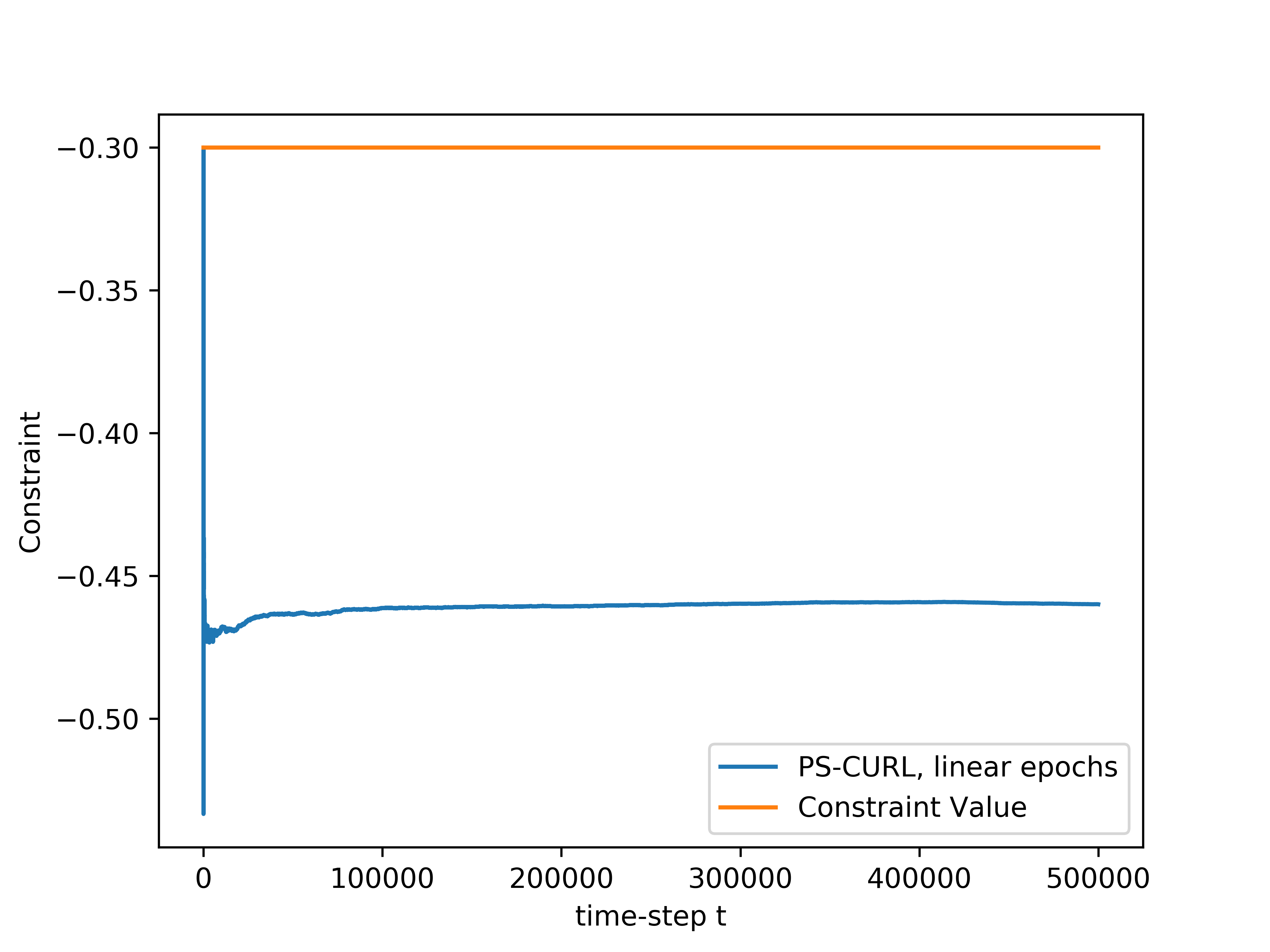}
         \caption{SLA penalty \textit{w.r.t.} time for unconstrained setup}
         \label{fig:scheduling_system_constraint}
     \end{subfigure}
    \caption{Performance of the proposed \NAM\ and \psrl\ algorithms on the scheduling example}.
    \label{fig:constrained_scheduling_system}
\end{figure}

Again, from the experimental evaluations, we observe that the proposed \psrl\ and \NAM\ algorithms can be used for systems with non-linear utilities and/or non-linear constraints. 

\color{black}

\end{document}